\def\eqref#1{equation~\ref{#1}}
\def\1{\bm{1}}
\DeclareMathAlphabet{\mathsfit}{\encodingdefault}{\sfdefault}{m}{sl}
\SetMathAlphabet{\mathsfit}{bold}{\encodingdefault}{\sfdefault}{bx}{n}
\newcommand{\Var}{\mathrm{Var}}
\newcommand{\cmark}{\ding{51}} 
\newcommand{\xmark}{\ding{55}} 
\useunder{\uline}{\ul}{}
\newtheorem{theorem}{Theorem}
\title{\method: Learning Hierarchical Structures for High-Dimensional Time Series Forecasting}
\author{%
Juntong Ni\thanks{Equal contribution}, 
\ Shiyu Wang\footnotemark[1]\hspace{0.4em}\thanks{Project lead}
,
Zewen Liu\textsuperscript{},
Xiaoming Shi\textsuperscript{},
Xinyue Zhong\textsuperscript{},
Zhou Ye\textsuperscript{},
Wei Jin\textsuperscript{}\\
\textsuperscript{}{Emory University}\\
\texttt{\{juntong.ni, zewen.liu, wei.jin\}@emory.edu, kwuking@gmail.com}}
\begin{document}
\newcommand{\method}{\textsc{U-Cast}}
\newcommand{\dataset}{\textsc{Time-HD}}

\maketitle

\begin{abstract}
Time series forecasting (TSF) is a central problem in time series analysis. However, as the number of channels in time series datasets scales to the thousands or more, a scenario we define as \textbf{High-Dimensional Time Series Forecasting (HDTSF)}, it introduces significant new modeling challenges that are often not the primary focus of traditional TSF research. HDTSF is challenging because the channel correlation often forms complex and hierarchical patterns. Existing TSF models either ignore these interactions or fail to scale as dimensionality grows. To address this issue, we propose \method{}, a channel-dependent forecasting architecture that learns latent hierarchical channel structures with an innovative query-based attention. To disentangle highly correlated channel representation, \method{} adds a full-rank regularization during training. We also release \dataset{}, the first benchmark of large, diverse, high-dimensional datasets. Our theory shows that exploiting cross-channel information lowers forecasting risk, and experiments on \dataset{} demonstrate that \method{} surpasses strong baselines in both accuracy and efficiency. Together, \method{} and \dataset{} provide a solid basis for future HDTSF research. Our \href{https://github.com/UnifiedTSAI/Time-HD-Lib}{code} and \href{https://huggingface.co/datasets/Time-HD-Anonymous/High_Dimensional_Time_Series}{benchmark} are available to ensure reproducibility.  

\end{abstract}

\section{Introduction}
\label{sec:Introduction}



\begin{wraptable}{r}{0.3\textwidth}
\centering
\vskip-2.2em
\caption{Comparison of existing datasets by channel scale.}
\resizebox{0.28\textwidth}{!}{%
\begin{tabular}{lc}
\toprule
\textbf{Datasets} & \textbf{\#Channels} \\
\midrule
ETT        & 7 \\
Weather    & 21 \\
Solar      & 137 \\
ECL      & 321 \\
Traffic      & 862 \\
\midrule
\textbf{\dataset{}} & \textbf{1k--20k} \\
\bottomrule
\end{tabular}
}
\label{tab:comparison_channel_scale}
\vskip-1.5em
\end{wraptable}


Time series data involving multiple variables (a.k.a. channels) that change over time are fundamental to numerous real-world applications. 
Consequently, time series forecasting (TSF) has become a central challenge in the time series analysis community, aiming to predict future values based on historical observations~\citep{chatfield2000time, de200625}.  Recent advances in machine learning have catalyzed significant methodological progress in this field~\citep{wang2024deep, lim2021time}, leading to a variety of novel architectures~\citep{autoformer, fedformer, informer, itransformer, nbeats, patchtst, timemixer, timesnet, timedistill}.  



Despite this progress, most existing TSF datasets, as shown in Table~\ref{tab:comparison_channel_scale} and Table~\ref{tab:comparison}, remain relatively low-dimensional, with only a handful of channels (e.g., ETT, Weather) to at most a few hundred (e.g., Traffic, ECL). 
However, in many real-world applications, \textbf{the number of channels can easily scale into the thousands or more}. 
For instance, financial markets involve thousands of stocks~\citep{granger2014forecasting}, 
urban traffic systems rely on tens of thousands of sensors~\citep{yin2021deep}, 
smart grids generate massive streams from large-scale meters~\citep{wan2015photovoltaic}, 
and climate reanalysis contains hundreds of thousands of variables defined on global spatial grids~\citep{wu2023interpretable}.
We define this scenario as \textbf{High-Dimensional Time Series Forecasting (HDTSF)}, where the scale of channels introduces significant new modeling challenges that are often not the primary focus of traditional TSF research.

Two difficulties are especially pressing. First, scaling to thousands of channels with complicated dependencies makes many channel-dependent models, such as iTransformer~\citep{itransformer} and TSMixer~\citep{tsmixer}, computationally inefficient or unreliable. Second, high dimensionality often gives rise to \textbf{latent hierarchical structures} among the channels. These structures are common in large real-world systems {(Appendix~\ref{app:Empirical Evidence of Hierarchical Structures}), where channels exhibit implicit groupings and multi-level correlations based on factors such as spatial proximity (e.g., nested geographical regions in climate data) or semantic relationships (e.g., stocks within related economic sub-sectors).  While such multi-scale correlations are pervasive, most existing models are not designed to explicitly discover or leverage them, which limits generalizability in high-dimensional contexts.
This motivates two central research questions: (1) How can we build \textit{forecasting architectures that explicitly capture latent hierarchical structures} to scale across thousands of channels? (2) How can we systematically \textit{evaluate such} models when existing benchmarks rarely exceed a few hundred channels?

To address the first question, we propose \textbf{\method{}}, a scalable forecasting architecture that introduces a Hierarchical Latent Query Network and a Hierarchical Upsampling Network to efficiently uncover latent multi-scale channel organizations. To disentangle highly correlated inputs, \method{} also employs a full-rank regularization objective that encourages diverse, non-redundant channel representations.  Answering the second question requires a \textit{new empirical foundation}. Existing TSF benchmarks contain only limited channels and may not possess the scale or structural richness representative of high-dimensional applications, making them inadequate for evaluating scalability and advanced dependency modeling. To this end, we curate \textbf{\dataset{}}, the first comprehensive benchmark suite for HDTSF. {\dataset{}} spans 16 datasets with 1k–20k channels across diverse domains, providing a much-needed testbed for evaluating scalability and dependency modeling. While this paper does not claim to exhaustively analyze all benchmarking results, we release \dataset{} as a community resource to facilitate systematic study of HDTSF in future work. Our experiments on \dataset{} validate \method{}'s efficacy in learning hierarchical structures and achieving superior forecasting accuracy in these demanding high-dimensional settings. Our main contributions are summarized as:

\textbf{C1: Theoretical and empirical analysis of high-dimensional channel dependency.} We prove that channel-dependent (CD) models have lower Bayes risk than channel-independent (CI) models whenever non-redundant channels exist, with the advantage increasing in higher dimensions (Section~\ref{sec:Preliminary Study}). Our controlled experiments on synthetic and real-world datasets further corroborate these findings.


\textbf{C2: \method{}: A scalable architecture for HDTSF.} We propose \textsc{\method{}} (Section~\ref{sec:UCast Framework}), a new CD forecasting model that introduces an innovative query-based attention mechanism to efficiently learn latent hierarchical channel structures and enables scalable modeling of inter-channel correlations. \method{} combines strong performance across \dataset{} with the best efficiency among baselines, making it a strong reference design for future HDTSF models.

\textbf{C3: \dataset{}: the first benchmark suite for HDTSF.} We release \textsc{\dataset{}} (Section~\ref{sec:High-Dimensional Time Series Forecasting Benchmark}), the first comprehensive benchmark suite curated for HDTSF. \dataset{} spans 16 datasets (1k–20k channels) across diverse domains, providing the necessary foundation for evaluating scalability and dependency modeling in realistic high-dimensional contexts. While not intended as an exhaustive benchmarking study, \dataset{} establishes a standardized and challenging testbed that enables rigorous evaluation and comparison of future models. 


\textbf{C4: Open-source library for reproducibility.} We provide \textsc{\dataset{}-lib}, an open-source library supporting the benchmark. It offers standardized preprocessing, unified evaluation protocols, and automated hyperparameter optimization. This toolkit ensures reproducibility of our results and lowers the barrier for future work on HDTSF.

\section{Related Work}
\textbf{High-Dimensional Time Series Datasets.}
Some high-dimensional datasets have already been used in recent work on building time series foundation models~\citep{moirai, chronos, time-moe, moment}. These studies primarily focus on leveraging high-dimensional datasets for pretraining. However, several limitations exist in how they handle such datasets. \textbf{First}, during pretraining, channels are treated independently, capturing only temporal dependencies without modeling inter-channel correlations. \textbf{Second}, despite pretraining on high-dimensional datasets, fine-tuning and evaluation are still performed on low-dimensional ones. \textbf{Third}, some pretraining datasets have unaligned timestamps, complicating their use in evaluation. These limitations highlight the importance of developing high-dimensional time series datasets specifically tailored for benchmarking and evaluation in time series forecasting research. Table~\ref{tab:comparison} in Appendix~\ref{app:Comparison with Existing Works} provides a more detailed summarization of the differences between \dataset{} and existing works.

\textbf{Multivariate Time Series Forecasting.}
Previous methods for modeling multivariate time series can be broadly categorized into  \textbf{channel-independent (CI)} and \textbf{channel-dependent (CD)} strategies.
The CI modeling strategy employs a shared model backbone across all channels and processes each channel independently during the forward pass~\citep{patchtst,dlinear,fits,timemixer,pattn,pdf,sparsetsf,li2023extreme}. This design typically results in lower model capacity but offers greater robustness.
In contrast, the CD modeling strategy introduces dedicated modules to capture inter-channel dependencies. Based on the granularity at which these correlations are modeled, CD can be further divided into position-wise\citep{tsmixer,timesnet,autoformer,informer,fedformer,crossformer}, token-wise\citep{itransformer,timexer}, and cluster-wise~\citep{ccm,duet,li2025mc} approaches. 
These methods are expected to provide higher capacity and make better use of cross-channel information. 
However, due to the constraints of existing low-dimensional time series forecasting benchmarks, most methods have not been validated at scale.

\section{Preliminary Study}
\label{sec:Preliminary Study}
Prior studies on existing time series benchmarks, which are often of low dimensionality, do not consistently show a clear advantage of CD over CI models~\citep{zhao2024rethinking,patchtst,dlinear,pdf}. This observation might lead to skepticism regarding the practical benefits of explicitly modeling inter-channel correlations. However, we hypothesize that the limited dimensionality of these benchmarks inherently restricts the potential gains from CD approaches. This preliminary study aims to theoretically and empirically investigate the impact of data dimensionality on the forecasting performance achievable by leveraging channel correlations. We begin with a theoretical analysis of risk reduction.


\textbf{Task Formulation.}
For time series forecasting, given an input time series \( \mathbf{X} \in \mathbb{R}^{C \times T} \), where \( T \) represents the length of the look-back window and \( C \) represents the number of variables, the goal is to predict the future \( S \) time steps \( \mathbf{Y} \in \mathbb{R}^{C \times S} \). 

\subsection{Theoretical Analysis of Risk Reduction from CD Models}

To formally analyze the benefits of CD, we consider a simplified setting. Assume a \textbf{bivariate} time series ($C=2$) generated by a Vector Autoregression {of order 1} (VAR(1))~\citep{stock2001vector}:
\begin{equation}\label{eq:var_system}
  z_{t+1}=\mathbf{A} z_t+\varepsilon_{t+1},\quad
  \mathbf{A}=\begin{pmatrix}a_{11}&a_{12}\\ a_{21}&a_{22}\end{pmatrix},\quad
  \varepsilon_t\stackrel{i.i.d.}{\sim}\mathcal N(\mathbf0,\Sigma),\quad
  \;\mathbf{\Sigma}=\begin{pmatrix}\sigma_{11}&0\\0&\sigma_{22}\end{pmatrix}.
\end{equation}
Here, \( z_t = ( z_t^{(1)}, z_t^{(2)})^\top  \) is the bivariate time series value at time \( t \), with $z_0 \sim \mathcal{N}(\mathbf{0}, \mathbf{I})$, where \( \mathbf{I} \) is the identity matrix. The matrix \( \mathbf{A} \) captures the linear dependencies, with coefficients $a_{ij}$ describing self-dependencies ($i=j$) and cross-dependencies ($i\neq j$).  The noise \( \varepsilon_t \) is zero-mean Gaussian with independent components. Our goal is to predict \( z_{t+1} \) given \( z_{t} \). All time series are assumed to be normalized to have zero mean.  We compare two modeling strategies under squared-error loss: \textit{(1) CI Modeling:} Each channel $z_{t+1}^{(i)}$ is predicted using only its own past, $z_t^{(i)}$. The Bayes optimal forecast for $z_{t+1}^{(i)}$ is $\mathbb{E}[z_{t+1}^{(i)}|z_t^{(i)}]$ and the total risk is $R_{\text{CI}} = \sum_{i=1}^2 \mathbb{E}[(z_{t+1}^{(i)} - \mathbb{E}(z_{t+1}^{(i)}|z_{t}^{(i)})^2]$. \textit{(2) CD Modeling:} Each channel $z_{t+1}^{(i)}$ is predicted using the past of all channels. The Bayes optimal forecast for $z_{t+1}^{(i)}$ is $\mathbb{E}[z_{t+1}^{(i)}|z_t]$ and the total risk is $R_{\text{CD}} = \sum_{i=1}^2 \mathbb{E}[(z_{t+1}^{(i)} - E(z_{t+1}^{(i)}|z_{t})^2]$.

\begin{theorem}[Risk Reduction from CD]
\label{thm:risk_reduction} 
For the time series described in Eq.~\eqref{eq:var_system}, the Bayes risks of CI and CD models under squared-error loss satisfy: $R_{\text{CI}} - R_{\text{CD}} \ge 0$.
This inequality is strict if and only if $a_{12} \ne 0$ and $a_{21} \ne 0$, and the conditional variances $\Var(z_t^{(1)} \mid z_t^{(2)})$ (representing the variance of $z_t^{(1)}$ that remains after $z_t^{(2)}$ is known) and $\Var(z_t^{(2)} \mid z_t^{(1)})$ are positive.
\end{theorem}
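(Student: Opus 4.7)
My strategy is to exploit the joint Gaussianity of $(z_t, z_{t+1})$, which follows from the Gaussian initial condition $z_0 \sim \mathcal{N}(\mathbf{0}, \mathbf{I})$ and the linear-Gaussian recursion in Eq.~\eqref{eq:var_system}. Under joint Gaussianity, every conditional expectation is an affine function of its conditioning variable, and in particular the Bayes-optimal forecast coincides with the minimum mean-squared-error linear predictor. The argument then reduces to computing the two residual variances per channel and comparing them term by term.

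For the CD risk, since $\varepsilon^{(i)}_{t+1}$ is independent of $z_t$, I obtain $\E[z^{(i)}_{t+1} \mid z_t] = a_{i1} z_t^{(1)} + a_{i2} z_t^{(2)}$, so the residual is exactly $\varepsilon^{(i)}_{t+1}$ with variance $\sigma_{ii}$; summing over $i \in \{1,2\}$ gives $R_{\text{CD}} = \sigma_{11} + \sigma_{22}$. For the CI risk, I would write, for $i \neq j$,
\[
z^{(i)}_{t+1} - \E[z^{(i)}_{t+1} \mid z_t^{(i)}] \;=\; a_{ij}\bigl(z_t^{(j)} - \E[z_t^{(j)} \mid z_t^{(i)}]\bigr) + \varepsilon^{(i)}_{t+1},
\]
using that $a_{ii} z_t^{(i)}$ is $z_t^{(i)}$-measurable and $\E[\varepsilon^{(i)}_{t+1} \mid z_t^{(i)}] = 0$. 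Because $\varepsilon^{(i)}_{t+1}$ is independent of $z_t$, the two summands on the right are uncorrelated, so their variances add, yielding $R_{\text{CI},i} = a_{ij}^2 \Var(z_t^{(j)} \mid z_t^{(i)}) + \sigma_{ii}$.

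Combining the two bookkeeping steps gives
\[
R_{\text{CI}} - R_{\text{CD}} \;=\; a_{12}^2 \Var(z_t^{(2)} \mid z_t^{(1)}) + a_{21}^2 \Var(z_t^{(1)} \mid z_t^{(2)}),
\]
which is manifestly non-negative. Under the stated assumptions that both off-diagonal entries of $\mathbf{A}$ are non-zero and that both conditional variances are strictly positive, each summand is strictly positive and the inequality is strict. The only subtlety will be to justify cleanly that $\varepsilon^{(i)}_{t+1}$ is orthogonal to the residual $z_t^{(j)} - \E[z_t^{(j)} \mid z_t^{(i)}]$; this follows immediately from the independence of $\varepsilon^{(i)}_{t+1}$ and $z_t$, but it is the single place where the linear-Gaussian structure is genuinely used. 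I anticipate no deeper obstacle.
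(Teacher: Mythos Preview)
Your proposal is correct and follows essentially the same route as the paper: compute $R_{\text{CD},i}=\sigma_{ii}$ directly, decompose the CI residual as $a_{ij}\bigl(z_t^{(j)}-\E[z_t^{(j)}\mid z_t^{(i)}]\bigr)+\varepsilon^{(i)}_{t+1}$, use independence of $\varepsilon^{(i)}_{t+1}$ from $z_t$ to add variances, and read off $R_{\text{CI}}-R_{\text{CD}}=a_{12}^2\Var(z_t^{(2)}\mid z_t^{(1)})+a_{21}^2\Var(z_t^{(1)}\mid z_t^{(2)})\ge 0$. Your explicit invocation of joint Gaussianity (so that conditional variances are constants and conditional means are affine) is, if anything, slightly cleaner than the paper's presentation, but the argument is identical.
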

We provide the proof of this theorem in Appendix~\ref{app:Theorem Proof1}. 
Theorem~\ref{thm:risk_reduction} establishes that CD modeling is at least as good as CI modeling. A strict advantage for CD arises if: (1) there is genuine predictive information flowing between channels (non-zero $a_{12}$ or $a_{21}$ means one channel directly influences the future of the other), and (2) each channel contains some unique information not present in the other (positive conditional variances). If channels are truly independent in their evolution ($a_{12}=a_{21}=0$) or one is a deterministic function of the other, then $R_\text{CI} =R_\text{CD}$.


To investigate the impact of increasing dimensionality, we now consider a general $P$-channel VAR(1) process:   $z_{t+1}=\mathbf{A} z_t+\varepsilon_{t+1}$, where $z_t\in\mathbb{R}^P$. 
We focus on forecasting a single target channel, say $Y = z_{t+1}^{(1)}$, which is given by $z_{t+1}^{(1)} = \sum_{j=1}^P a_{1j} z_t^{(j)} + \varepsilon_{t+1}^{(1)}$, with $\Var(\varepsilon_{t+1}^{(1)}) = \sigma_{11}$.
Let $R_p$ denote the Bayes risk (minimum MSE) for forecasting $Y = z_{t+1}^{(1)}$ using the information from the first $p$ channels at time $t$, i.e., $\{z_t^{(1)}, \dots, z_t^{(p)}\}$.


\begin{theorem}[Bayes Risk Monotonicity with Increasing Channel Information]
\label{thm:monotonicity}
For forecasting the target channel $z_{t+1}^{(1)}$, the Bayes risks obtained by conditioning on an increasing number of predictor channels $\{z_t^{(1)}, \dots, z_t^{(p)}\}$ exhibit monotonicity: $R_1 \ge R_2 \ge \dots \ge R_P = \sigma_{11}$. Note that  $R_1$ is the risk of CI Modeling (using only $z_t^{(1)}$ to predict $z_{t+1}^{(1)}$), and $R_p (p>1)$ is the risk of CD Modeling.
Furthermore, the risk difference between CI and CD, $\Delta_p = R_1 - R_p$, is non-decreasing in $p$:
\[
  0 = \Delta_1 \le \Delta_2 \le \dots \le \Delta_P = R_1 - \sigma_{11},
\]
This gap strictly increases (i.e., $\Delta_p > \Delta_{p-1}$) if and only if the $p$-th channel, $z_t^{(p)}$, provides new information for predicting $Y$ that is not already contained in channels $z_t^{(1)}, \dots, z_t^{(p-1)}$.
\end{theorem}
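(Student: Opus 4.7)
The plan is to cast each $R_p$ as the $L^2$ mean-squared error of the Bayes predictor $\hat Y_p = \E[Y \mid z_t^{(1)}, \dots, z_t^{(p)}]$ of $Y = z_{t+1}^{(1)}$, and then exploit the fact that the information sets $\mathcal F_p = \sigma(z_t^{(1)}, \dots, z_t^{(p)})$ form an increasing filtration $\mathcal F_1 \subseteq \mathcal F_2 \subseteq \cdots \subseteq \mathcal F_P$. The weak monotonicity $R_1 \ge R_2 \ge \cdots \ge R_P$ will follow from a standard conditional-expectation argument, the endpoint $R_P = \sigma_{11}$ from a direct substitution into the VAR(1) model, and the strict-increase characterization from the explicit Gaussian projection formula for the marginal risk reduction.

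First, by the tower property, $\hat Y_p = \E[\hat Y_{p+1} \mid \mathcal F_p]$, so $\hat Y_{p+1} - \hat Y_p$ is orthogonal in $L^2$ to every $\mathcal F_p$-measurable random variable. A Pythagorean decomposition of $Y - \hat Y_p = (Y - \hat Y_{p+1}) + (\hat Y_{p+1} - \hat Y_p)$ yields $R_p - R_{p+1} = \E[(\hat Y_{p+1} - \hat Y_p)^2] \ge 0$, establishing the weak chain. For the endpoint, since $\varepsilon_{t+1}^{(1)}$ is independent of $z_t$ and mean-zero, $\E[Y \mid z_t] = \sum_{j=1}^P a_{1j} z_t^{(j)}$ and the residual $Y - \hat Y_P = \varepsilon_{t+1}^{(1)}$ has variance $\sigma_{11}$, so $R_P = \sigma_{11}$. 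Defining $\Delta_p = R_1 - R_p$ and subtracting consecutive terms immediately gives $\Delta_{p+1} - \Delta_p = R_p - R_{p+1} \ge 0$, hence the full chain $0 = \Delta_1 \le \cdots \le \Delta_P = R_1 - \sigma_{11}$.

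The main obstacle is the strict-increase characterization. Because $(z_t, \varepsilon_{t+1})$ is jointly Gaussian, $\hat Y_p$ coincides with the linear projection of $Y$ onto $\mathcal L_p = \operatorname{span}(z_t^{(1)}, \dots, z_t^{(p)})$. Writing $\tilde z_t^{(p)} = z_t^{(p)} - \Pi_{\mathcal L_{p-1}} z_t^{(p)}$ for the Gram--Schmidt residual of the $p$-th channel with respect to the first $p-1$, a direct projection computation gives
$$R_{p-1} - R_p \;=\; \frac{\bigl(\Cov(Y,\, \tilde z_t^{(p)})\bigr)^2}{\Var(\tilde z_t^{(p)})},$$
with the convention that the right-hand side is $0$ whenever $\tilde z_t^{(p)} = 0$ almost surely. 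This quantity is strictly positive if and only if (a) $\Var(\tilde z_t^{(p)}) > 0$, i.e.\ $z_t^{(p)}$ is not a linear combination of $z_t^{(1)}, \dots, z_t^{(p-1)}$, and (b) $\Cov(Y, \tilde z_t^{(p)}) \neq 0$, i.e.\ $Y$ remains correlated with the part of $z_t^{(p)}$ that is orthogonal to the first $p-1$ channels. Together, (a) and (b) are exactly the informal requirement that the $p$-th channel carries predictive information about $Y$ not already contained in $z_t^{(1)}, \dots, z_t^{(p-1)}$, completing the iff-statement.
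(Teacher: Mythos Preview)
Your proposal is correct and follows essentially the same high-level route as the paper: both introduce the nested information sets $\mathcal F_p=\sigma(z_t^{(1)},\dots,z_t^{(p)})$, invoke the optimality of the conditional expectation under squared loss to obtain $R_1\ge\cdots\ge R_P$, compute $R_P=\sigma_{11}$ directly from the VAR(1) equation, and then read off the monotonicity of $\Delta_p$. The only substantive difference is in the strict-increase step. The paper argues informally that $R_p<R_{p-1}$ requires $a_{1p}\neq 0$ together with $\Var(z_t^{(p)}\mid z_t^{(1)},\dots,z_t^{(p-1)})>0$, whereas you make the step quantitative by passing to the Gram--Schmidt residual $\tilde z_t^{(p)}$ and deriving the exact gain $R_{p-1}-R_p=\Cov(Y,\tilde z_t^{(p)})^2/\Var(\tilde z_t^{(p)})$. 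Your formulation is actually the sharper of the two: the condition $\Cov(Y,\tilde z_t^{(p)})\neq 0$ correctly allows for the case where $a_{1p}=0$ but $z_t^{(p)}$ is still useful through its residual correlation with later channels $z_t^{(j)}$, $j>p$, that enter $Y$ with $a_{1j}\neq 0$; the paper's simpler criterion $a_{1p}\neq 0$ is only sufficient, not necessary, for a strict gain. So your extra work with the projection formula buys a cleaner iff-characterization, at the cost of a slightly longer argument.
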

The proof is provided in Appendix~\ref{app:Theorem Proof2}.  Theorem~\ref{thm:monotonicity} formally shows that incorporating additional informative and non-redundant channels progressively reduces the forecasting risk for a target variable. This underscores the theoretical benefit of using more relevant dimensions (channels), supporting our hypothesis that higher-dimensional data provides greater opportunity for CD models to outperform CI approaches.


\subsection{Empirical Analysis}
\label{sec:empirical_analysis}

\begin{table}[ht]
\centering
\vspace{-1em}
\caption{MSE for CI and CD models under different settings.}
\label{tab:mse_transposed}
\begin{tabular}{lccccc}
\toprule
\textbf{Model} & \textbf{Independent 100} & \textbf{Anti-Self 100} & \textbf{Anti-Self 250} & \textbf{Anti-Self 2000} \\
\midrule
CI & 0.0043 & 0.0052 & 0.0054 & 0.0054  \\
CD & 0.0066 & 0.0014 & 0.0012 & 0.0011  \\
\bottomrule
\end{tabular}
\vspace{-1em}
\end{table}

To empirically validate our theoretical insights above, we conduct controlled experiments on synthetic data for evaluating the performance difference between CI and CD modeling under varying channel-dependency structures and dimensionalities.


\textbf{Data Generation.} We generate synthetic multivariate time series using a $C$-dimensional Vector Autoregression process in Equation~\ref{eq:var_system}, where the coefficient matrix \( {\bf A} \in \mathbb{R}^{C \times C} \)  encodes the dependency structure of channels. 
We focus on two distinct structures for ${\bf A}$:     
(1) \textbf{Independent}: Channels evolve independently of each other  i.e., \( {\bf A} \) is a diagonal matrix. For this setting, we use $C=100$ channels.
(2) \textbf{Anti-Self}: 
Channels depend significantly on other channels but not on their own immediate past, i.e., \( {\bf A} \) has zero diagonal entries and non-zero off-diagonal entries.  To examine the impact of dimensionality, we generate Anti-Self datasets with $C\in\{100,250,2000\}$.
More experimental details are provided in Appendix~\ref{app:experiment_details_empirical_analysis}.

\textbf{Forecasting Models.} To isolate the core impact of utilizing versus disregarding channel dependencies, we employ two simple linear forecasting architectures: (1) \textbf{CI}: A single univariate linear model is fitted for each channel using only its own past values.  (2) \textbf{CD}: A linear model is applied along the channel dimension to incorporate inter-channel dependencies.

\textbf{Results and Discussion.} Table~\ref{tab:mse_transposed} shows that CI models perform better in the Independent setting without inter-channel dependencies, while CD models achieve markedly lower MSE in Anti-Self settings, with larger gains as the number of interacting channels grows. These results align with our theory that \textit{CD modeling becomes increasingly effective when significant inter-channel dependencies exist, especially in higher-dimensional systems}. We further confirm this on real-world data (Appendix~\ref{app:Empirical Validation for Channel Dependency Findings on Real-World Data}).
These findings underscore two critical needs: \textbf{(1)} \textbf{novel CD models} that efficiently and effectively navigate the intricacies of high-dimensional time series, and \textbf{(2)} \textbf{dedicated high-dimensional forecasting benchmarks }with complex channel dependencies for rigorous evaluation. Accordingly, in the subsequent sections, we address these needs by first introducing our \method{} framework specifically tailored for HDTSF and then presenting \dataset{}, a comprehensive benchmark designed to support and galvanize further research in this evolving domain.



\begin{figure*}[t!]
    \centering
    \vspace{-1em}
    \includegraphics[width=0.99\textwidth]{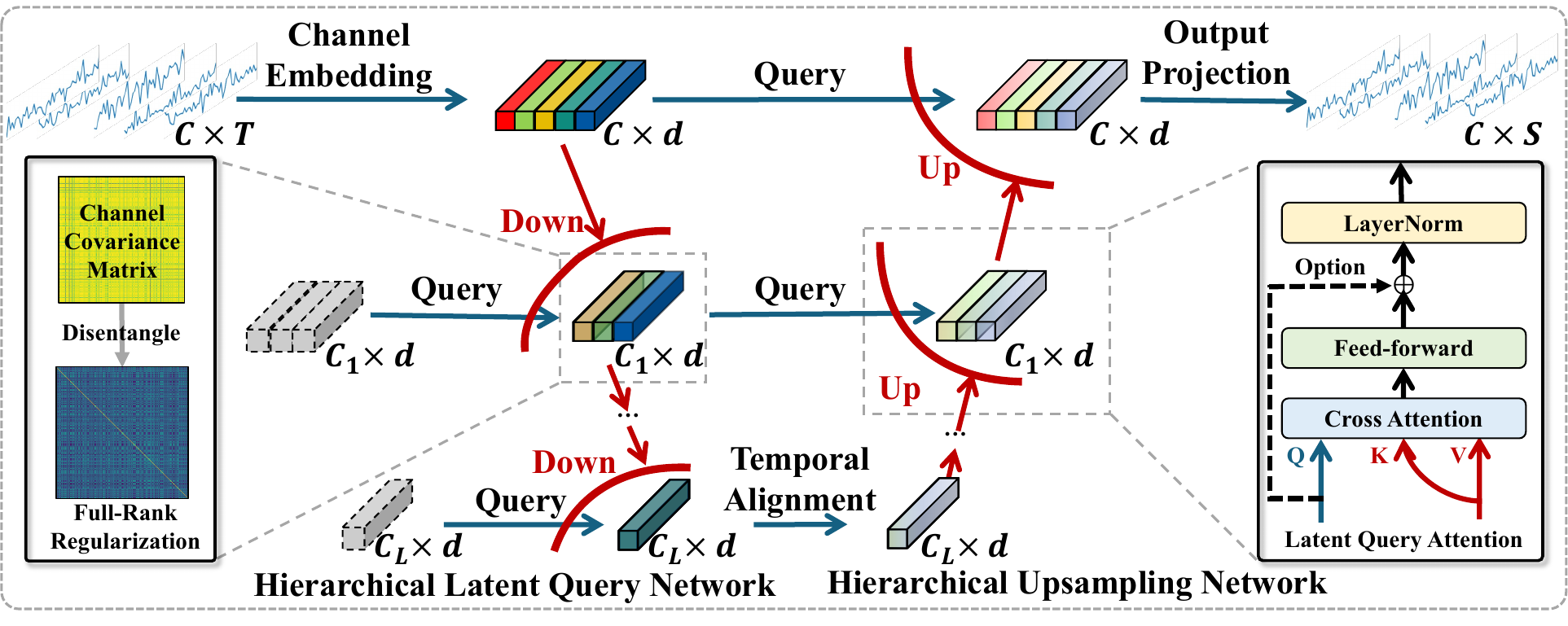}
    \vspace{-1em}
    \caption{Overall framework of \method{}, which consists of five main modules: channel embedding, hierarchical latent query, temporal alignment, hierarchical upsampling, and output projection.}
    \label{fig:method}
    \vspace{-1em}
\end{figure*}

\section{\method{} Framework}
\label{sec:UCast Framework}
Our theoretical and empirical analyses in Section~\ref{sec:Preliminary Study} highlight the need for novel CD models that can efficiently handle high-dimensional time series. Traditional models often struggle with scalability and overlook the structured nature of such data, especially the \textit{hierarchical} correlations among variable groups that frequently occur in real-world datasets (see Appendix~\ref{app:Empirical Evidence of Hierarchical Structures}). To address this, we propose \method{} (see Figure~\ref{fig:method}), an efficient model that captures channel correlations via learning latent hierarchical structures. We also introduce a full-rank regularization term to encourage disentanglement and improve the learning of structured representations.

\subsection{Model Architecture}
\textbf{Channel Embedding.}  
The input sequence is first normalized. A linear projection \(\mathbf{W}_{\text{in}}\in\mathbb{R}^{T\times d}\) converts the temporal dimension to a hidden dimension \(d\):
\begin{equation}
\mathbf{H}^{(0)}=\mathbf{X} \mathbf{W}_{\text{in}}
\in\mathbb{R}^{C\times d}.
\end{equation}

\textbf{Hierarchical Latent Query Network.}  
Self‑attention is well suited to modelling channel‑wise dependencies because every variable can attend to every other variable without assuming locality.  
However, applying full attention to \(C\) channels incurs quadratic cost, which is prohibitive when \(C\) is large.  
Hierarchical latent query network addresses this by introducing a set of \emph{latent queries} that serve as an information bottleneck.  
At layer \(\ell\in\{1,\dots,L\}\) the number of latent queries is
\(C_\ell=\lfloor C/r^{\,\ell}\rfloor\),
where \(r>1\) is a pre‑defined \emph{reduction ratio}.  
The queries, which are learnable, are shared across samples:
\(\mathbf{Q}_\ell\in\mathbb{R}^{C_\ell\times d}\).  
The Latent Query Attention at layer \(\ell\) is
\begin{equation}
\begin{array}{c}
\mathbf{Q}=W_q\mathbf{Q}_\ell,\quad
\mathbf{K}=W_k\mathbf{H}^{(\ell-1)},\quad
\mathbf{V}=W_v\mathbf{H}^{(\ell-1)},\\[6pt]
\mathbf{H}^{(\ell)}=
W_o\!\Bigl(
\operatorname{softmax}\!\bigl(\tfrac{\mathbf{Q}\mathbf{K}^{\!\top}}{\sqrt{d_h}}\bigr)\mathbf{V}
\Bigr)\in\mathbb{R}^{C_\ell\times d},
\end{array}
\end{equation}
followed by layer normalisation.  
Successive layers hierarchical latent query network builds a hierarchy in which higher‑level queries summarise wider channel groups.

\textbf{Temporal Alignment.}  
The deepest latent representation \(\mathbf{H}^{(L)}\in\mathbb{R}^{C_L\times d}\) stores \(C_L\) latent channel tokens whose \(d\)-dimensional feature vectors encode the \emph{temporal} dynamics extracted from the look‑back window.  
To keep these temporal features coherent when the model switches from down‑sampling to up‑sampling, we apply a shared linear predictor \(f_{\text{pred}}\) along the temporal dimension:
\begin{equation}
\mathbf{U}^{(L)} = f_{\text{pred}}\!\bigl(\mathbf{H}^{(L)}\bigr)\in\mathbb{R}^{C_L\times d}.
\end{equation}

\textbf{Hierarchical Upsampling Network.}  
Forecasting demands channel‑wise outputs of size \(C\), so the hierarchy is traversed in reverse to restore resolution.  
At layer \(\ell=L,\dots,1\) the Up‑Latent Query Attention uses the representation from the encoder as queries and the current decoder representation as keys and values:
\begin{equation}
\begin{array}{c}
\mathbf{Q}=\mathbf{W}_q\mathbf{H}^{(\ell-1)},\quad
\mathbf{K}=\mathbf{W}_k\mathbf{U}^{(\ell)},\quad
\mathbf{V}=\mathbf{W}_v\mathbf{U}^{(\ell)},\\[6pt]
\mathbf{U}^{(\ell-1)}=
\mathbf{W}_o\!\Bigl(
\operatorname{softmax}\!\bigl(\tfrac{\mathbf{Q}\mathbf{K}^{\!\top}}{\sqrt{d_h}}\bigr)\mathbf{V}
\Bigr)+\mathbf{H}^{(\ell-1)}\in\mathbb{R}^{C_{\ell-1}\times d}.
\end{array}
\end{equation}
The skip connection \(\mathbf{H}^{(\ell-1)}\) guides the reconstruction, ensuring that the original channel information is recovered with minimal distortion.

\textbf{Output Projection.}  
After upsampling \(\mathbf{U}^{(0)} \in\mathbb{R}^{C\times d}\) has the same channel dimension as the encoder output \(\mathbf{H}^{(0)}\).  
A residual link followed by a projection yields the horizon‑length prediction:
\begin{equation}
\hat{\mathbf{Y}}=
(\mathbf{U}^{(0)}+\mathbf{H}^{(0)})\mathbf{W}_{\text{out}},
\qquad
\mathbf{W}_{\text{out}}\in\mathbb{R}^{d\times S}.
\label{eq:output_projection}
\end{equation}
Finally, we apply the inverse of the initial normalisation to obtain the forecast in the original scale.

\subsection{Optimization Objective}
\label{sec:Optimization Objective}

\textbf{Full-Rank Regularization.}
Representation learning is fundamentally about \emph{disentanglement}: channels are highly correlated in the high-dimensional time series data (see Table~\ref{tab:dataset}). This entanglement means that the latent matrix
\( \mathbf{H}^{(\ell-1)}\in\mathbb{R}^{C\times d} \)
often has rank
\( r=\operatorname{rank}(\mathbf{H}^{(\ell-1)})\ll C \).
The resulting rank deficiency signals redundancy among channels and obscures the latent hierarchical channel structure. Achieving disentanglement is therefore required to learn this structure.
To resolve this issue, we introduce a full-rank regularization, whose effect is formalised in Theorem~\ref{thm:fullrank} below. 

\begin{theorem}[Full-Rank Regularisation]\label{thm:fullrank}
Let \( \mathbf{H}^{(\ell-1)}\in\mathbb{R}^{C\times d} \) be of rank
\( r < \min(C,d) \).
Let \( \mathbf{Q}\in\mathbb{R}^{C'\times C} \) be learnable, full row rank, and
\( C'\le r \).
Define
\( \mathbf{H}^{(\ell)} = \mathbf{Q}\mathbf{H}^{(\ell-1)}
   \in\mathbb{R}^{C'\times d} \).
There exists a choice of \( \mathbf{Q} \) such that
\( \operatorname{rank}(\mathbf{H}^{(\ell)}) = C' \).
Further, if \( r \ge d \) and \( C' \ge d \),
adding a full-rank regulariser (e.g.\ the log-determinant of
\( \mathbf{H}^{(\ell)}\mathbf{H}^{(\ell)\!\top} \))
drives \( \mathbf{H}^{(\ell)} \) toward
row rank \( \min(C',d) \).
\end{theorem}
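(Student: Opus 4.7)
The plan is to handle the two claims of the theorem separately, both via direct linear-algebraic arguments.

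For the existence claim, I would start with the thin SVD $\mathbf{H}^{(\ell-1)} = \mathbf{U}\mathbf{\Sigma}\mathbf{V}^{\top}$, where $\mathbf{U}\in\mathbb{R}^{C\times r}$ and $\mathbf{V}\in\mathbb{R}^{d\times r}$ have orthonormal columns and $\mathbf{\Sigma}\in\mathbb{R}^{r\times r}$ is positive diagonal. Because $\mathbf{\Sigma}\mathbf{V}^{\top}$ has full row rank $r$, the identity $\operatorname{rank}(\mathbf{Q}\mathbf{H}^{(\ell-1)}) = \operatorname{rank}(\mathbf{Q}\mathbf{U})$ holds. Since $C'\le r$, I can explicitly take $\mathbf{Q}\in\mathbb{R}^{C'\times C}$ whose rows are the transposes of the first $C'$ columns of $\mathbf{U}$; then $\mathbf{Q}\mathbf{U} = [\mathbf{I}_{C'}\ \mathbf{0}]$, which has rank $C'$. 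This $\mathbf{Q}$ has full row rank and realises $\operatorname{rank}(\mathbf{H}^{(\ell)}) = C'$, as required. A brief remark would note that this property is generic: the set of $\mathbf{Q}$ for which $\mathbf{Q}\mathbf{U}$ drops rank is a Zariski-closed measure-zero subset, so essentially any initialisation realises it.

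For the regulariser claim, I would invoke the barrier property of the log-determinant. Let $\sigma_1\ge\cdots\ge\sigma_{d}$ denote the singular values of $\mathbf{H}^{(\ell)}\in\mathbb{R}^{C'\times d}$. Under the additional assumption $r\ge d$ (which, combined with the standing bound $r\le\min(C,d)$, forces $r=d$) and $C'\ge d$, I would interpret the regulariser as the log-determinant of the $d\times d$ Gram matrix $\mathbf{H}^{(\ell)\top}\mathbf{H}^{(\ell)}$, which equals $2\sum_{i=1}^{d}\log\sigma_i$. This expression diverges to $-\infty$ whenever any $\sigma_i\to 0$, so using $-\log\det$ as a penalty attaches an infinite cost to rank-deficient configurations; any minimiser of the combined objective (task loss plus $\lambda(-\log\det)$) must therefore satisfy $\sigma_i>0$ for every $i\le d$, i.e.\ $\operatorname{rank}(\mathbf{H}^{(\ell)}) = \min(C',d) = d$.

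The main obstacle I anticipate is making the informal phrase ``drives toward'' precise, since the statement does not specify an optimisation procedure. My plan is to adopt a barrier-type reading: the penalty is $+\infty$ on rank-deficient points and finite elsewhere, so any sequence of iterates with bounded regularised objective is bounded away from the rank-deficient set, and any stationary point of the regularised loss must be of full row rank $\min(C',d)$. A subsidiary subtlety is that the regulariser as literally written, $\log\det(\mathbf{H}^{(\ell)}\mathbf{H}^{(\ell)\top})$ on the $C'\times C'$ Gram, is identically $-\infty$ whenever $C'>d$ because that matrix has rank at most $d$; I would flag this and replace it with the $d\times d$ Gram $\mathbf{H}^{(\ell)\top}\mathbf{H}^{(\ell)}$ (equivalently, the product of the top $d$ singular values), which is the standard volume/disentanglement regulariser and, I believe, the intended object of the statement.
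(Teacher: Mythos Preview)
Your existence argument via the thin SVD, taking $\mathbf{Q}$ to be the transposes of the first $C'$ left singular vectors so that $\mathbf{Q}\mathbf{U}=[\mathbf{I}_{C'}\ \mathbf{0}]$, is exactly the paper's strategy; the paper phrases it as a row-selection matrix $\mathbf{S}$ applied to $\mathbf{U}$, but the content is identical, and your Zariski-genericity remark is a pleasant addition the paper omits.

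On the regulariser claim, both you and the paper argue informally that the log-determinant penalty pushes singular values away from zero; the paper frames this as gradient descent on $-\log\det(\cdot)$ increasing any small $\sigma_i$, while you frame it as a barrier whose value is $+\infty$ on rank-deficient points. The one substantive divergence is how the degeneracy at $C'>d$ is handled: you switch to the $d\times d$ Gram $\mathbf{H}^{(\ell)\top}\mathbf{H}^{(\ell)}$, whereas the paper keeps the $C'\times C'$ Gram $\mathbf{H}^{(\ell)}\mathbf{H}^{(\ell)\top}$ and relies on the $\varepsilon I_{C'}$ shift already built into the loss $\mathcal{L}_{\mathrm{cov}}$ to keep the determinant positive and the gradient well-defined. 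Your replacement is mathematically cleaner; the paper's version matches what is actually optimised in the method. One further point: the standing hypothesis is $r<\min(C,d)$ (strict), so the additional assumption $r\ge d$ in the second clause is in fact vacuous, not merely forcing $r=d$ as you wrote; neither you nor the paper addresses this inconsistency in the statement.
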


The proof is given in Appendix~\ref{app:Theorem Proof3}.
Theorem~\ref{thm:fullrank} states that enforcing full rank on
\( \mathbf{H}^{(\ell)} \)
is sufficient to remove linear redundancy among channels and reveal a clear  hierarchical latent channel structure. Define the row-covariance matrix
\(
\mathbf{\Sigma}^{(\ell)} =
   \tfrac{1}{d}\,\mathbf{H}^{(\ell)}\mathbf{H}^{(\ell)\!\top}
   \in\mathbb{R}^{C'\times C'}.
\)
The log-determinant
\(
\log\det\bigl(\mathbf{\Sigma}^{(\ell)}+\varepsilon I_{C'}\bigr)
\)
is proportional to the generalised variance\footnote{Product of the
eigenvalues of \(\mathbf{\Sigma}^{(\ell)}\).}.
Maximising this value keeps every eigenvalue bounded away from zero,
so the channel vectors occupy a larger subspace and share less
redundant information.
We thus define the full-rank regularization loss as  
\[
\mathcal{L}_{\mathrm{cov}}^{(\ell)} = -\frac{1}{C'} \log\det\bigl(\mathbf{\Sigma}^{(\ell)} + \varepsilon I_{C'}\bigr),
\]  
where \(\varepsilon I_{C'}\) ensures positive definiteness and stabilizes early training. The \(1/C'\) factor removes scale dependence, simplifying the weighting of this term. The overall loss is the average over all \(L\) layers. Minimizing \(\mathcal{L}_{\mathrm{cov}}\) increases the Shannon differential entropy (Theorem~\ref{thm:entropy}), reducing redundancy and promoting disentanglement by encouraging each channel to carry distinct information.

\textbf{Overall Objective.}  
The model parameters are learned by minimising the combined loss
\begin{equation}
\mathcal{L} = \mathcal{L}_{\text{mse}} + \alpha\,\mathcal{L}_{\text{cov}},
\end{equation}
where $\mathcal{L}_{\text{mse}}$ is the supervised mean‑squared forecasting error, $\mathcal{L}_{\text{cov}}$ is the layer‑averaged covariance penalty, and $\alpha$ controls the strength of this regulariser.



\section{\dataset{}: High-Dimensional Time Series Forecasting Benchmark}
\label{sec:High-Dimensional Time Series Forecasting Benchmark}

\begin{table*}[t!]
\centering
\vspace{-1em}
\caption{\dataset{} descriptions. Details on dataset generation, collection, cleaning, and prediction length determination are provided in Appendix~\ref{app:Dataset Description}.}
\setlength{\tabcolsep}{3pt}
\small
\scalebox{0.85}{%
\begin{tabular}{c|c|c|c|c|c|c|c}
\toprule
 \textbf{Dataset} & \textbf{Dimensions} & \textbf{Dataset Size} & \textbf{Frequency} & \textbf{Pred Length} & \textbf{Storage} & \textbf{Domains} & \textbf{Correlation} \\ 
 \midrule
  Neurolib           & 2,000          & 60,000      & 1 ms             & 336 & 2.44 GB & Neural Science & 0.926±0.007 \\ 
  \midrule
  Solar               & 5,162          & 105,120     & 5 Mins           & 336 & 2.33 GB & Energy         & 0.998±0.000 \\ 
  \midrule
  Atec                           & 1,569          & 8,928       & 10 Mins          & 336 & 158.8 MB & Cloud          & 0.851±0.036 \\ 
  \midrule
  Meter         & 2,898          & 28,512      & 30 Mins          & 336 & 651 MB & Energy         & 0.864±0.023 \\ 
  \midrule
  Temp                    & 3,850          & 17,544      & 1 Hour           & 168 & 383.6 MB & Weather        & 0.833±0.039 \\ 
  \midrule
  Wind                    & 3,850          & 17,544      & 1 Hour           & 168 & 331.4 MB & Weather        & 0.937±0.011 \\ 
  \midrule
  Traffic-CA                     & 7,491          & 43,824      & 1 Hour           & 168 & 2.48 GB & Traffic        & 0.962±0.009 \\ 
  \midrule
  Traffic-GLA                    & 3,376          & 43,824      & 1 Hour           & 168 & 1.12 GB & Traffic        & 0.947±0.012 \\ 
  \midrule
  Traffic-GBA                    & 2,229          & 43,824      & 1 Hour           & 168 & 689.1 MB & Traffic        & 0.978±0.003 \\ 
  \midrule 
  Air Quality            & 1,105          & 15,461      & 6 Hours          & 28  & 94.6 MB & Environment    & 0.854±0.030 \\ 
  \midrule
  SIRS              & 2,994          & 9,000       & 1 Day            & 7   & 675 MB & Epidemiology   & 0.991±0.002 \\ 
  \midrule
  SP500                & 1,475          & 7,553       & 1 Day            & 7   & 184.1 MB & Finance        & 0.738±0.046 \\ 
  \midrule
  M5                             & 3,049          & 1,941       & 1 Day            & 7   & 13.7 MB & Sale           & 0.726±0.073 \\ 
  \midrule
  Measles                & 1,161          & 1,330       & 1 Day            & 7   & 12.1 MB & Epidemiology   & 0.724±0.048 \\ 
  \midrule
  Wiki-20k   & 20,000         & 2,557       & 1 Day            & 7   & 289 MB & Web            & 0.923±0.017 \\ 
  \midrule
  Mobility      & 5,826          & 974         & 1 Day            & 7   & 29.7 MB & Social         & 0.847±0.027 \\ 
  \bottomrule
\end{tabular}
}
\label{tab:dataset}
\vspace{-1em}
\end{table*}

The goal of \dataset{} is to support and advance research in TSF, which is a rapidly growing and increasingly important area. 
As shown in Table~\ref{tab:dataset}, \dataset{} has several key characteristics:
\begin{compactenum}[\textbullet]
\item \textbf{High Dimensionality.} \dataset{} includes 16 high-dimensional time series forecasting datasets. The number of variables (dimensions) in these datasets ranges from 1,161 to 20,000, which is significantly larger than in commonly used benchmarks such as ETT, Weather, ECL, Solar, and Traffic (which typically have 7–862 channels). These high-dimensional datasets provide opportunities for developing scalable methods, enable exploration of channel-dependent models, and allow for more robust performance evaluations.

\item  \textbf{Diverse Sources.} \dataset{} comprises both simulated and real-world datasets. \textit{Neurolib} and \textit{SIRS} are simulated using domain-informed differential equations and are valuable for scientific modeling and hypothesis testing. The remaining datasets are based on real-world observations, making them suitable for evaluating the generalization ability of forecasting models in practical settings. The inclusion of both types enables comprehensive performance assessments.

\item \textbf{Varied Scales.} \dataset{} offers datasets of different sizes, defined by the number of variables and temporal length. Disk memory usage reflects this variability. As indicated in Table~\ref{tab:dataset}, there are 4 large-scale (gigabyte-level), 8 medium-scale (hundreds of megabytes), and 4 small-scale (tens of megabytes) datasets. The small and medium-scale datasets fit in the memory of a single GPU, making them suitable for evaluating resource-intensive models. The large-scale datasets support research on scalable methods using mini-batching and distributed training.

\item  \textbf{Different Sampling Frequencies.} \dataset{} covers a range of sampling frequencies, including milliseconds, minutes, hours, and days (see Table~\ref{tab:dataset}). This variety reflects real-world conditions across application domains and enables evaluation across different temporal resolutions. Furthermore, \dataset{} uses frequency-specific prediction lengths that are more realistic than fixed-length horizons commonly used in previous benchmarks.

\item \textbf{Broad Domain Coverage.} \dataset{} includes datasets from 10 diverse domains, such as neural science, energy, cloud computing, weather, traffic, epidemiology, finance, and social behavior. This diversity supports the development of general-purpose forecasting models and enables comparison against domain-specific approaches.

\end{compactenum}

\textit{Channel Correlation Statistic.} We compute channel correlation following TFB~\citep{qiu2024tfb}, with results reported in the \textit{correlation} column of Table~\ref{tab:dataset}. 
All datasets exhibit strong channel dependencies, which are not intentionally curated but naturally arise in large-scale real-world systems. The high correlation reflects complex inter-channel dependencies and a shared global trend across channels. 
Further analysis on channel correlation of \dataset{} is provided in Appendix~\ref{sec:Channel Correlation Statistic}.

\begin{table*}[t!]
\centering
\vspace{-1em}
\caption{Forecasting results on various datasets. Input length $T$ is tuned per model for best performance (see Appendix~\ref{sec:Implementation Details}), and prediction length $S$ is given in Table~\ref{tab:dataset}. Lower MSE/MAE indicates better performance; best results are in \textcolor{red}{\textbf{red}} and second-best are \textcolor{blue}{\underline{underlined}}.}
\resizebox{\textwidth}{!}{%
\begin{tabular}{crr |rr |rr |rr |rr |rr |rr |rr |>{\columncolor{gray!20}}r>{\columncolor{gray!20}}r}
\toprule
\multirow{2}{*}{\textbf{Models}} & \multicolumn{6}{c}{\textbf{Channel-Independent}} & \multicolumn{12}{c}{\textbf{Channel-Dependent}}\\
\cmidrule(lr){2-7} \cmidrule(lr){8-19}
& \multicolumn{2}{c}{\textbf{DLinear}} 
& \multicolumn{2}{c}{\textbf{PAttn}} 
& \multicolumn{2}{c|}{\textbf{PatchTST}} 
& \multicolumn{2}{c}{\textbf{iTransformer}} 
& \multicolumn{2}{c}{\textbf{TSMixer}} 
& \multicolumn{2}{c}{\textbf{TimesNet}} 
& \multicolumn{2}{c}{\textbf{CCM}} 
& \multicolumn{2}{c}{\textbf{DUET}} 
& \multicolumn{2}{>{\columncolor{gray!20}}c}{\textbf{\method{}}} \\
\cmidrule(lr){2-3} \cmidrule(lr){4-5} \cmidrule(lr){6-7} \cmidrule(lr){8-9} \cmidrule(lr){10-11} \cmidrule(lr){12-13} \cmidrule(lr){14-15} \cmidrule(lr){16-17} \cmidrule(lr){18-19}
Metrics & MSE & MAE & MSE & MAE & MSE & MAE & MSE & MAE & MSE & MAE & MSE & MAE & MSE & MAE & MSE & MAE & MSE & MAE \\
\midrule
Atec & 0.318 & 0.314 & 0.299 & \textcolor{red}{\textbf{0.275}} & \textcolor{blue}{\underline{0.298}} & 0.298 & 0.345 & 0.319 & 0.398 & 0.387 & 0.493 & 0.429 & 0.362 & 0.346 & 0.330 & 0.339 & \textcolor{red}{\textbf{0.287}} & \textcolor{blue}{\underline{0.280}} \\
Air Quality & 0.449 & 0.446 & 0.449 & 0.432 & 0.448 & 0.432 & \textcolor{blue}{\underline{0.447}} & \textcolor{blue}{\underline{0.431}} & 0.447 & 0.438 & 0.457 & 0.438 & 0.458 & 0.451 & 0.452 & 0.444 & \textcolor{red}{\textbf{0.446}} & \textcolor{red}{\textbf{0.430}} \\
Temp & 0.272 & 0.391 & 0.278 & 0.395 & 0.279 & 0.396 & \textcolor{blue}{\underline{0.265}} & 0.386 & 0.266 & \textcolor{blue}{\underline{0.389}} & 0.287 & 0.408 & 0.279 & 0.398 & 0.435 & 0.511 & \textcolor{red}{\textbf{0.262}} & \textcolor{red}{\textbf{0.383}} \\
Wind & 1.128 & \textcolor{blue}{\underline{0.697}} & 1.256 & 0.758 & 1.254 & 0.757 & \textcolor{blue}{\underline{1.116}} & 0.699 & 1.346 & 0.742 & 1.161 & 0.708 & 1.165 & 0.698 & 1.227 & 0.746 & \textcolor{red}{\textbf{1.104}} & \textcolor{red}{\textbf{0.692}} \\
Mobility & 0.344 & 0.359 & 0.337 & 0.336 & 0.344 & 0.341 & \textcolor{red}{\textbf{0.312}} & \textcolor{red}{\textbf{0.314}} & 1.165 & 0.787 & 0.410 & 0.388 & 0.523 & 0.468 & 0.439 & 0.410 & \textcolor{blue}{\underline{0.315}} & \textcolor{blue}{\underline{0.317}} \\
Traffic-CA & \textcolor{blue}{\underline{0.063}} & \textcolor{blue}{\underline{0.141}} & 0.491 & 0.554 & 0.295 & 0.417 & 0.271 & 0.391 & 0.082 & 0.186 & 0.101 & 0.205 & 0.067 & 0.153 & --- & --- & \textcolor{red}{\textbf{0.061}} & \textcolor{red}{\textbf{0.131}} \\
Traffic-GBA & 0.062 & 0.137 & \textcolor{blue}{\underline{0.059}} & \textcolor{blue}{\underline{0.132}} & 0.060 & 0.139 & 0.063 & 0.141 & 0.074 & 0.168 & 0.098 & 0.204 & 0.067 & 0.151 & 0.650 & 0.638 & \textcolor{red}{\textbf{0.059}} & \textcolor{red}{\textbf{0.126}} \\
Traffic-GLA & 0.062 & 0.142 & \textcolor{blue}{\underline{0.060}} & 0.136 & 0.060 & \textcolor{blue}{\underline{0.136}} & 0.065 & 0.145 & 0.071 & 0.168 & 0.094 & 0.199 & 0.066 & 0.152 & 0.810 & 0.738 & \textcolor{red}{\textbf{0.060}} & \textcolor{red}{\textbf{0.132}} \\
M5 & 3.688 & 0.870 & 3.650 & 0.867 & 3.655 & 0.872 & \textcolor{blue}{\underline{3.549}} & \textcolor{blue}{\underline{0.853}} & 6.863 & 1.623 & 4.490 & 0.919 & 4.916 & 0.941 & 3.768 & 0.880 & \textcolor{red}{\textbf{3.501}} & \textcolor{red}{\textbf{0.849}} \\
Measles & 0.128 & 0.252 & 0.011 & 0.048 & 0.013 & 0.058 & \textcolor{blue}{\underline{0.010}} & \textcolor{blue}{\underline{0.048}} & 0.569 & 0.547 & 0.018 & 0.060 & 0.244 & 0.323 & 0.015 & 0.064 & \textcolor{red}{\textbf{0.010}} & \textcolor{red}{\textbf{0.042}} \\
Neurolib & 1.793 & 0.381 & 2.458 & 0.445 & 2.395 & 0.438 & \textcolor{red}{\textbf{1.718}} & \textcolor{red}{\textbf{0.347}} & 2.240 & 0.532 & 2.475 & 0.458 & 1.774 & 0.403 & 2.519 & 0.451 & \textcolor{blue}{\underline{1.750}} & \textcolor{blue}{\underline{0.350}} \\
Solar & 0.174 & 0.255 & 0.604 & 0.582 & 0.416 & 0.469 & 0.343 & 0.427 & \textcolor{red}{\textbf{0.155}} & \textcolor{red}{\textbf{0.216}} & \textcolor{blue}{\underline{0.157}} & \textcolor{blue}{\underline{0.224}} & 0.177 & 0.258 & --- & --- & 0.172 & 0.246 \\
SIRS & 0.058 & 0.168 & \textcolor{blue}{\underline{0.025}} & \textcolor{blue}{\underline{0.109}} & 0.033 & 0.129 & 0.028 & 0.113 & 0.016 & 0.078 & 0.162 & 0.327 & 0.048 & 0.156 & 0.095 & 0.236 & \textcolor{red}{\textbf{0.007}} & \textcolor{red}{\textbf{0.052}} \\
Meters & 0.944 & \textcolor{red}{\textbf{0.549}} & \textcolor{red}{\textbf{0.941}} & 0.552 & 1.254 & 0.706 & 0.949 & 0.556 & 0.987 & 0.564 & 1.034 & 0.586 & 0.946 & 0.551 & 1.308 & 0.731 & \textcolor{blue}{\underline{0.943}} & \textcolor{blue}{\underline{0.551}} \\
SP500 & 0.630 & 0.367 & 0.516 & 0.309 & 0.523 & 0.313 & \textcolor{blue}{\underline{0.511}} & \textcolor{blue}{\underline{0.306}} & 2.674 & 1.120 & 0.611 & 0.343 & 0.727 & 0.414 & 0.568 & 0.335 & \textcolor{red}{\textbf{0.502}} & \textcolor{red}{\textbf{0.301}} \\
Wiki-20k & 10.740 & 0.394 & 10.290 & 0.306 & \textcolor{blue}{\underline{10.291}} & \textcolor{blue}{\underline{0.305}} & 10.933 & 0.405 & 10.446 & 0.332 & 10.586 & 0.325 & 11.413 & 0.373 & 10.278 & 0.304 & \textcolor{red}{\textbf{10.273}} & \textcolor{red}{\textbf{0.302}} \\
\midrule
\textbf{1st Count} & 0 & 1 & 1 & 1 & 0 & 0 & \textcolor{blue}{\underline{2}} & \textcolor{blue}{\underline{2}} & 1 & 1 & 0 & 0 & 0 & 0 & 0 & 0 & \textcolor{red}{\textbf{12}} & \textcolor{red}{\textbf{11}} \\
\bottomrule
\end{tabular}%
}
\vspace{-1.2em}
\label{tab:dataset_results}
\end{table*}

\vspace{-0.8em}
\section{Experiment Results and Framework Analysis}
\vspace{-0.7em}

\textbf{Baselines.} We consider the following representative models as our baselines for HDTSF:
\begin{compactenum}[(a)]

\item \textbf{Channel-Independent.}  
We include three methods that employ a shared model backbone across all channels: \textbf{DLinear}~\citep{dlinear}, \textbf{PAttn}~\citep{pattn}, and \textbf{PatchTST}~\citep{patchtst}. These methods use linear projection, attention mechanisms, and the Transformer architecture, respectively, to model temporal dependencies.

\item \textbf{Channel-Dependent.}  
We adopt five methods that incorporate dedicated modules to capture inter-channel dependencies. \textbf{TimesNet}~\citep{timesnet} and \textbf{TSMixer}~\citep{tsmixer} are position-wise methods that capture channel correlations at each time step by embedding. \textbf{iTransformer}~\citep{itransformer} is a token-wise method that treats the entire time series of each channel as a token and models inter-channel dependencies using a Transformer. Additionally, we consider two clustering-based methods: \textbf{CCM}~\citep{ccm} and \textbf{DUET}~\citep{duet}, which explicitly group highly correlated channels into clusters and model dependencies within each cluster.

\end{compactenum}

\begin{wrapfigure}{r}{0.4\textwidth}
    \centering
    \vspace{-2em}
    \includegraphics[width=1\linewidth]{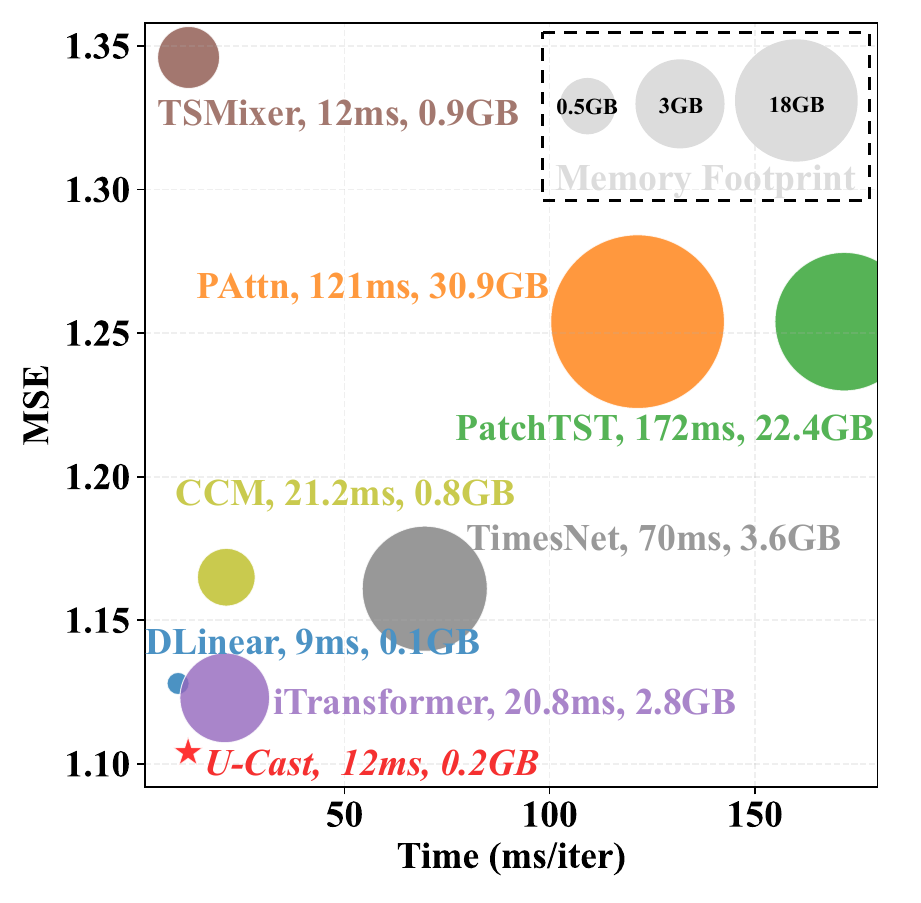}
    \vspace{-2em}
    \caption{MSE vs. average training time per batch on Wind dataset ($C = 3{,}850$); marker size reflects GPU memory usage.}
    \label{fig:efficiency}
    \vspace{-1em}
\end{wrapfigure}

\textbf{Benchmark Results.} 
The experimental setup is detailed in Appendix~\ref{sec:Implementation Details}. The results are shown in Table~\ref{tab:dataset_results}. We summarize the following observations from our benchmark results:
\textbf{First}, although CI methods cannot explicitly model channel correlations, they are designed to learn shared temporal patterns across channels. This can help reduce the risk of overfitting, enabling CI methods to achieve competitive performance. Among them, PAttn generally achieves better results, indicating the effectiveness of attention mechanisms for temporal modeling.
\textbf{Second}, among CD methods, iTransformer significantly outperforms others, suggesting that treating each channel as a token and using attention to model inter-channel dependencies is more effective than position-wise or cluster-based approaches in high-dimensional settings. We argue that the token-wise design helps mitigate noise in the temporal dimension and fully utilizes the benefits of the attention mechanism.
\textbf{Third}, CD methods do not show a consistent or significant performance advantage over CI methods. However, our theoretical analysis in Section~\ref{sec:Preliminary Study} suggests that CD can reduce the MSE risk compared to CI in multivariate time series forecasting. We believe this is due to current CD baselines either failing to effectively exploit complex channel dependencies or being unable not scale to high-dimensional time series.


\textbf{Performance Gains of \method{}.}  
As shown in Table~\ref{tab:dataset_results}, \method{} achieves top performance on the high-dimensional \dataset{} benchmark, ranking \textbf{first} on 12 datasets for MSE and 11 for MAE, and further achieves an average forecasting error reduction of
15\% compared to iTransformer (t-test p-value=1.34*10$^{-5}$).  
While iTransformer, the representative CD model that explicitly models multivariate correlations via Transformer, serves as a strong baseline, it underperforms on many datasets compared to \method{}. This highlights the advantage of learning hierarchical latent channel structures for capturing complex inter-channel dependencies.  
The limitation of iTransformer may stem from the extreme complexity of channel dependencies in high-dimensional settings, where the absence of explicitly hierarchical structure learning in its attention mechanism constrains its effectiveness.  
In contrast, by incorporating hierarchical latent queries and full-rank regularization, \method{} is better equipped to handle such complexity.  

\textbf{Model Efficiency.}
Beyond its strong forecasting performance, another notable advantage of \method{} is its lightweight architecture. Figure~\ref{fig:efficiency} shows \method{} achieves the best trade-off between training speed, memory footprint, and performance. We also find that \method{} consistently achieves favorable training efficiency across varying dimensionalities and as dimensionality increases, the efficiency advantage of \method{} over other methods becomes more pronounced (see Appendix~\ref{sec:efficiency}). Additionally, we conduct computational complexity analysis, which indicates that \method{} lowers both time and memory by a factor of $r$ compared with iTransformer, yet still retains the expressive power of attention through its latent‑query hierarchy (see Appendix~\ref{app:complexity}).

\begin{wraptable}{r}{0.4\textwidth}
\centering
\vspace{-2em}
\caption{Ablation study on different components of \method{}. The results are averaged over 16 datasets, with the complete results provided in Table~\ref{tab:ablation_full}.}
\resizebox{0.35\textwidth}{!}{%
\begin{tabular}{l|cc}
\toprule
\textbf{Variant} & \textbf{MSE} & \textbf{MAE} \\
\midrule
\rowcolor{gray!20}
\method{} & \textcolor{red}{\textbf{1.243}} & \textcolor{red}{\textbf{0.326}} \\
w/o $\mathcal{L}_{\text{cov}}$ & 1.267 & 0.341 \\
w/o Hierarchical & 1.263 & 0.332 \\
w/o Latent Query & 1.260 & 0.331 \\
w/o Upsampling & 1.269 & 0.336 \\
\bottomrule
\end{tabular}%
}
\label{tab:ablation}
\vspace{-1em}
\end{wraptable}

\textbf{Ablation Study.}
We evaluate the impact of: (1) \textbf{w/o hierarchical} by retaining only a single layer for dimensionality reduction, (2) \textbf{w/o latent query} by setting $\mathbf{Q}_\ell$ requires\_grad=False, and (3)\textbf{w/o upsampling} by using a simple linear projection to restore channel dimension. In addition, we examine the role of the covariance full-rank regularisation by \textbf{w/o $\mathcal{L}_{\text{cov}}$}, i.e., setting $\alpha=0$. Table~\ref{tab:ablation} shows that removing any component degrades \method{}'s averaged performance, confirming their necessity. For the full results on all datasets and deeper analysis, we refer reader to Appendix~\ref{sec:app_ablation_study}.

\textbf{Does full-rank regularization $\mathcal{L}_{\text{cov}}$ disentangle channels correlation effectively?}
As shown in Figure~\ref{fig:case}~(a), the top four subplots visualize the evolution of the covariance matrix $\mathbf{\Sigma}$ from a randomly initialized state (Epoch 0) to a well-optimized one (Epoch 10). The structure of the covariance matrix changes significantly across epochs, transitioning from dense to more sparse. This indicates that $\mathcal{L}_{\text{cov}}$ is effectively promoting disentanglement by reducing redundancy among channels. 

\begin{wrapfigure}{r}{0.65\textwidth}
    \centering
    \vspace{-1em}
    \includegraphics[width=1\linewidth]{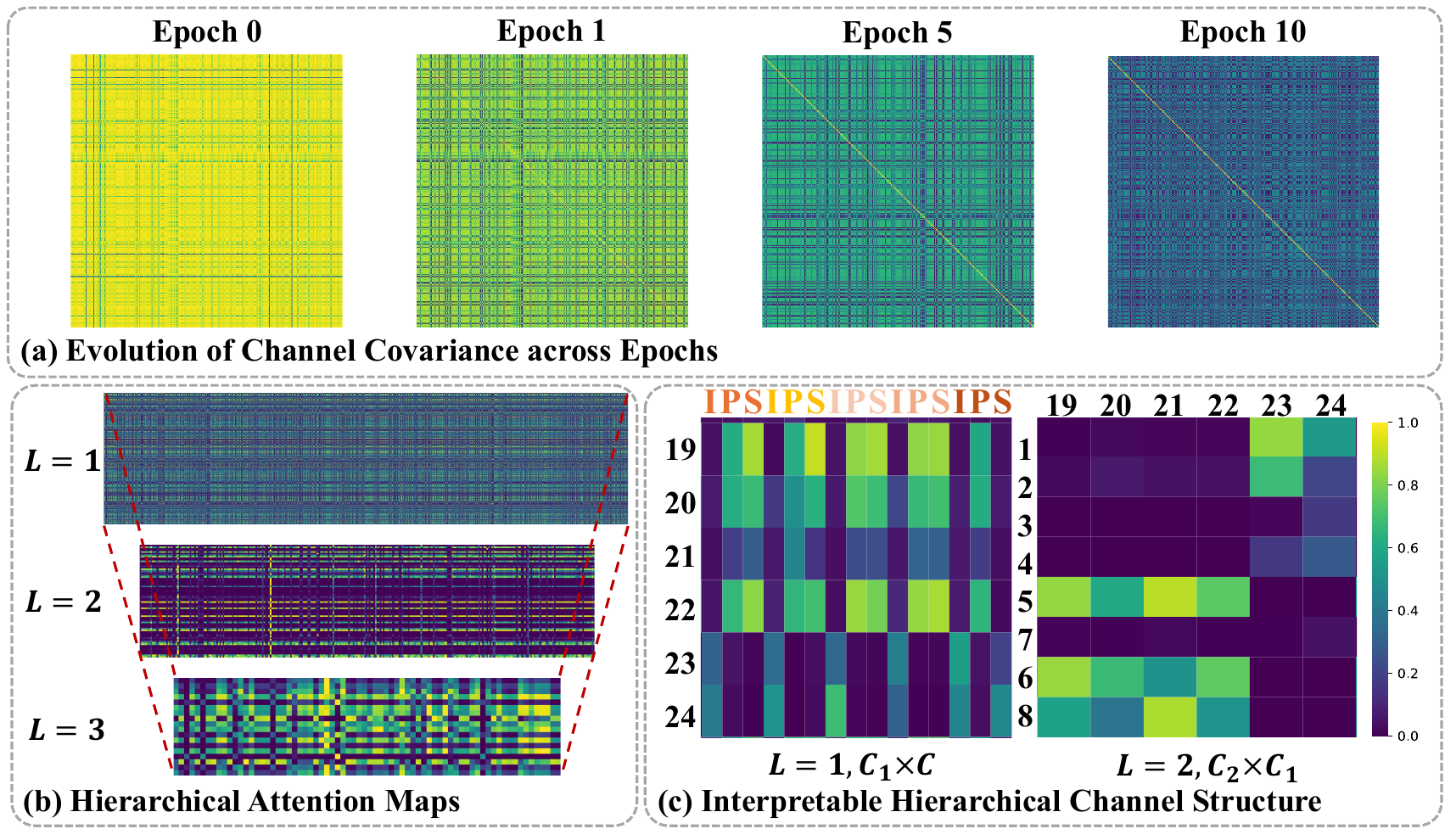}
    \vspace{-2em}
    \caption{A showcase on Measles dataset of full-rank regularization and latent hierarchical channel structure learning.}
    \vspace{-1em}
    \label{fig:case}
\end{wrapfigure}

\textbf{Can \method{} learn latent hierarchical structure among channels?}
As shown in Figure~\ref{fig:case}~(b), we visualize the attention maps at different layers. The attention focuses vary across layers, reflecting a latent hierarchical structure in the learned representations. Furthermore, assigning dimension reduction and multivariate correlation modeling to the attention mechanism improves the interpretability of attention maps. Figure~\ref{fig:case}~(c) shows attention maps from the hierarchical latent query network on the Measles dataset ($C = 1161$, 387 regions × 3 features: \textbf{I}–Inferred Infections, \textbf{P}–Population, \textbf{S}–Suspected Cases). Five regions are selected (colors indicate regions). At $L=1$, the model reduces to $C_1 = 32$ latent dimensions (visualizing 19–24), and at $L=2$, to $C_2 = 8$. At $L=1$, dimensions 19–22 mainly attend to P and S, while 23–24 focus on I. At $L=2$, dimensions 1–4 attend to outputs from 19–22, and 5–8 to those from 23–24. This pattern reflects a learned hierarchy: the model first separates features within regions, then integrates related features across regions, revealing a meaningful latent channel structure.

\section{Conclusion}
In this paper, we propose \method{} to address the unique challenges of effectively modeling these latent hierarchical channel structures within High-Dimensional Time Series Forecasting (HDTSF). To facilitate research in HDTSF and provide a testbed for future innovative approaches, we curate the \dataset{} benchmark. 
Through extensive experiments, we identify and highlight key research directions and opportunities in Appendix~\ref{app:Future Opportunity}. It is our aspiration that \method{}, \dataset{}, \textsc{Time-HD-lib}, and their accompanying resources will serve as a significant catalyst for innovation and progress in the time series community (see Appendix~\ref{app:broader_impact} for broader impact).

\bibliography{iclr2026_conference}
\bibliographystyle{iclr2026_conference}

\appendix
\newpage
\section{Comparison with Existing Works}
\label{app:Comparison with Existing Works}
\begin{table}[ht]
\centering
\caption{Comparison of datasets in benchmarks and foundation models on four properties:
(1) \textbf{Multivariate} indicates whether the datasets involve multiple variables (more than one channel). 
(2) \textbf{High-Dimensional ($\geq$1000)} refers to the presence of datasets with at least 1000 channels, either for pretraining or evaluation. 
(3) \textbf{Temporal Alignment} denotes whether the datasets are temporally aligned. 
(4) \textbf{Evaluation} assesses whether high-dimensional datasets are used specifically for evaluation purposes.}
\resizebox{\textwidth}{!}{%
\begin{tabular}{lcccc}
\toprule
\textbf{Property} & \textbf{Multivariate} & \textbf{High-Dimensional ($\geq$1000)} & \textbf{Temporal Alignment} & \textbf{Evaluation} \\
\midrule
\textbf{Benchmark} & & & & \\
\quad M3~\citep{m3}           & \xmark & \xmark & \xmark & \xmark \\
\quad M4~\citep{m4}           & \xmark & \xmark & \xmark & \xmark \\
\quad LTSF-Linear~\citep{dlinear}  & \cmark & \xmark & \cmark & \xmark \\
\quad TSlib~\citep{timesnet}        & \cmark & \xmark & \cmark & \xmark \\
\quad BasicTS~\citep{basicts}& \cmark & \xmark & \cmark & \xmark \\
\quad BasicTS+~\citep{basicts}     & \cmark & \xmark & \cmark & \xmark \\
\quad Monash~\citep{monash}       & \xmark & \xmark & \xmark & \xmark \\
\quad Libra~\citep{libra}        & \xmark & \xmark & \xmark & \xmark \\
\quad TFB~\citep{qiu2024tfb}          & \cmark & \cmark~(only one) & \cmark~(only one) & \cmark~(only one) \\
\midrule
\textbf{Foundation Models} & & & & \\
\quad Chronos~\citep{chronos}      & \cmark & \cmark & \xmark & \xmark \\
\quad Moment~\citep{moment}       & \cmark & \cmark & \xmark & \xmark \\
\quad Moirai~\citep{moirai}       & \cmark & \cmark & \xmark & \xmark \\
\quad TimesFM~\citep{timefm}      & \cmark & \cmark & \xmark & \xmark \\
\quad Time-MoE~\citep{time-moe}     & \cmark & \cmark~(only one) & \cmark~(only one) & \cmark~(only one) \\
\midrule
\textbf{Ours} & & & & \\
\quad \textbf{\dataset{}}      & \cmark & \cmark & \cmark & \cmark \\
\bottomrule
\end{tabular}
}
\label{tab:comparison}
\end{table}

\begin{wrapfigure}[]{R}{0.25\textwidth}
    \centering
    \includegraphics[width=1\linewidth]{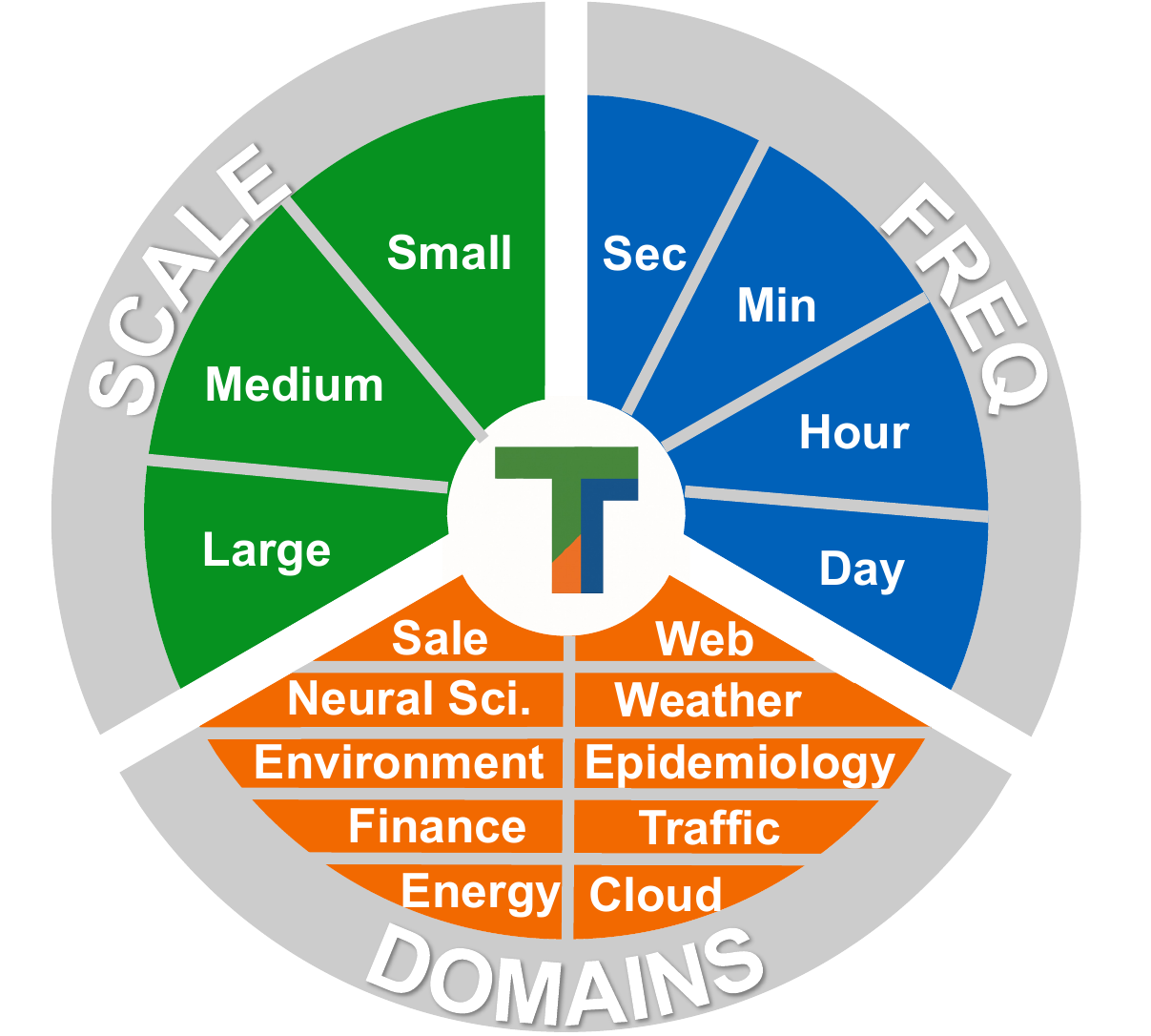}
    \caption{\dataset{} provides diverse high-dimensional datasets.}
    \label{fig:time_hd}
\end{wrapfigure}

\textbf{High-Dimensional Time Series Datasets.} As shown in Table~\ref{tab:comparison}, most existing benchmarks do not include any high-dimensional datasets. The only exception is TFB~\citep{qiu2024tfb}, which incorporates only one dataset with 2000 channels (Wiki2000) for evaluation. Among foundation models, although all of them include high-dimensional time series datasets, these datasets are primarily used for pretraining rather than evaluation. Time-MoE~\citep{time-moe} is the only one that evaluates on a high-dimensional dataset (Global Temp) with 1000 channels. Both TFB and Time-MoE attempt to explore the high-dimensional regime, but they each include \textbf{only one} such dataset. This limited coverage constrains their effectiveness as comprehensive benchmarks.

\textbf{Multivariate Time Series Forecasting.}
\textit{Channel-independent (CI)}: The CI modeling strategy uses a shared model backbone for all channels and processes each channel independently during the forward pass~\citep{patchtst,dlinear,fits,timemixer,pattn,pdf,sparsetsf}. This design generally results in lower model capacity but offers greater robustness. However, since CI models treat each channel independently, they are unable to capture inter-channel correlations explicitly.

\textit{Channel-dependent (CD)}:
In contrast, the CD modeling strategy incorporates modules that are specifically designed to capture dependencies across channels. These methods tend to offer higher model capacity and can leverage cross-channel information more effectively. Based on how the inter-channel correlations are handled, CD approaches can be categorized as follows:
(1) Position-wise~\citep{tsmixer,timesnet,autoformer,informer,fedformer,crossformer}: These methods first project the channel dimension of each time step into a hidden embedding space, resulting in one embedding per time step. This approach may ignore correlations across channels at different time steps and could introduce noise. After generating the embeddings, these methods mainly focus on modeling temporal dependencies.
(2) Token-wise~\citep{itransformer,timexer}: These methods treat each channel as a token and input the sequence of tokens into a Transformer model. This helps reduce irrelevant noise for forecasting. Although multi-layer Transformers have the potential to capture complex hierarchical inter-channel structures, they are extremely time-consuming and do not scale well to high-dimensional time series. Moreover, the lack of an explicit mechanism for learning hierarchical channel structure may limit their effectiveness.
(3) Cluster-wise~\citep{ccm,duet}: These approaches divide the multivariate time series into disjoint channel clusters using clustering algorithms. However, using a single-layer clustering step may overlook the hierarchical organization present in high-dimensional time series, which can lead to suboptimal performance.

\section{Future Opportunity}
\label{app:Future Opportunity}
\textbf{Scaling Forecasting Models to High-Dimensional Time Series.}
Most existing forecasting models are developed and benchmarked on low-dimensional datasets, which restricts their practical relevance in high-dimensional applications. As the number of variables increases, computational and memory bottlenecks become more prominent, especially for models employing attention mechanisms or dense channel-wise operations. Future research can prioritize the development of architectures that maintain accuracy while achieving sub-quadratic complexity in the number of channels. In particular, scalable attention mechanisms, parameter-efficient representations, and adaptive channel selection strategies are promising directions.

\textbf{Improving the Ability of Models to Capture Inter-Channel Correlations.}
High-dimensional time series often exhibit rich and structured inter-channel dependencies, including spatial, semantic, and hierarchical patterns. While \method{} demonstrates that learning latent hierarchical structures can be effective, there remains room to improve in two aspects: (1) dynamic correlation modeling that adapts to temporal shifts in inter-variable relationships, and (2) domain-aware inductive biases that can leverage prior knowledge (e.g., spatial topology, sectoral information) to guide the learning of channel relationships. Better capturing these correlations can lead to models that are both more interpretable and more robust to distributional changes.

\textbf{Benchmarking and Standardization for High-Dimensional Forecasting.}
Despite the introduction of the \dataset{} benchmark, the evaluation of forecasting models in high-dimensional regimes remains underdeveloped. Future work should explore more comprehensive evaluation protocols, including robustness to missing channels, adaptation to distribution drift, and generalization to unseen domains. Additionally, standardized metrics for scalability and memory efficiency, beyond predictive accuracy, are needed to holistically compare models in real-world deployment scenarios.

\section{Broader Impact}
\label{app:broader_impact}
This work reframes time series forecasting (TSF) around high-dimensional TSF (HDTSF), where models must reason over thousands of channels and their structure. By curating \textsc{Time-HD} across more than ten domains and large channel counts, we move beyond the past practice of training and testing only on small, low-dimensional benchmarks. This shift makes evaluation more reliable, reveals failure modes that only appear at scale, and brings assessment closer to real deployments in sensor networks, markets, and geoscience. It also encourages the community to measure real gains rather than benchmark-specific gains.

\dataset{} and \textsc{Time-HD-Lib} are released to lower the barrier to entry and raise reproducibility. The dataset suite standardizes timestamps, horizons, and metrics; the library offers a distributed train–validate–test pipeline with common preprocessing and reference configs. Together, they let researchers and practitioners start quickly, scale experiments without bespoke infrastructure, and add new datasets and tasks that broaden HDTSF coverage. This shared stack supports fair comparisons and faster iteration on modeling ideas.

Methodologically, \method{} serves as a strong baseline for future work on HDTSF. Its design targets cross-channel structure and efficiency, yielding competitive accuracy with favorable speed and memory use. We recommend using \method{} as a starting point for ablations and new architectures, reporting uncertainty alongside point forecasts, and disaggregating results by domain. Users should watch for dataset bias and compute cost; the released pipeline helps by making runs reproducible and efficient. Overall, the task, resources, and baseline aim to help the TSF community build and evaluate models that scale to realistic settings.

\section{Datasets Description}
\label{app:Dataset Description}
\underline{\textbf{Temp} (Global Temp) and \textbf{Wind} (Global Wind).}
We adapt this dataset from Corrformer~\citep{wu2023interpretable}. Global Temp\&Wind dataset is from the National Centers for Environmental Information (NCEI)~\footnote{https://www.ncei.noaa.gov/}. This dataset contains the hourly averaged wind speed and
hourly temperature of 3,850 stations (7,700 dimensions) around the world from 01/01/2019 to 12/31/2020, 17,544 timesteps in total. This dataset is well established. However, we find there are some constant columns in the datasets, which means the temp or wind speed never change. We remove these columns.

\underline{\textbf{Solar} (NREL Solar Power).} NREL Solar Power is collected from National Renewable Energy Laboratory (NREL)~\footnote{https://www.nrel.gov/grid/solar-power-data.html}. One of the most popular datasets for time series forecasting is Solar-Energy~\citep{lstnet}, which is sourced from NREL. However, it includes data from only 137 solar photovoltaic (PV) plants in Alabama, limiting the dataset to 137 dimensions. To increase the dimensionality, we collect data from 5,166 PV plants across 47 U.S. states, excluding Alaska, Hawaii, and North Dakota, using data from NREL. The data source is clean so we don't conduct further preprocessing on it.

\underline{\textbf{Meters} (Smart meters in London).} We process half-hourly smart meter energy consumption data by loading data from multiple blocks, where each block represents a subset of meters, and aggregating energy consumption data for each meter. The energy unit is kWh/hh. The data is sorted chronologically and merged into a single dataset, consisting of \textit{5,567 meters and 40,405 timesteps}.
\textbf{First}, to address missing values introduced by the staggered installation of meters, we filter out records before 07/14/2012, as meters were installed at different times starting from 11/23/2011. We also ensure that only records with exact half-hour intervals are retained, resulting in a dataset of \textit{5,567 meters and 28,512 timesteps}.
\textbf{Second}, we remove meters that became operational after 07/14/2012 or stopped reporting before 02/28/2014, reducing the dataset to \textit{4,213 meters while maintaining 28,512 timesteps}.
\textbf{Third}, we count missing values and remove meters with more than 10 missing values. The remaining missing values are replaced with zero, yielding a final dataset of \textit{2,899 meters and 28,512 timesteps}.
The cleaned and processed dataset is then saved for further analysis.

\underline{\textbf{Traffic-CA} (LargeST-CA), \textbf{Traffic-GLA} (LargeST-GLA), \textbf{Traffic-GBA} (LargeST-GBA).} The source of this dataset is the Caltrans Performance Measurement System (PeMS)\footnote{https://pems.dot.ca.gov/}, and we utilize its preprocessed version\citep{liu2023largest}. 
\textbf{LargeST-CA} encompassing a total of 8,600 sensors in California and each sensor contains five years of traffic flow data (from 2017 to 2021) with a 5-minute
interval, resulting in a total of 525,888 time frames. 
\textbf{LargeST-CA consists of three subsets: }
\textbf{(1)} \textbf{LargeST-GLA} includes 3,834 sensors installed across five counties in the Greater Los Angeles (GLA) area: Los Angeles, Orange, Riverside, San Bernardino, and Ventura. 
\textbf{(2)} \textbf{LargeST-GBA} contains 2,352 sensors distributed across 11 counties in the Greater Bay Area (GBA): Alameda, Contra Costa, Marin, Napa, San Benito, San Francisco, San Mateo, Santa Clara, Santa Cruz, Solano, and Sonoma.
\textbf{(3)} \textbf{LargeST-SD}, the smallest subset, consists of 716 sensors located exclusively in San Diego County.
Since LargeST-SD has only 716 dimensions, we do not include it in our benchmark.
For the 525,888 original time frames, we aggregate the data into 1-hour intervals, resulting in 43,824 time frames. We then remove sensors with more than 100 missing (NaN) values, and apply linear interpolation to fill the remaining missing values.


\underline{\textbf{Neurolib.}} Simulations made use of the Neurolib code \citep{cakan2021}. We utilize the brain structure in ABIDE dataset~\citep{craddock2013neuro}.

\underline{\textbf{SIRS.}} Simulations extend the classical SIR model \citep{cai2015stochastic} by incorporating temporary immunity, resulting in dynamic transitions among Susceptible, Infected, and Recovered groups within a single region. To enhance realism, parameter noise, and seasonal variation are included. For multiple regions, each compartment has a fixed transfer rate between regions to simulate commuting and spatial transmission dynamics.

\underline{\textbf{Air Quality} (Chinese Air Quality).} This dataset includes 7 air quality indicators: $AQI$, $CO$, $NO_2$, $O_3$, $PM10$, $PM2.5$, and $SO_2$ of 786 distinct cities in China. We remove dimensions with more than 100 missing values and interpolate other dimensions linearly, resulting in 1,106 dimensions.

\underline{\textbf{SP500} (S\&P 500 Index).} The S\&P 500 is a stock market index maintained by S\&P Dow Jones Indices~\footnote{https://www.spglobal.com/spdji/en/}. It consists of 503 common stocks issued by 500 large-cap companies traded on American stock exchanges. We use yfinance~\footnote{https://github.com/ranaroussi/yfinance}, a Python package for financial data retrieval, to download market data for these stocks from Yahoo Finance~\footnote{https://finance.yahoo.com/}. We select the latest S\&P 500 company list (as of 03/14/2025) and extract daily market data spanning the past 30 years (7,553 days). To ensure data consistency, we retain only the companies that were publicly traded 30 years ago, reducing the dataset to 295 companies. For each company, we extract five key market variables: Open, Close, High, Low, and Volume, resulting in a total of 1,475 dimensions.

\underline{\textbf{M5.}} We adapt this dataset from a Kaggle competition~\citep{m5-forecasting-accuracy}. It covers stores in three U.S. states, i.e. California, Texas, and Wisconsin, and includes item-level sales data, department and product category details, and store information. The dataset records daily sales for each item from 07/01/2015 to 06/30/2022 (a total of 1,947 days) across different departments and stores, resulting in 30,490 time series. Due to its sparsity (i.e., a large number of zero values), we aggregate sales by summing each item's sales across all departments and stores, reducing the dataset to 3,049 aggregated items.

\underline{\textbf{Atec.}} The dataset consists of 1589 traffic data collected every 10 minutes for different zones of different applications. We remove the dimensions that include more than 10 missing values and conduct interpolation for the remaining dimensions.

\underline{\textbf{Measles} (Measles England).} The measles dataset \citep{madden2024deep, liu2025cape, liu2024epilearn}, contains biweekly measles infections from 387 regions across England and Wales from 1944–1965. For each region, the dataset includes three features: inferred infections, population, and suspected cases, resulting in 1,161 dimensions.

\underline{\textbf{Wiki-20k} (Wikipedia Web Traffic).} The dataset is adapted from the Monash Time Series Forecasting Repository~\footnote{https://zenodo.org/records/7370977}. It comprises 145,063 time series representing web traffic for various Wikipedia pages from July 1, 2015, to June 30, 2022 (2,557 days). This dataset is an extended version of the one used in the Kaggle Wikipedia Web Traffic Forecasting Competition~\footnote{https://www.kaggle.com/c/web-traffic-time-series-forecasting/data}. Due to data sparsity, we remove the Wikipedia pages containing NaN values, resulting in a final dataset of 112,333 Wikipedia pages, each corresponding to one dimension. Due to the original dataset's dimension is too large and is not correlated, we select the first 20k dimensions to construct Wiki-20k dataset for evaluation.

\underline{\textbf{Mobility} (Google Community Mobility).} The Community Mobility dataset~\citep{GoogleMobility} aims to assist in analyzing the impact of COVID-19 on community movement patterns. 
The dataset covers movement trends over time (from February 15, 2020, to October 15, 2022, spanning 974 days) across 4,334 regions in different countries. It categorizes mobility data into \textbf{six} place types: \textbf{retail and recreation} (restaurants, cafes, shopping centers, theme parks, museums, libraries, and movie theaters), \textbf{groceries and pharmacies} (grocery stores, food warehouses, farmers markets, specialty food shops, drug stores, and pharmacies), \textbf{parks} (local and national parks, public beaches, marinas, dog parks, plazas, and public gardens), \textbf{transit stations} (subway, bus, and train stations), \textbf{workplaces} (places of employment), and \textbf{residential} (places of residence). 
The dataset initially consists of 26,004 dimensions, derived from the six place categories across 4,334 regions. However, due to data sparsity, we exclude regions containing NaN values, reducing the dataset to 971 regions and 5,826 dimensions.
The values in the dataset represent changes in visit frequency and duration of stay at various locations relative to the baseline (i.e. the median value for the same day of the week during the five-week period from January 3 to February 6, 2020.). These changes are computed using aggregated and anonymized data, similar to the methodology employed in Google Maps to estimate popular visit times.

\paragraph{Prediction Length $S$}
Earlier benchmarks commonly evaluated the same four horizons (96, 192, 336, 720) across all datasets, regardless of sampling frequency. 
This practice can yield scenarios with limited practical relevance, for example, requiring a weather model to predict 720 days ahead. 
In contrast, we align the prediction horizon with the data resolution so that each task corresponds to a \textit{realistic operational need}:

\begin{itemize}[]
    \item 1\,ms–30\,min data $\;\to\;$ $S=336$ steps (0.34\,s to 28\,h)
    \item 1\,h data $\;\to\;$ $S=168$ steps (7\,days)
    \item 6\,h data $\;\to\;$ $S=28$ steps (7\,days)
    \item 1\,day data $\;\to\;$ $S=7$ steps (1\,week)
\end{itemize}

These settings, listed under ``\textit{Pred Length}'' in Table~\ref{tab:dataset}, reflect forecasting horizons commonly adopted by domain experts (e.g., weekly planning in retail or epidemiology).

\section{Analysis of Channel Correlation in \dataset{}}
\label{sec:Channel Correlation Statistic}
We follow TFB~\citep{qiu2024tfb} to compute channel correlation. In multivariate time series, different variables often share common temporal patterns due to underlying shared factors. To quantify this, we extract 22 features per channel using Catch22~\citep{lubba2019catch22}. These features serve as fixed-length representations of each time series channel. We then compute the pairwise Pearson correlation coefficients~\citep{cohen2009pearson} between all channel representations and report the mean and variance. 

The results reported in the ``\textit{Correlation}'' column of Table~\ref{tab:dataset} show that all datasets exhibit \textbf{high average channel correlation}, reflecting strong \textit{inter-variable dependencies} and a dominant \textit{global component} shared across channels. While this global component is highly predictable, it provides little additional information for forecasting, since it largely repeats patterns common to all variables. When such shared trends dominate, they can overshadow the subtle yet critical channel-specific dynamics that ultimately determine predictive accuracy. Consequently, a model that focuses primarily on global trends will miss the fine-grained deviations that differentiate one channel from another. These findings motivate the need for models that explicitly disentangle and separate the global component from channel-specific signals while also capturing structured inter-channel dependencies.

\paragraph{Modeling Strategy in \method{}.}
\method{} addresses this challenge through an architecture that separates the modeling of unique channel dynamics from that of shared global trends (see Figure~\ref{fig:method}). Specifically:
\begin{itemize}
    \item \textbf{Modeling Hierarchical Inter-Channel Dependency:} The \textit{Hierarchical Latent Query Network} compresses the input into a low-dimensional latent space, where the \textit{Full-Rank Regularization} (Section~\ref{sec:Optimization Objective}) enforces disentanglement and removes redundancy. The subsequent \textit{Temporal Alignment} module then captures the essential channel-specific temporal patterns within this compact space.
    \item \textbf{Modeling Shared Trends:} After unique inter-channel dynamics are learned, the \textit{Hierarchical Upsampling Network} and final \textit{Output Projection} layer (Equation~\ref{eq:output_projection}) reconstruct the forecasts for all channels. With a residual connection to the original embeddings, the projection layer re-applies the simple global trends on top of the channel-specific predictions from the latent space.
\end{itemize}

In essence, \method{} decomposes the problem: it compresses and disentangles redundant global components to focus on informative inter-channel dynamics, while leveraging the output projection to efficiently recover global trends. This separation is not a loss of information but a deliberate design choice that enables accurate and robust forecasting in high-dimensional, highly correlated time series.

We also want to highlight that, in the real world, there indeed exist time series collections with \textbf{low inter-channel correlations}. However, merging such weakly related time series into a single dataset is usually not meaningful because they do not provide complementary predictive information to each other. In contrast, the datasets in \dataset{} are specifically collected from scenarios with strong inter-channel correlations, where exploiting these correlations can meaningfully improve forecasting performance.

\section{Implementation Details}
\label{sec:Implementation Details}
\textbf{Metric Details.} We use Mean Square Error (MSE) and Mean Absolute Error (MAE) as our evaluation metrics, following~\citep{itransformer, moderntcn, timemixer, timesnet, autoformer, informer, dlinear}:
\begin{equation}
    \text{MSE} = \frac{1}{S \times C}\sum_{i=1}^S\sum_{j=1}^C(\mathbf{Y}_{ij}-\hat{\mathbf{Y}}_{ij})^2,
\end{equation}
\begin{equation}
    \text{MAE} = \frac{1}{S \times C}\sum_{i=1}^S\sum_{j=1}^C|\mathbf{Y}_{ij}-\hat{\mathbf{Y}}_{ij}|.
\end{equation}
Here, $\mathbf{Y} \in \mathbb{R}^{S \times C}$ represents the ground truth, and $\hat{\mathbf{Y}} \in \mathbb{R}^{S \times C}$ represents the predictions. $S$ denotes the future prediction length, $C$ is the number of channels, and $Y_{ij}$ indicates the value at the $i$-th future time point for the $j$-th channel.

\textbf{Experiment Details.} All experiments are implemented in PyTorch~\citep{paszke2019pytorch} and conducted on a cluster equipped with 8 NVIDIA A100 GPUs and several NVIDIA H100 GPUs. 
The baselines are trained using their default configurations as reported in their respective papers, and with further hyperparameter searching (Learning Rate in \{0.01, 0.001, 0.0001\}, Input Sequence Length $T$ in \{$3\times S$, $4\times S$, $5\times S$\}), where $S$ is prediction length.
ADAM optimizer~\citep{kingma2014adam} is used with MSE loss for the training of all the models. 
We apply early stopping with a patience value of 5 epochs. 
The batch size is initially set to 32. If an out-of-memory (OOM) error occurs, the batch size is automatically halved until the issue is resolved.
For \method{}, we default to using non-stationary~\citep{liu2022non} normalization for all datasets. We default to set the number of layers $L$ to 2, the hidden dimension $d$ to 512, and the reduction ratio $r$ to 16 for all the datasets.
Additional detailed \method{} model configuration information is presented in Table~\ref{tab:hyperparams}.
We conduct hyperparameter sensitivity analysis in Appendix~\ref{app:hyperparameter_sensitivity}. 
For robustness, we perform five-seed runs on all datasets and calculate the standard deviation. As shown in Table~\ref{tab:ucast_results_sd}, \method{} trains reliably across all cases.

\begin{table}[ht]
\centering
\caption{Performance of \method{} (five-seed runs) on all datasets. Results are reported as mean $\pm$ standard deviation.}
\resizebox{0.5\textwidth}{!}{%
\begin{tabular}{lcc}
\toprule
\textbf{Dataset} & \textbf{MSE} & \textbf{MAE} \\
\midrule
Atec         & $0.290 \pm 0.001$ & $0.278 \pm 0.002$ \\
Air Quality  & $0.420 \pm 0.005$ & $0.411 \pm 0.004$ \\
Temp         & $0.263 \pm 0.002$ & $0.384 \pm 0.001$ \\
Wind         & $1.078 \pm 0.003$ & $0.686 \pm 0.002$ \\
Mobility     & $0.317 \pm 0.004$ & $0.324 \pm 0.003$ \\
Traffic-CA   & $0.060 \pm 0.001$ & $0.134 \pm 0.002$ \\
Traffic-GBA  & $0.059 \pm 0.000$ & $0.128 \pm 0.000$ \\
Traffic-GLA  & $0.059 \pm 0.000$ & $0.135 \pm 0.001$ \\
M5           & $3.472 \pm 0.001$ & $0.843 \pm 0.000$ \\
Measles      & $0.014 \pm 0.005$ & $0.050 \pm 0.004$ \\
Neurolib     & $1.747 \pm 0.005$ & $0.349 \pm 0.006$ \\
Solar        & $0.159 \pm 0.001$ & $0.220 \pm 0.002$ \\
SIRS         & $0.006 \pm 0.000$ & $0.048 \pm 0.001$ \\
Meters       & $0.947 \pm 0.011$ & $0.550 \pm 0.009$ \\
SP500        & $0.501 \pm 0.008$ & $0.305 \pm 0.006$ \\
Wiki-20k     & $10.275 \pm 0.005$ & $0.302 \pm 0.004$ \\
\bottomrule
\end{tabular}
}
\label{tab:ucast_results_sd}
\end{table}

\begin{table}[h]
\centering
\caption{\method{} hyperparameters used for each dataset.}
\resizebox{0.7\textwidth}{!}{%
\begin{tabular}{lccc}
\toprule
\textbf{Dataset} & \textbf{Learning Rate} & \textbf{Input Sequence Length $T$} & \boldmath$\alpha$ \\
\midrule
Atec                             & 0.0005 & $3\times S$ & 0.001  \\
Air Quality              & 0.0005 & $3\times S$ & 0.100  \\
Temp                     & 0.001 & $3\times S$ & 0.001  \\
Wind                     & 0.0005 & $4\times S$ & 0.010  \\
Mobility     & 0.0005 & $4\times S$ & 0.010  \\
Traffic-CA                      & 0.002 & $4\times S$ & 0.010  \\
Traffic-GBA                     & 0.002 & $4\times S$ & 0.010  \\
Traffic-GLA                     & 0.002 & $4\times S$ & 0.010  \\
M5                               & 0.001 & $4\times S$ & 10.000 \\
Measles                & 0.0005 & $4\times S$ & 0.010  \\
Neurolib                         & 0.0005 & $4\times S$ & 0.010  \\
Solar              & 0.001 & $4\times S$ & 0.010  \\
SIRS                             & 0.001 & $4\times S$ & 0.001  \\
Meters       & 0.0005 & $4\times S$ & 0.010  \\
SP500                            & 0.002 & $4\times S$ & 0.001  \\
Wiki-20k  & 0.001 & $4\times S$ & 0.001  \\
\bottomrule
\end{tabular}
}
\label{tab:hyperparams}
\end{table}

\section{Theorem~\ref{thm:risk_reduction} Proof}
\label{app:Theorem Proof1}
\begin{proof}
Our goal is to forecast 
$z_t = \begin{pmatrix} z_t^{(1)} \\ z_t^{(2)} \end{pmatrix}.$
For simplicity, we focus on forecasting \( z_{t+1}^{(1)} \); forecasting \( z_{t+1}^{(2)} \) is analogous. In a single forward process, the CI forecaster observes only \( x = z_t^{(1)} \) and does \textbf{not} observe \( w = z_t^{(2)} \). The optimal forecast under squared-error loss is the conditional expectation $\mathbb E[z_{t+1}^{(1)} \mid x]$. From the Equation~\ref{eq:var_system}, $z_{t+1}^{(i)} = a_{11}x + a_{12}w + \varepsilon$, where $\varepsilon = \varepsilon_{t+1}^{(1)} \sim \mathcal N(0, \sigma_{11})$.
The conditional distribution of $w$ given $x$ is Gaussian. The optimal forecast is:
\[
  \mathbb E[z_{t+1}^{(1)} \mid x] = \mathbb E[a_{11}x + a_{12}w + \varepsilon \mid x] = a_{11}x + a_{12}\mathbb E[w \mid x] + \mathbb E[\varepsilon \mid x].
\]
Since $\varepsilon$ is independent of $x$ (and $w$), $\mathbb E[\varepsilon \mid x] = \mathbb E[\varepsilon] = 0$ (as $\varepsilon$ has zero mean). Thus,
\[
  \mathbb E[z_{t+1}^{(1)} \mid x] = a_{11}x + a_{12}\mathbb E[w \mid x].
\]
The CI Bayes risk is the expected squared error of this optimal forecast:
\begin{align*}
  R_{\text{CI}} &= \mathbb E[(z_{t+1}^{(1)} - \mathbb E[z_{t+1}^{(1)} \mid x])^2] \\
  &= \mathbb E[(a_{11}x + a_{12}w + \varepsilon - (a_{11}x + a_{12}\mathbb E[w \mid x]))^2] \\
  &= \mathbb E[(a_{12}(w - \mathbb E[w \mid x]) + \varepsilon)^2] \\
  &= \mathbb E[a_{12}^2(w - \mathbb E[w \mid x])^2] + \mathbb E[\varepsilon^2] + 2a_{12}\mathbb E[(w - \mathbb E[w \mid x])\varepsilon].
\end{align*}
Again, using the independence of $\varepsilon$ from $w$ and $x$ (and thus from $\mathbb E[w \mid x]$), the cross-term expectation is zero: $\mathbb E[(w - \mathbb E[w \mid x])\varepsilon] = \mathbb E[w - \mathbb E[w \mid x]]\mathbb E[\varepsilon] = 0$. The first term is $a_{12}^2$ times the definition of the conditional variance, $\Var(w \mid x) = \mathbb E[(w - \mathbb E[w \mid x])^2]$, since $w$ has zero mean (due to normalization) and $\mathbb E[w \mid x]$ is the conditional mean. Therefore,
\begin{align*}
  R_{\text{CI}} &= a_{12}^2 \Var(w \mid x) + \mathbb E[\varepsilon^2] \\
  &= \sigma_{11} + a_{12}^2 \Var(w \mid x).
\end{align*}
Note that our assumption that the time series are normalized (zero mean) simplifies the interpretation but the structure of the result holds more generally.

In a single forward process, the CD forecaster observes \textbf{both} $x = z_t^{(1)}$ and $w = z_t^{(2)}$. The optimal forecast is $\mathbb E[z_{t+1}^{(1)} \mid x, w]$.
\[
  \mathbb E[z_{t+1}^{(1)} \mid x, w] = \mathbb E[a_{11}x + a_{12}w + \varepsilon \mid x, w] = a_{11}x + a_{12}w + \mathbb E[\varepsilon \mid x, w].
\]
Since $\varepsilon = \varepsilon_{t+1}^{(1)}$ is independent of both $x=z_t^{(1)}$ and $w=z_t^{(2)}$, $\mathbb E[\varepsilon \mid x, w] = \mathbb E[\varepsilon] = 0$. Thus,
\[
 \mathbb E[z_{t+1}^{(1)} \mid x, w] = a_{11}x + a_{12}w.
\]
The CD Bayes risk is:
\begin{align*}
  R_{\text{CD}} &= \mathbb E[(Y - \mathbb E[z_{t+1}^{(1)} \mid x, w])^2] \\
  &= \mathbb E[(a_{11}x + a_{12}w + \varepsilon - (a_{11}x + a_{12}w))^2] \\
  &= \mathbb E[\varepsilon^2] = \sigma_{11}.
\end{align*}

Comparing the two risks, we see that the reduction in risk from using the CD compared to the CI one is:
\[
  R_{\text{CI}} - R_{\text{CD}} = (\sigma_{11} + a_{12}^2 \Var(w \mid x)) - \sigma_{11} = a_{12}^2 \Var(w \mid x).
\]
This difference is non-negative since $a_{12}^2 \ge 0$ and the conditional variance $\Var(w \mid x) \ge 0$. The risk reduction is strictly positive if $a_{12} \neq 0$ and $\Var(w \mid x) > 0$. This occurs when the second channel ($w$) has a non-zero direct influence on the first channel in the next period ($a_{12} \neq 0$) and when the second channel carries some information not already present in the first channel (i.e., $w$ is not perfectly predictable from $x$, ensuring $\Var(w \mid x) > 0$). If $a_{12} = 0$, or if $w$ is perfectly predictable from $x$, the risks are equal. Similarly, if we focus on forecasting $z_{t+1}^{(2)}$, we have:
\[
  R_{\text{CI}} - R_{\text{CD}} = (\sigma_{22} + a_{21}^2 \Var(w \mid x)) - \sigma_{22} = a_{21}^2 \Var(x \mid w).
\]

Thus, $\sum_i^{2}(R_{\text{CI}} - R_{\text{CD}})=a_{12}^2 \Var(w \mid x)  + a_{21}^2 \Var(x \mid w)>0$
This example demonstrates that, under squared-error loss for a linear Gaussian VAR model, using the CD leads to a Bayes risk that is always less than or equal to the Bayes risk obtained using only the CI.

\end{proof}

\section{Theorem~\ref{thm:monotonicity} Proof}
\label{app:Theorem Proof2}

\begin{proof}
Let $\mathcal{F}_p = \sigma(z_t^{(1)}, \dots, z_t^{(p)})$ denote the sigma-field generated by the first $p$ channels at time $t$. The Bayes risk under squared-error loss using information set $\mathcal{F}_p$ is:
\[
  R_p = \mathbb{E}\left[(Y - \mathbb{E}[Y \mid \mathcal{F}_p])^2\right].
\]

Since conditional expectation minimizes mean squared error, and since the information sets are nested ($\mathcal{F}_1 \subseteq \dots \subseteq \mathcal{F}_P$), we have:
\[
  R_1 \ge R_2 \ge \dots \ge R_P = \mathbb{E}[(\varepsilon_{t+1}^{(1)})^2] = \sigma_{11},
\]
where the last equality holds because $\mathcal{F}_P$ fully determines the conditional mean of $Y$, and the only uncertainty is from the innovation $\varepsilon_{t+1}^{(1)}$.

Define the risk reduction relative to the univariate forecast as:
\[
  \Delta_p = R_1 - R_p.
\]
The monotonicity of $R_p$ implies that $\Delta_p$ is non-decreasing:
\[
  0 = \Delta_1 \le \Delta_2 \le \dots \le \Delta_P = R_1 - \sigma_{11}.
\]

To characterize when the inequality is strict, note that $\Delta_p > \Delta_{p-1}$ if and only if:
\[
  R_p < R_{p-1}.
\]
This occurs if the $p$-th channel $z_t^{(p)}$ provides additional information about $Y$ beyond what is already available from $z_t^{(1)}, \dots, z_t^{(p-1)}$. In the VAR(1) context, this requires:
\begin{itemize}
  \item $a_{1p} \ne 0$, so that $z_t^{(p)}$ directly affects $Y$;
  \item $\Var(z_t^{(p)} \mid z_t^{(1)}, \dots, z_t^{(p-1)}) > 0$, so that $z_t^{(p)}$ is not a deterministic function of the previous channels.
\end{itemize}

If either condition fails, then $z_t^{(p)}$ does not improve the forecast of $Y$, and $R_p = R_{p-1}$. Therefore, each strictly decreasing step in Bayes risk corresponds to the inclusion of a new, informative, and non-redundant channel.
\end{proof}

\section{Theorem~\ref{thm:fullrank} Proof}
\label{app:Theorem Proof3}
\begin{proof}
Write the compact singular–value decomposition of
\( \mathbf{H}^{(\ell-1)} \) as
\(
\mathbf{H}^{(\ell-1)}
   = \mathbf{U}\,\mathbf{\Sigma}\,\mathbf{V}^{\top},
\)
where
\(
\mathbf{U}\in\mathbb{R}^{C\times r},\;
\mathbf{\Sigma}\in\mathbb{R}^{r\times r}\;
\text{is diagonal with positive entries},\;
\mathbf{V}\in\mathbb{R}^{d\times r},
\)
and the columns of \( \mathbf{U} \) form an orthonormal basis of the
row‑space of \( \mathbf{H}^{(\ell-1)} \).
Because \( r\ge C' \) by assumption, we can pick a
matrix \( \mathbf{Q}\in\mathbb{R}^{C'\times C} \) of full row rank
whose rows are any \( C' \) independent linear
combinations of the rows of \( \mathbf{U}^{\top} \).
Formally, let
\( \mathbf{S}\in\mathbb{R}^{C'\times C} \)
select those \( C' \) rows (so \( \mathbf{S}\mathbf{S}^{\top}=\mathbf{I}_{C'} \)),
and set \( \mathbf{Q}= \mathbf{S} \).
Then
\[
\mathbf{H}^{(\ell)}
  = \mathbf{Q}\mathbf{H}^{(\ell-1)}
  = \mathbf{S}\mathbf{U}\,\mathbf{\Sigma}\,\mathbf{V}^{\top}.
\]
Since \( \mathbf{S}\mathbf{U}\in\mathbb{R}^{C'\times r} \)
has full row rank \( C' \),
\[
\operatorname{rank}\!\bigl(\mathbf{H}^{(\ell)}\bigr)
  = \operatorname{rank}\!\bigl(\mathbf{S}\mathbf{U}\bigr)
  = C',
\]
which proves the first claim.
Assume further that \( r\ge d \) and \( C'\ge d \),
so the maximal attainable row rank is \( d \).
Define
\(
\mathbf{\Sigma}^{(\ell)}
      = \tfrac{1}{d}\mathbf{H}^{(\ell)}\mathbf{H}^{(\ell)\!\top}
      \in\mathbb{R}^{C'\times C'}.
\)
Adding the penalty
\(
\mathcal{L}_{\mathrm{cov}}
      = -\tfrac{1}{C'}\log\det\!\bigl(\mathbf{\Sigma}^{(\ell)}+\varepsilon
      \mathbf{I}_{C'}\bigr)
\)
in training maximises
\(
\det\!\bigl(\mathbf{\Sigma}^{(\ell)}\bigr)
      = \prod_{i=1}^{C'}\sigma_i^{2},
\)
where \( \sigma_i \) are the singular values of
\( \mathbf{H}^{(\ell)} \).
Because the product of the singular values grows when any
\( \sigma_i \) that is near zero increases,
gradient descent on
\(
-\log\det(\cdot)
\)
pushes all
\( \sigma_i \)
away from zero,
thereby driving \( \mathbf{H}^{(\ell)} \)
toward full row rank
\( \min(C',d) \).

\paragraph{Compatibility with HLQN attention.}
In HLQN the $\ell$-th block produces
\(
\mathbf{H}^{(\ell)}
      = W_o
        \!\bigl(
        \operatorname{softmax}(
          W_q\mathbf{Q}_\ell\,
          (W_k\mathbf{H}^{(\ell-1)})^{\!\top}/\sqrt{d_h})
        \,W_v\mathbf{H}^{(\ell-1)}
        \bigr).
\)
Write the attention weights as
\(
\mathbf{A}(\mathbf{H}^{(\ell-1)}) \;=\;
\operatorname{softmax}\bigl(
          W_q\mathbf{Q}_\ell\,
          (W_k\mathbf{H}^{(\ell-1)})^{\!\top}/\sqrt{d_h}\bigr)
      \in\mathbb{R}^{C_\ell\times C}.
\)
Then
\[
\mathbf{H}^{(\ell)}
  = \underbrace{%
       W_o\mathbf{A}(\mathbf{H}^{(\ell-1)})W_v}_{\displaystyle
       \mathbf{Q}(\mathbf{H}^{(\ell-1)})\in\mathbb{R}^{C_\ell\times C}}
     \,\mathbf{H}^{(\ell-1)}.
\]
Hence each forward pass is algebraically identical to
left-multiplying \(\mathbf{H}^{(\ell-1)}\) by a
sample-dependent matrix \(\mathbf{Q}(\mathbf{H}^{(\ell-1)})\).
As long as \(\mathbf{Q}(\mathbf{H}^{(\ell-1)})\) has full row rank
\(C_\ell\)—a property encouraged by the LogDet regulariser—the rank
analysis above applies verbatim:
\(
\operatorname{rank}\bigl(\mathbf{H}^{(\ell)}\bigr)=C_\ell
\)
and the gradient of the regulariser still pushes every singular value
of \(\mathbf{H}^{(\ell)}\) away from zero.
Therefore the conclusions of Theorem~\ref{thm:fullrank}
remain valid for the Hierarchical Latent Query Network attention block.

\end{proof}

\section{Theorem of Information Reduction}
\label{app:Theorem_information}
\begin{theorem}[Entropy Monotonicity of the Covariance Loss]\label{thm:entropy}
Assume the rows of 
\(
\mathbf{H}^{(\ell)}\in\mathbb{R}^{C'\times d}
\)
follow an i.i.d.\ zero-mean Gaussian distribution
\(
\mathcal{N}\bigl(\mathbf{0},\mathbf{\Sigma}^{(\ell)}\bigr)
\)
with
\(
\mathbf{\Sigma}^{(\ell)}=
\frac{1}{d}\mathbf{H}^{(\ell)}\mathbf{H}^{(\ell)\!\top}\succ\mathbf{0}.
\)
Define the differential entropy
\[
h^{(\ell)} \;=\; 
\frac{1}{2}\log\!\Bigl((2\pi e)^{C'}\det(\mathbf{\Sigma}^{(\ell)})\Bigr).
\]
Let
\[
\mathcal{L}_{\mathrm{cov}}^{(\ell)}
= -\frac{1}{C'}\log\det\!\bigl(\mathbf{\Sigma}^{(\ell)}+\varepsilon I_{C'}\bigr),
\qquad
\varepsilon>0.
\]
Then, along any differentiable optimisation trajectory
\(t\mapsto\mathbf{H}^{(\ell)}(t)\) that \emph{decreases}
\(\mathcal{L}_{\mathrm{cov}}^{(\ell)}\),
the entropy satisfies
\[
\frac{\mathrm{d}}{\mathrm{d}t}h^{(\ell)}(t)
= -\frac{C'}{2}\,\frac{\mathrm{d}}{\mathrm{d}t}
      \mathcal{L}_{\mathrm{cov}}^{(\ell)}(t)
      \;\ge\;0,
\]
with equality if and only if
\(
\nabla_{\mathbf{H}^{(\ell)}}\mathcal{L}_{\mathrm{cov}}^{(\ell)}= \mathbf{0}.
\)
Hence every strict descent step on \(\mathcal{L}_{\mathrm{cov}}^{(\ell)}\)
strictly increases the Shannon differential entropy of the latent
channel distribution.
\end{theorem}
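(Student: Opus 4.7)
\section*{Proof proposal for Theorem~\ref{thm:entropy}}

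The plan is to verify the claimed identity by pure calculus on log-determinants, and then read off the inequality from the definition of a descent trajectory. First, I would split the differential entropy into a constant part and a data-dependent part by writing
\[
h^{(\ell)}(t) \;=\; \frac{C'}{2}\log(2\pi e) \;+\; \frac{1}{2}\log\det\!\bigl(\mathbf{\Sigma}^{(\ell)}(t)\bigr),
\]
so that \(\tfrac{\mathrm d}{\mathrm dt}h^{(\ell)} = \tfrac{1}{2}\tfrac{\mathrm d}{\mathrm dt}\log\det(\mathbf{\Sigma}^{(\ell)}(t))\). Similarly, from the definition of the covariance loss,
\[
-\frac{C'}{2}\,\frac{\mathrm d}{\mathrm dt}\mathcal{L}_{\mathrm{cov}}^{(\ell)}(t)
 \;=\; \frac{1}{2}\,\frac{\mathrm d}{\mathrm dt}\log\det\!\bigl(\mathbf{\Sigma}^{(\ell)}(t)+\varepsilon I_{C'}\bigr).
\]
So the central identity reduces to comparing the time-derivative of \(\log\det(\mathbf{\Sigma}^{(\ell)})\) with that of \(\log\det(\mathbf{\Sigma}^{(\ell)}+\varepsilon I_{C'})\).

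Next, I would apply Jacobi's formula (a direct consequence of the chain rule applied to the smooth map \(A\mapsto \log\det A\) on positive-definite matrices): for any smooth curve \(A(t)\succ 0\),
\[
\frac{\mathrm d}{\mathrm dt}\log\det A(t) \;=\; \operatorname{tr}\!\bigl(A(t)^{-1}\dot A(t)\bigr).
\]
Applying this to both \(A(t)=\mathbf{\Sigma}^{(\ell)}(t)\) and \(A(t)=\mathbf{\Sigma}^{(\ell)}(t)+\varepsilon I_{C'}\) (both positive definite by assumption), and noting that both have the same time-derivative \(\dot{\mathbf{\Sigma}}^{(\ell)}(t)\), one obtains
\[
\frac{\mathrm d}{\mathrm dt}\log\det\!\bigl(\mathbf{\Sigma}^{(\ell)}+\varepsilon I\bigr)
\;-\;\frac{\mathrm d}{\mathrm dt}\log\det\!\bigl(\mathbf{\Sigma}^{(\ell)}\bigr)
\;=\;\operatorname{tr}\!\Bigl(\bigl[(\mathbf{\Sigma}^{(\ell)}+\varepsilon I)^{-1}-(\mathbf{\Sigma}^{(\ell)})^{-1}\bigr]\dot{\mathbf{\Sigma}}^{(\ell)}\Bigr).
\]
The two derivatives are therefore exactly equal in the informative limit \(\varepsilon\to 0^{+}\); the regulariser \(\varepsilon I_{C'}\) contributes only an \(O(\varepsilon)\) discrepancy that vanishes at the unregularised objective. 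I would record this explicitly, so that the identity \(\tfrac{\mathrm d}{\mathrm dt}h^{(\ell)}=-\tfrac{C'}{2}\tfrac{\mathrm d}{\mathrm dt}\mathcal{L}_{\mathrm{cov}}^{(\ell)}\) is established (exactly at \(\varepsilon=0\), up to a controlled perturbation otherwise).

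Having the equation, the inequality is immediate: along any descent trajectory one has by definition \(\tfrac{\mathrm d}{\mathrm dt}\mathcal{L}_{\mathrm{cov}}^{(\ell)}(t)\le 0\), hence \(\tfrac{\mathrm d}{\mathrm dt}h^{(\ell)}(t)\ge 0\). The equality case follows from the standard fact that under gradient flow (or any monotone descent rule) \(\tfrac{\mathrm d}{\mathrm dt}\mathcal{L}_{\mathrm{cov}}^{(\ell)}(t)=0\) is equivalent to the vanishing of the gradient, i.e.\ \(\nabla_{\mathbf{H}^{(\ell)}}\mathcal{L}_{\mathrm{cov}}^{(\ell)}=\mathbf{0}\); I would spell this out by writing the gradient explicitly as \(-\tfrac{2}{C'\,d}\mathbf{H}^{(\ell)}(\mathbf{\Sigma}^{(\ell)}+\varepsilon I)^{-1}\) via Jacobi's formula again, so the claim becomes transparent.

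I expect the only subtlety—and thus the main obstacle—to be the bookkeeping around the \(\varepsilon\)-regulariser: the stated equality holds exactly only when the determinant in \(\mathcal{L}_{\mathrm{cov}}^{(\ell)}\) is taken with \(\varepsilon=0\), and for \(\varepsilon>0\) there is a genuine (but vanishing) gap from the trace computation above. I would therefore either state the theorem in the \(\varepsilon\to 0^{+}\) limit, or rephrase it as an asymptotic identity with an explicit \(O(\varepsilon)\) remainder, while noting that the sign of \(\tfrac{\mathrm d}{\mathrm dt}h^{(\ell)}\) along any descent step on \(\mathcal{L}_{\mathrm{cov}}^{(\ell)}\) is unaffected for sufficiently small \(\varepsilon\). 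Everything else is standard matrix calculus on the positive-definite cone.
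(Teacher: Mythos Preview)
Your plan is essentially the paper's own: split \(h^{(\ell)}\) into a constant plus \(\tfrac12\log\det\mathbf{\Sigma}^{(\ell)}\), apply Jacobi's formula \(\tfrac{\mathrm d}{\mathrm dt}\log\det A=\operatorname{tr}(A^{-1}\dot A)\) to both \(\mathbf{\Sigma}^{(\ell)}\) and \(\mathbf{\Sigma}^{(\ell)}+\varepsilon I\), and then read off the sign from the descent assumption. Where you diverge is in how you close the \(\varepsilon\)-gap. You treat the discrepancy \(\operatorname{tr}\bigl([(\mathbf{\Sigma}^{(\ell)}+\varepsilon I)^{-1}-(\mathbf{\Sigma}^{(\ell)})^{-1}]\dot{\mathbf{\Sigma}}^{(\ell)}\bigr)\) as an \(O(\varepsilon)\) remainder and pass to the limit; the paper instead invokes the Löwner ordering \((\mathbf{\Sigma}^{(\ell)}+\varepsilon I)^{-1}\preceq(\mathbf{\Sigma}^{(\ell)})^{-1}\) to assert \(\operatorname{tr}\bigl((\mathbf{\Sigma}^{(\ell)}+\varepsilon I)^{-1}\dot{\mathbf{\Sigma}}^{(\ell)}\bigr)\le\operatorname{tr}\bigl((\mathbf{\Sigma}^{(\ell)})^{-1}\dot{\mathbf{\Sigma}}^{(\ell)}\bigr)\) and thereby obtains the one-sided bound \(\dot h\ge -\tfrac{C'}{2}\dot{\mathcal L}_{\mathrm{cov}}\) directly for every \(\varepsilon>0\), rather than the exact equality in the statement.

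Your caveat about the \(\varepsilon\)-bookkeeping is therefore exactly right: the displayed ``\(=\)'' in the theorem is only exact at \(\varepsilon=0\), and the paper's own argument in fact delivers an inequality, not an equality, for \(\varepsilon>0\). The Löwner route has the advantage of giving a clean non-asymptotic statement (no ``for \(\varepsilon\) small enough''), but note that the trace inequality \(A\preceq B\Rightarrow\operatorname{tr}(AC)\le\operatorname{tr}(BC)\) requires \(C\succeq 0\), so the paper's step (S3) tacitly needs \(\dot{\mathbf{\Sigma}}^{(\ell)}\succeq 0\); your asymptotic approach avoids that hidden assumption. One small slip: your explicit gradient should read \(-\tfrac{2}{C'd}(\mathbf{\Sigma}^{(\ell)}+\varepsilon I)^{-1}\mathbf{H}^{(\ell)}\), not \(\mathbf{H}^{(\ell)}(\mathbf{\Sigma}^{(\ell)}+\varepsilon I)^{-1}\), since \(\mathbf{\Sigma}^{(\ell)}\) is \(C'\times C'\).
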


\begin{proof}
For brevity drop the layer index \( (\ell) \) and write
\( \mathbf{\Sigma}=\mathbf{\Sigma}^{(\ell)}(t) \).
Since \( \mathbf{\Sigma}\succ\mathbf{0} \) and \( \varepsilon>0 \),
\( \mathbf{\Sigma}+\varepsilon I_{C'}\succ\mathbf{0}. \)
\(
\frac{\mathrm{d}}{\mathrm{d}t}\log\det\mathbf{\Sigma}
        = \operatorname{tr}\!\bigl(\mathbf{\Sigma}^{-1}\dot{\mathbf{\Sigma}}\bigr)
\),
we obtain
\[
\dot{h}
= \frac{1}{2}\operatorname{tr}\bigl(\mathbf{\Sigma}^{-1}\dot{\mathbf{\Sigma}}\bigr).
\tag{S1}
\]
By the same identity,
\[
\dot{\mathcal{L}}_{\mathrm{cov}}
= -\frac{1}{C'}
  \operatorname{tr}\!\Bigl((\mathbf{\Sigma}+\varepsilon I_{C'})^{-1}\dot{\mathbf{\Sigma}}\Bigr).
\tag{S2}
\]
Because
\(
(\mathbf{\Sigma}+\varepsilon I_{C'})^{-1}
  \preceq \mathbf{\Sigma}^{-1}
\)
in the Löwner order (for \(\varepsilon>0\)),
we have
\[
\operatorname{tr}\!\Bigl((\mathbf{\Sigma}+\varepsilon I_{C'})^{-1}\dot{\mathbf{\Sigma}}\Bigr)
\le
\operatorname{tr}\!\bigl(\mathbf{\Sigma}^{-1}\dot{\mathbf{\Sigma}}\bigr).
\tag{S3}
\]
Combining (S1)–(S3) yields
\(
\dot{h}\ge -\frac{C'}{2}\dot{\mathcal{L}}_{\mathrm{cov}}.
\)

If the optimisation rule ensures
\(
\dot{\mathcal{L}}_{\mathrm{cov}}\le0
\)
(e.g.\ gradient descent or any monotone line search),
then the inequality gives
\(
\dot{h}\ge0.
\)
Equality holds only when
\(
\dot{\mathcal{L}}_{\mathrm{cov}}=0
\),
which, under differentiability,
implies the gradient of the loss is zero.
Therefore entropy increases strictly whenever the loss decreases strictly.
\end{proof}

\section{Detailed Experimental Setup for Empirical Analysis}
\label{app:experiment_details_empirical_analysis}

This appendix provides additional details on the synthetic experiments described in Section~\ref{sec:empirical_analysis}, including data generation, model architectures, training procedure, and evaluation.

\subsection{Synthetic Data Generation}
We generate multivariate time series using a Vector Autoregression process:
\[
z_{t+1} = A z_t + \varepsilon_{t+1}, \quad \varepsilon_{t+1} \stackrel{i.i.d.}{\sim} \mathcal{N}(\mathbf{0}, I),
\]
where \( z_t \in \mathbb{R}^C \), and the coefficient matrix \( A \in \mathbb{R}^{C \times C} \) controls the temporal and inter-channel dependencies.

We consider the following dependency structures:
\begin{itemize}
  \item \textbf{Independent}: \( A \) is a diagonal matrix with entries drawn independently from \([0.8, 1.0]\). Each channel evolves independently without influence from others.
  \item \textbf{Anti-Self}: \( A \) has zeros on the diagonal and off-diagonal entries drawn from \([0.5, 1.0]\). This ensures each channel is fully dependent on other channels but not on itself.
\end{itemize}

To ensure stability, we scale \( A \) so that its spectral radius is less than 1. Time series are simulated with $T=100$ time steps and dimensionality $C \in \{100, 250, 2000\}$. The first 10 steps are discarded to reduce initialization bias.

\subsection{Forecasting Task and Data Preparation}
We define an autoregressive forecasting task where the model observes the past $L=4$ time steps and predicts the next $H=4$ steps:
\[
\{z_{t-L+1}, \dots, z_t\} \mapsto \{z_{t+1}, \dots, z_{t+H}\}.
\]
For each sequence, we construct overlapping sliding windows. The dataset is split into $80\%$ training and $20\%$ testing samples.

Before training, data is normalized using a channel-wise z-score computed over the training set:
\[
\tilde{z}_{t}^{(i)} = \frac{z_t^{(i)} - \mu_i}{\sigma_i},
\]
where \(\mu_i\) and \(\sigma_i\) are the mean and standard deviation of channel \(i\).

\subsection{Model Architectures}
We compare the following two models:
\begin{itemize}
  \item \textbf{CI}: Each channel is modeled independently using a shared linear model. Input shape is \([B, L, C]\), and output shape is \([B, H, C]\). Each channel uses a shared linear layer: \( \mathbb{R}^L \to \mathbb{R}^H \).
  \item \textbf{CD}: Each channel is first forecasted independently (as in CI), and the resulting output is then transformed using a shared linear layer across the channel dimension: \( \mathbb{R}^{H \times C} \to \mathbb{R}^{H \times C} \).
\end{itemize}

Both models are implemented in PyTorch and trained using the Adam optimizer.

\subsection{Training Details}
Each model is trained for 100 epochs with a learning rate of 0.01 and batch size 32. We use mean squared error (MSE) as the loss function. To stabilize training, gradients are clipped with a maximum norm of 5. Models are trained on a single V100 GPU.

\subsection{Evaluation Metrics}
We report the mean squared error (MSE) averaged over all channels and forecast steps:
\[
\text{MSE} = \frac{1}{H C N} \sum_{n=1}^N \sum_{h=1}^H \sum_{c=1}^C \left( \hat{z}_{t+h}^{(c)} - z_{t+h}^{(c)} \right)^2,
\]
where \(N\) is the number of test samples.

\section{Computational Complexity}\label{app:complexity}

\paragraph{Notation}
$C$ – number of channels (tokens in the \emph{channel} dimension);  
$T$ – look‑back length (time steps);  
$d$ – model width;  
$h$ – number of heads;  
$r>1$ – reduction ratio;  
$L=\lceil\log_r C\rceil$ – number of encoder/decoder stages.  
Before the first attention layer, both UCast and iTransformer apply a \emph{single} linear map  
$\mathbf{W}_{\text{in}}\!\in\!\mathbb{R}^{T\times d}$  
that compresses each length‑$T$ series to a $d$‑dimensional embedding.  
Consequently, attention thereafter depends only on \emph{channel counts}, not on $T$.

\paragraph{Hierarchical Latent Query Network.}
At stage $\ell$ the query length is $M_\ell=C/r^{\,\ell}$ and the key/value length is
$C_{\ell-1}=C/r^{\,(\ell-1)}$.  
Scaled‑dot‑product attention therefore costs
$$
\mathcal{T}^{(\ell)}_{\text{attn}}
 = \mathcal{O}\!\bigl(M_\ell\,C_{\ell-1}\,d/h\bigr)
 = \mathcal{O}\!\Bigl(\frac{C^{2}d}{h\,r^{\,2\ell-1}}\Bigr).
$$
Summing the geometric series over $\ell=1,\dots,L$ gives the encoder cost
$$
\mathcal{T}_{\text{enc}}
 = \mathcal{O}\!\Bigl(\frac{C^{2}d}{h\,r}\Bigr).
$$

\paragraph{Hierarchical Upsampling Network.}
The decoder is symmetric, so  
$\mathcal{T}_{\text{dec}}=\mathcal{O}\!\bigl(C^{2}d/(h\,r)\bigr)$.  
Hence the \emph{overall} time complexity of UCast is
$$
\mathcal{T}_{\text{UCast}}
       = \mathcal{O}\!\Bigl(\tfrac{C^{2}d}{h\,r}\Bigr).
$$

Peak attention memory occurs in the first stage with
$M_1\times C = C^{2}/r$ score entries per head:
$$
\mathcal{M}_{\text{UCast}}
       = \mathcal{O}\!\Bigl(\tfrac{C^{2}}{r}\Bigr).
$$

\paragraph{iTransformer}
iTransformer also compresses the time axis to width $d$ and then applies full self‑attention \emph{once} over the $C$ channel tokens:
$$
\mathcal{T}_{\text{iTrans}}
       = \mathcal{O}\!\Bigl(\tfrac{C^{2}d}{h}\Bigr),\qquad
\mathcal{M}_{\text{iTrans}}
       = \mathcal{O}\!\bigl(C^{2}\bigr).
$$

\paragraph{Comparison}
The reduction ratio $r$ governs the savings:
$$
\frac{\mathcal{T}_{\text{UCast}}}{\mathcal{T}_{\text{iTrans}}}
 = \frac{1}{r},\qquad
\frac{\mathcal{M}_{\text{UCast}}}{\mathcal{M}_{\text{iTrans}}}
 = \frac{1}{r}.
$$
With a typical choice $r=16$, UCast lowers both time and memory by a factor of $16$ compared with a flat channel Transformer, yet still retains the expressive power of attention through its latent‑query hierarchy.

\section{Empirical Evidence of Hierarchical Structures}
\label{app:Empirical Evidence of Hierarchical Structures}

\begin{table}[ht]
\centering
\caption{CPCC values of hierarchical clustering applied to real-world datasets. 
Higher values (closer to $1$) indicate stronger hierarchical structure.}
\resizebox{\textwidth}{!}{%
\begin{tabular}{lccccccccc}
\toprule
\textbf{Dataset} & M5 & Meter & Temp & Wind & Atec & Mobility & SP500 & Air Quality & Measles \\
\midrule
\textbf{CPCC} & 0.7852 & 0.7705 & 0.9057 & 0.7478 & 0.8866 & 0.8338 & 0.9484 & 0.7429 & 0.8950 \\
\bottomrule
\end{tabular}
}
\label{tab:cpcc}
\end{table}


To empirically evaluate the hierarchical structure of real-world datasets, 
we apply hierarchical clustering to the data and assess the resulting structure using 
the \textbf{cophenetic correlation coefficient (CPCC)}~\citep{saraccli2013comparison}. 
A CPCC value closer to $1$ indicates that the dendrogram generated by hierarchical clustering 
preserves the original pairwise distances between data points more accurately~\citep{kumar2016analysis}, 
which implies that the features in the dataset exhibit a \textbf{stronger hierarchical structure}.

\section{Empirical Validation for Channel Dependency Findings on Real-World Data}
\label{app:Empirical Validation for Channel Dependency Findings on Real-World Data}

\begin{table}[ht]
\centering
\caption{MAE performance of CI model DLinear and CD model \method{} on Wiki datasets with different channel subsets.}
\begin{tabular}{lccc}
\toprule
& \textbf{Channels} & \textbf{DLinear} & \textbf{\method{}} \\
\midrule
Wiki-20k   & 20,000 & 0.394 & \textbf{0.302} \\
Wiki-10k   & 10,000 & 0.489 & \textbf{0.385} \\
Wiki-2k    & 2,000  & 0.669 & \textbf{0.659} \\
Wiki-0.2k  & 200    & \textbf{0.697} & 0.782 \\
\bottomrule
\end{tabular}
\label{tab:wiki_ablation}
\end{table}

To bridge the synthetic analysis to a real-world setting, we conducted an additional experiment. 
To reduce the effect of dataset-specific bias, we fixed the dataset to Wiki-20k, which is a \textbf{real-world} dataset, 
and randomly selected different numbers of channels to construct Wiki-10k, Wiki-2k, and Wiki-0.2k. 
We then evaluated the MAE performance of the CI model DLinear and the CD model \method{} on these datasets. 
As shown in Table~\ref{tab:wiki_ablation}, CI models perform better when the dimensionality is \textbf{low}, 
but as the dimensionality increases, CD models gradually \textbf{outperform} CI models. 
This observation is consistent with the conclusion in Section~\ref{sec:empirical_analysis}.

\section{Hyperparameter Sensitivity Analysis}
\label{app:hyperparameter_sensitivity}
\begin{figure}[htbp]
    \centering

    \subfigure[$\alpha$]{
        \includegraphics[width=0.31\textwidth]{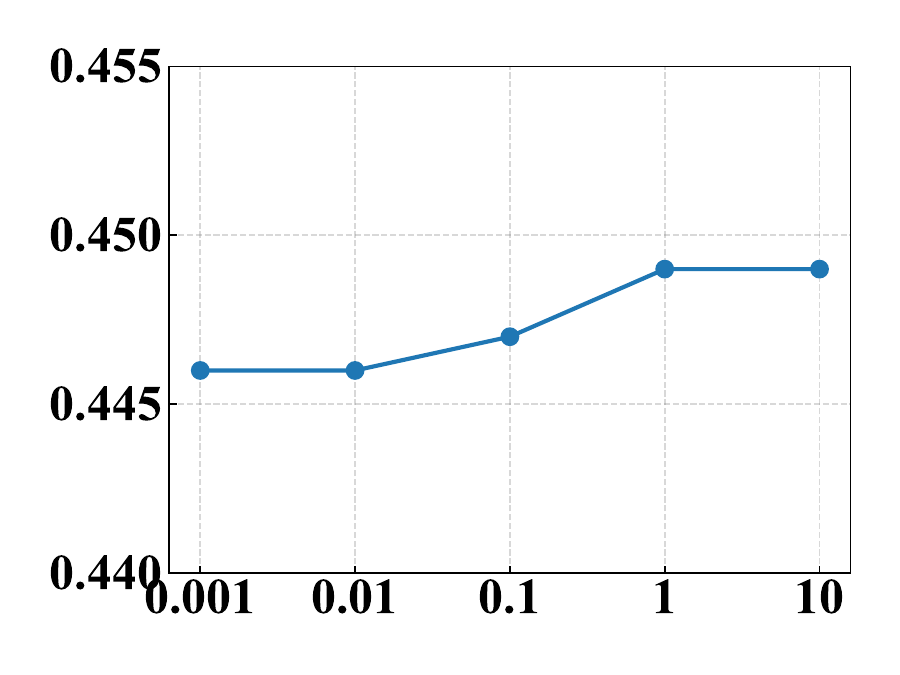}
        \label{fig:alpha}
    }
    \hfill
    \subfigure[$L$]{
        \includegraphics[width=0.31\textwidth]{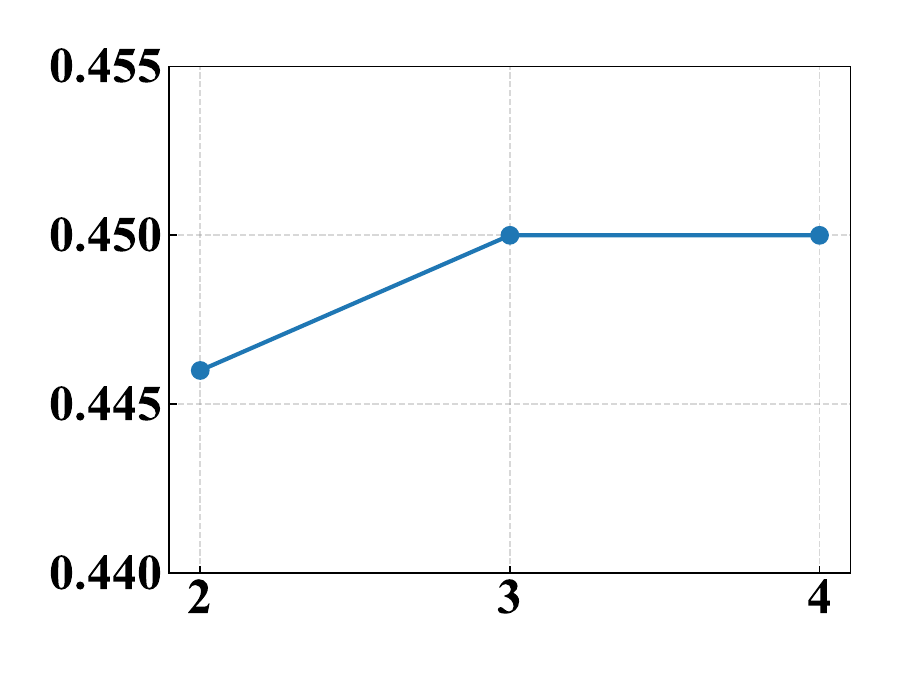}
        \label{fig:e_layers}
    }
    \hfill
    \subfigure[$r$]{
        \includegraphics[width=0.31\textwidth]{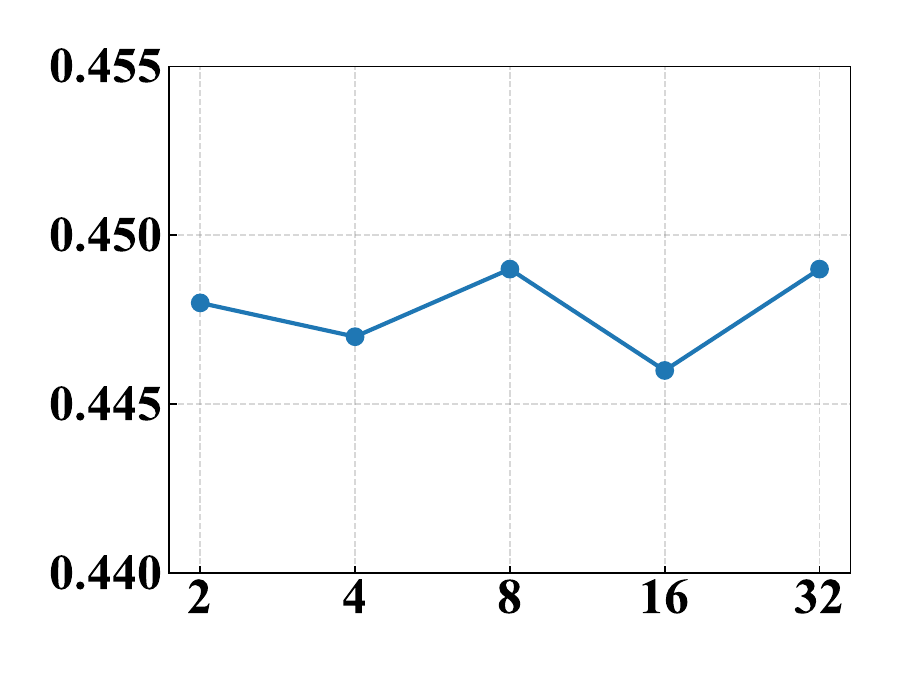}
        \label{fig:r}
    }

    \caption{Performance (MSE) on Air Quality dataset under different hyperparameters.}
    \label{fig:china_air_quality_hyperparams}
\end{figure}

\begin{figure}[htbp]
    \centering
    \begin{minipage}{0.48\linewidth}
        \centering
        \includegraphics[width=\linewidth]{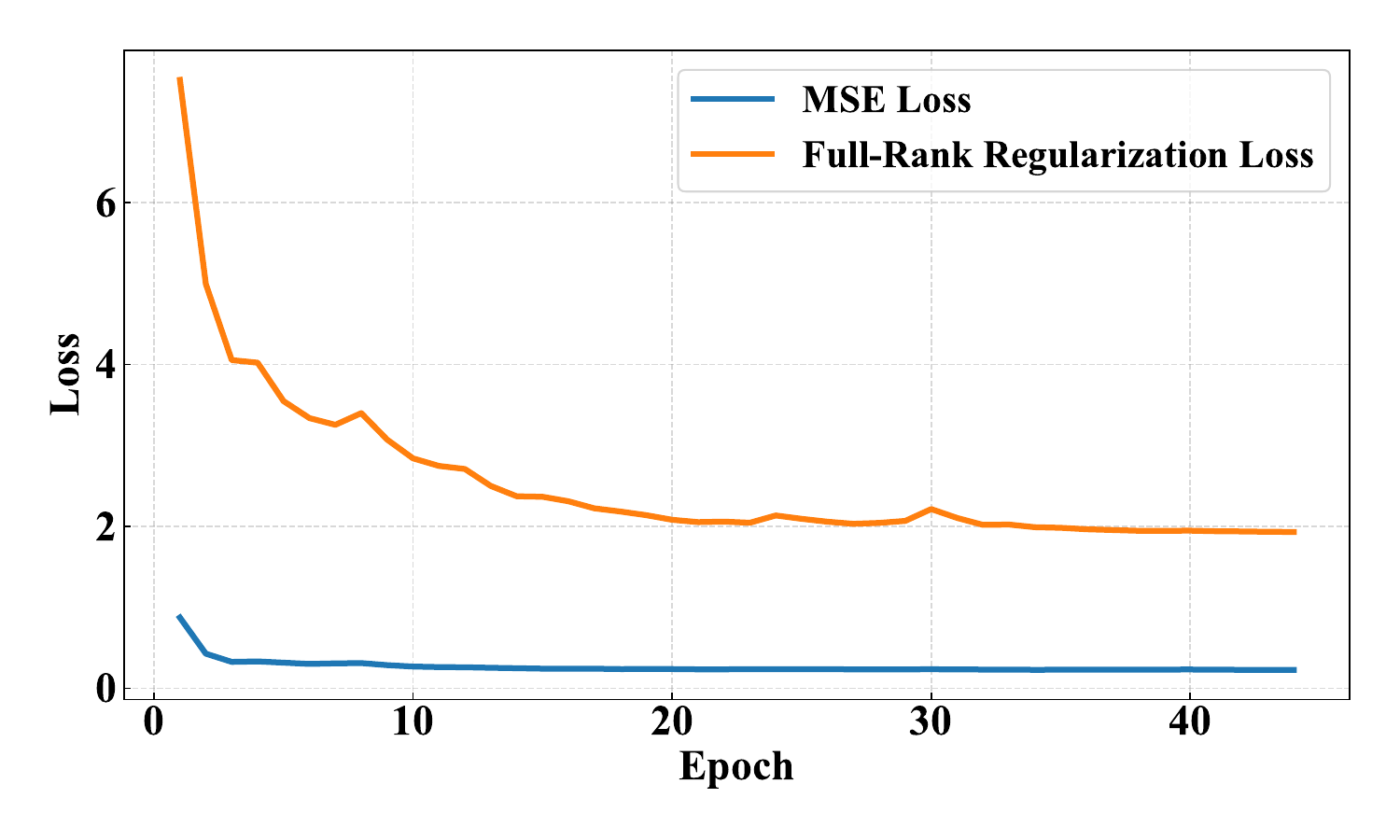}
    \end{minipage}
    \hfill
    \begin{minipage}{0.48\linewidth}
        \centering
        \includegraphics[width=\linewidth]{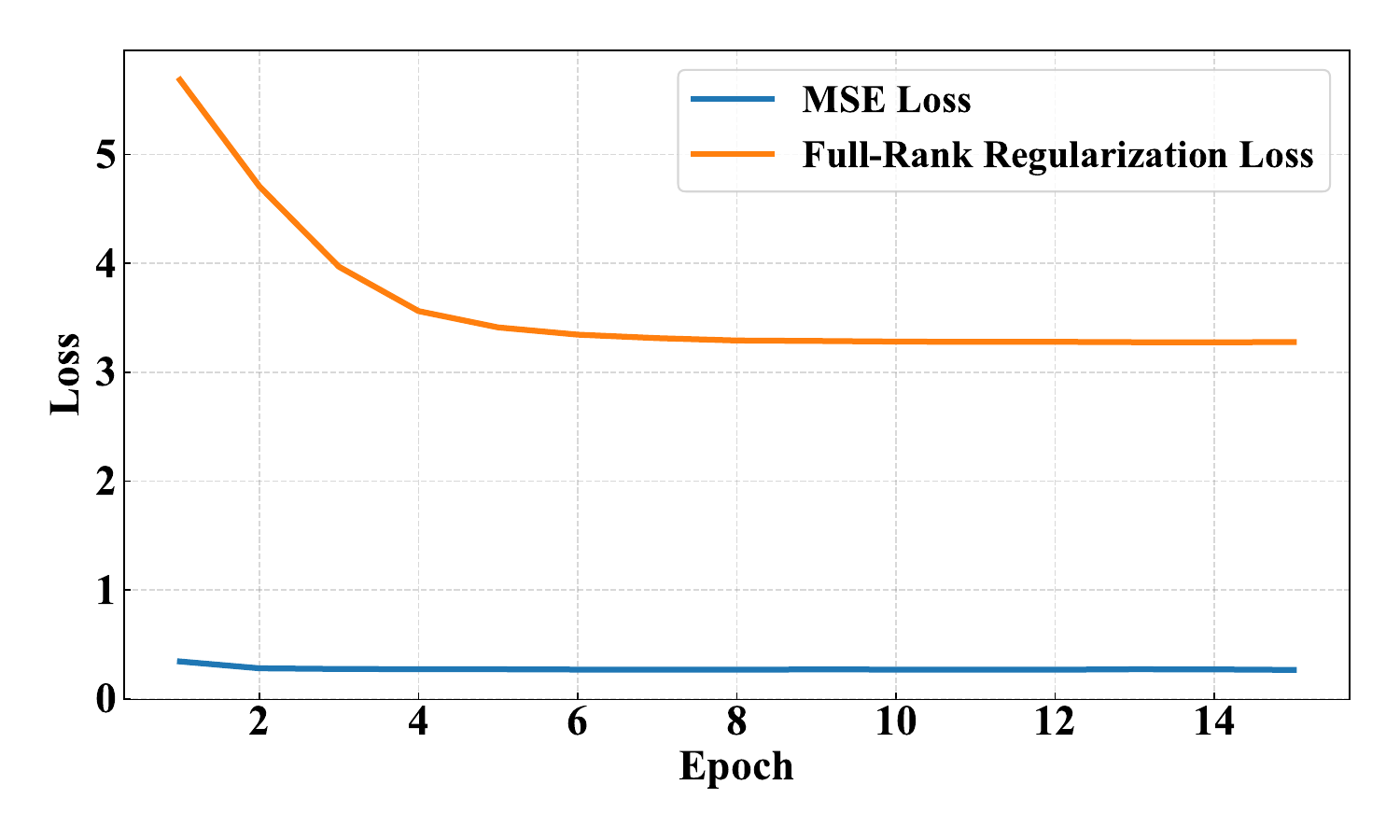}
    \end{minipage}
    
    \caption{Comparison of loss scales between the MSE loss $\mathcal{L}_{mse}$ and the full-rank regularization loss $\mathcal{L}_{cov}$. \textbf{Left:} Atec, \textbf{Right:} Temp.}
    \label{fig:loss_scale}
\end{figure}

\textbf{Sensitivity Analysis of Full-Rank Regularization Strength $\alpha$.}
We investigate how the regularization coefficient $\alpha$ influences the performance of \method{} by varying it across several orders of magnitude. As shown in Figure~\ref{fig:loss_scale}~(a), the best performance is achieved when $\alpha$ is set to a small value (0.001–0.01). This behavior can be explained by \textit{the imbalance in the scales of the two losses}: the full-rank regularization loss $\mathcal{L}_{cov}$ is roughly an order of magnitude larger than the MSE loss $\mathcal{L}_{mse}$ (see Figure~\ref{fig:loss_scale}). If $\alpha$ is too large, the optimization is dominated by $\mathcal{L}_{cov}$, which may overconstrain the latent channel representations and reduce the model’s ability to capture informative structures. Using a smaller $\alpha$ helps balance the two objectives, stabilizes gradient updates, and leads to better forecasting accuracy. Based on this analysis, we set $\alpha = 0.01$ to achieve a good trade-off between disentanglement and predictive performance.

\textbf{Sensitivity Analysis of Number of Hierarchical Layers $L$.}  
We evaluate the effect of the number of hierarchical latent query layers $L$ on forecasting performance, where a lower mean squared error (MSE) indicates better accuracy. As shown in Figure~\ref{fig:china_air_quality_hyperparams}~(b), increasing $L$ from 2 to 3 reduces the MSE, indicating worse performance. This suggests that deeper hierarchies may not capture more useful structure and could lead to redundancy or overfitting. Based on this analysis, we use a unified setting of layer number $L=2$ across all 16 datasets from different domains, which consistently achieves a stable trade-off between accuracy and efficiency without the need for per-dataset hyperparameter tuning.

\textbf{Sensitivity Analysis of Reduction Ratio $r$.}  
We investigate how the reduction ratio $r$, which determines the compression of channel representations at each layer, influences performance. As shown in Figure~\ref{fig:china_air_quality_hyperparams}~(c), the model achieves relatively stable performance across a wide range of $r$ values. However, extremely large or small values can lead to slight degradation. For instance, very low reduction (e.g., $r = 2$) leads to higher computational cost without performance gain, while aggressive reduction (e.g., $r = 32$) may discard useful information. We find that $r = 16$ offers a favorable trade-off between efficiency and accuracy. Thus, we use a unified setting of layer number $r = 16$ across all 16 datasets.

\begin{figure}[htbp]
    \centering
    \begin{minipage}{0.46\linewidth}
        \centering
        \includegraphics[width=\linewidth]{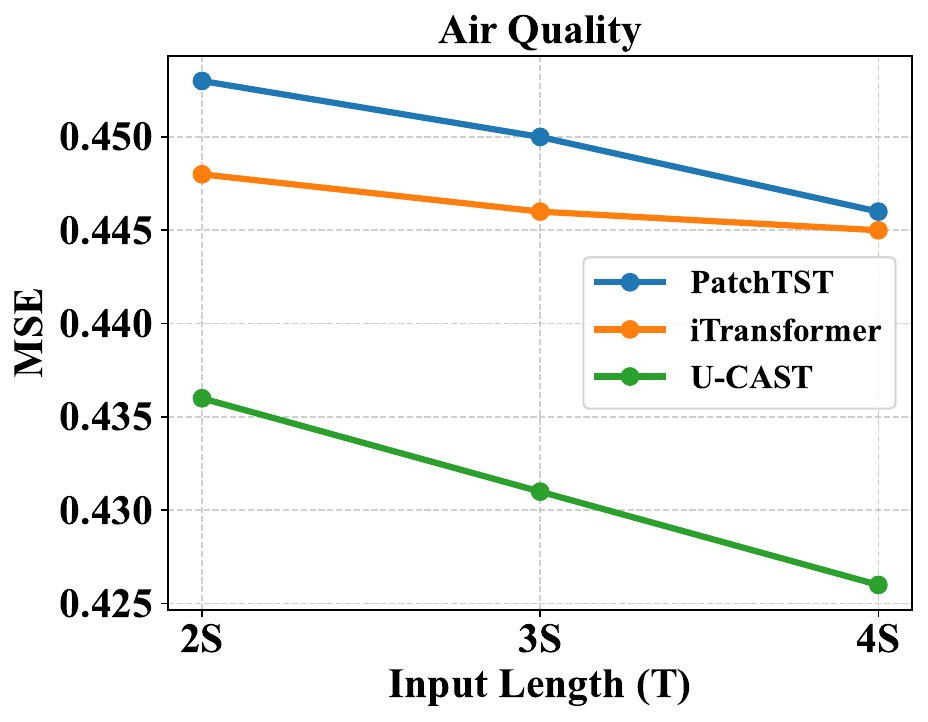}
    \end{minipage}
    \hfill
    \begin{minipage}{0.46\linewidth}
        \centering
        \includegraphics[width=\linewidth]{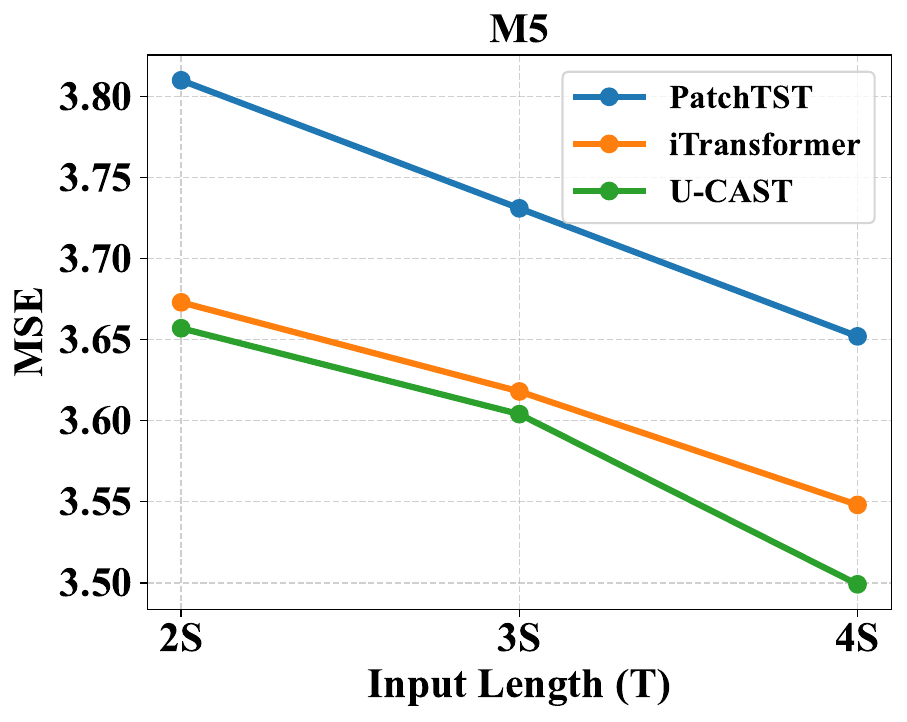}
    \end{minipage}
    \\
    \begin{minipage}{0.46\linewidth}
        \centering
        \includegraphics[width=\linewidth]{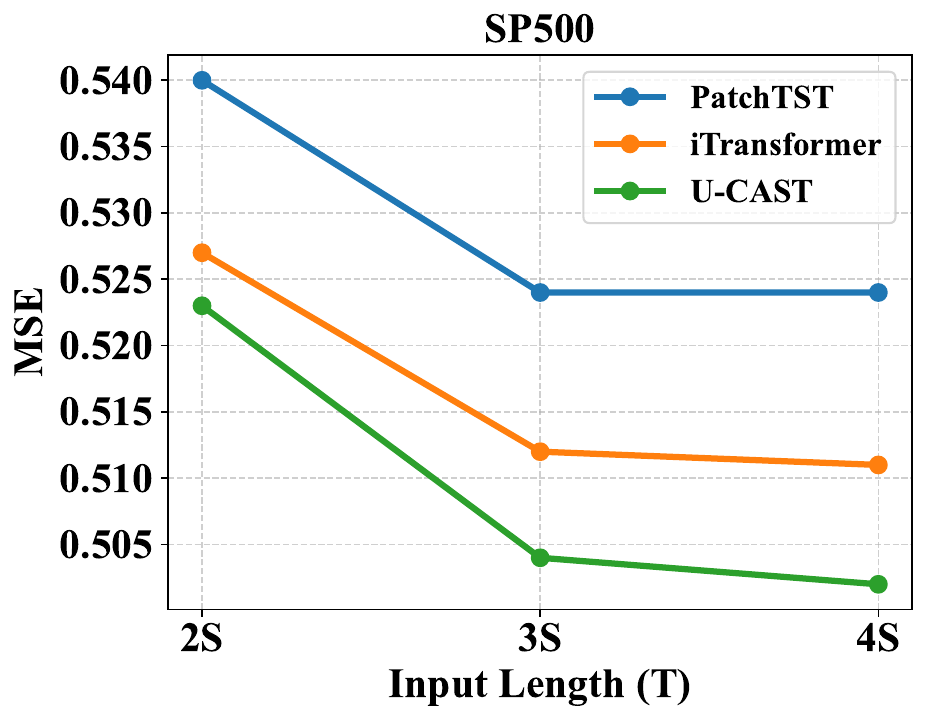}
    \end{minipage}
    \hfill
    \begin{minipage}{0.46\linewidth}
        \centering
        \includegraphics[width=\linewidth]{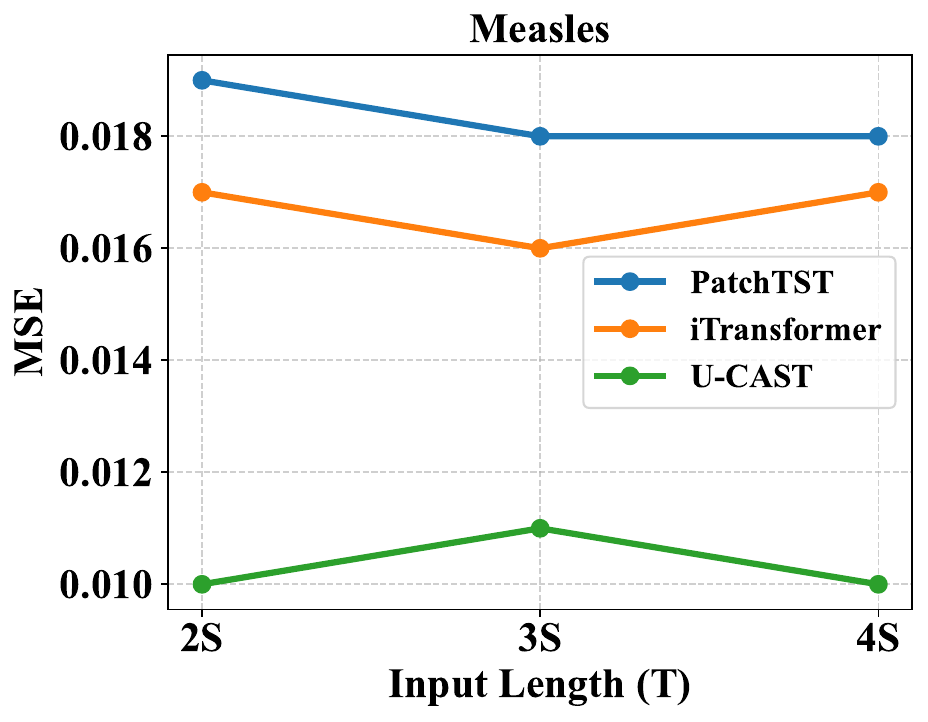}
    \end{minipage}
    
    \caption{Comparison of performance (MSE) among \method{}, PatchTST, and iTransformer on three datasets (Air Quality, M5, SIRS) under varying input lengths $T$.}
    \label{fig:different_input_lengths}
\end{figure}

\textbf{Sensitivity Analysis of Input Length $T$.}
To directly assess the impact of input length and to further validate the findings from our \dataset{} benchmark, we conduct an extended experiment. We evaluate PatchTST, iTransformer, and \method{} on three datasets across input lengths set to $2\times$, $3\times$, and $4\times$ the prediction horizon ($S$). The results are presented in Figure~\ref{fig:different_input_lengths}. These experiments lead to three main observations. \textbf{First}, longer input sequences generally improve performance in both CI and CD models on most datasets. \textbf{Second}, the effect varies by dataset. For instance, in the SIRS dataset, all models achieve their best results with the shortest input length, likely because its short-term epidemiological dynamics make longer histories add noise. This variability highlights the usefulness of \dataset{} as a benchmark for testing robustness under diverse temporal structures. \textbf{Third}, \method{} remains competitive or superior across most datasets and horizons, and CD models (\method{} and iTransformer) as a group continue to perform strongly when cross-channel structure provides informative signals.



\section{Ablation Study}
\label{sec:app_ablation_study}
\begin{table*}[t!]
\centering
\caption{Ablation study on different components of \method{}. Each model is evaluated on 16 datasets using MSE and MAE. The best performance is highlighted in \textcolor{red}{\textbf{red}}, and the second-best is \textcolor{blue}{\underline{underlined}}.}
\resizebox{\textwidth}{!}{%
\begin{tabular}{c|rr|rr|rr|rr|rr}
\toprule
\multirow{2}{*}{\textbf{Dataset}} & \multicolumn{2}{c|}{\textbf{\method{}}} & \multicolumn{2}{c|}{\textbf{w/o $\mathcal{L}_{\text{cov}}$}} & \multicolumn{2}{c|}{\textbf{w/o Hierarchical}} & \multicolumn{2}{c|}{\textbf{w/o Latent Query}} & \multicolumn{2}{c}{\textbf{w/o Upsampling}} \\
& MSE & MAE & MSE & MAE & MSE & MAE & MSE & MAE & MSE & MAE \\
\midrule
Atec                        & \textcolor{red}{\textbf{0.287}}  & \textcolor{red}{\textbf{0.280}} & \textcolor{blue}{\underline{0.311}}  & \textcolor{blue}{\underline{0.297}} & 0.319  & 0.306 & 0.308  & 0.296 & 0.306  & 0.292 \\
Air Quality           & \textcolor{red}{\textbf{0.446}}  & \textcolor{red}{\textbf{0.430}} & \textcolor{blue}{\underline{0.461}}  & \textcolor{blue}{\underline{0.444}} & 0.447  & 0.431 & 0.451  & 0.434 & 0.451  & 0.434 \\
Temp                 & \textcolor{red}{\textbf{0.262}}  & \textcolor{red}{\textbf{0.383}} & \textcolor{blue}{\underline{0.273}}  & \textcolor{blue}{\underline{0.395}} & 0.272  & 0.392 & 0.280  & 0.398 & 0.294  & 0.410 \\
Wind                 & \textcolor{red}{\textbf{1.104}}  & \textcolor{red}{\textbf{0.692}} & 1.115  & 0.701 & \textcolor{blue}{\underline{1.110}}  & \textcolor{blue}{\underline{0.695}} & 1.106  & \textcolor{red}{\textbf{0.692}} & 1.109  & \textcolor{blue}{\underline{0.693}} \\
Mobility   & \textcolor{red}{\textbf{0.315}}  & \textcolor{red}{\textbf{0.317}} & 0.336  & 0.332 & \textcolor{blue}{\underline{0.316}}  & \textcolor{blue}{\underline{0.319}} & 0.326  & 0.323 & 0.327  & 0.324 \\
Traffic-CA                 & \textcolor{red}{\textbf{0.061}}  & \textcolor{red}{\textbf{0.131}} & 0.072  & 0.144 & \textcolor{blue}{\underline{0.062}}  & 0.138 & \textcolor{blue}{\underline{0.062}}  & 0.136 & \textcolor{blue}{\underline{0.062}}  & \textcolor{blue}{\underline{0.133}} \\
Traffic-GBA                & \textcolor{red}{\textbf{0.059}}  & \textcolor{red}{\textbf{0.126}} & 0.070  & 0.137 & \textcolor{blue}{\underline{0.060}}  & 0.130 & \textcolor{blue}{\underline{0.060}}  & 0.130 & \textcolor{blue}{\underline{0.060}}  & \textcolor{blue}{\underline{0.127}} \\
Traffic-GLA                & \textcolor{red}{\textbf{0.060}}  & \textcolor{red}{\textbf{0.132}} & 0.071  & 0.144 & \textcolor{blue}{\underline{0.061}}  & 0.136 & \textcolor{blue}{\underline{0.061}}  & 0.135 & \textcolor{blue}{\underline{0.061}}  & \textcolor{blue}{\underline{0.134}} \\
M5                          & \textcolor{red}{\textbf{3.501}}  & \textcolor{red}{\textbf{0.849}} & \textcolor{blue}{\underline{3.520}}  & 0.861 & 3.566  & \textcolor{blue}{\underline{0.854}} & 3.516  & 0.852 & 3.597  & 0.860 \\
Measles            & \textcolor{red}{\textbf{0.010}}  & \textcolor{red}{\textbf{0.042}} & 0.022  & 0.061 & 0.019  & 0.054 & \textcolor{blue}{\underline{0.011}}  & \textcolor{blue}{\underline{0.051}} & 0.014  & 0.063 \\
Neurolib                    & \textcolor{red}{\textbf{1.750}}  & \textcolor{red}{\textbf{0.350}} & \textcolor{blue}{\underline{1.770}}  & 0.362 & 1.818  & 0.370 & 1.813  & \textcolor{blue}{\underline{0.358}} & \textcolor{blue}{\underline{1.756}}  & 0.355 \\
Solar            & \textcolor{blue}{\underline{0.172}}  & 0.246 & 0.182  & \textcolor{blue}{\underline{0.242}} & 0.173  & 0.239 & 0.175  & 0.237 & \textcolor{red}{\textbf{0.167}}  & \textcolor{red}{\textbf{0.222}} \\
SIRS                       & \textcolor{red}{\textbf{0.007}}  & \textcolor{red}{\textbf{0.052}} & 0.038  & 0.119 & 0.008  & \textcolor{blue}{\underline{0.056}} & \textcolor{blue}{\underline{0.008}}  & 0.057 & 0.035  & 0.126 \\
Meters      & \textcolor{red}{\textbf{0.943}}  & \textcolor{blue}{\underline{0.551}} & 0.957  & 0.563 & 0.950  & 0.558 & \textcolor{blue}{\underline{0.945}}  & \textcolor{red}{\textbf{0.552}} & 0.946  & 0.550 \\
SP500                       & \textcolor{red}{\textbf{0.502}}  & \textcolor{red}{\textbf{0.301}} & \textcolor{blue}{\underline{0.561}}  & 0.338 & 0.562  & \textcolor{blue}{\underline{0.333}} & 0.566  & 0.335 & 0.597  & 0.349 \\
Wiki20000 & \textcolor{red}{\textbf{10.273}} & \textcolor{red}{\textbf{0.302}} & \textcolor{blue}{\underline{10.520}} & 0.321 & 10.459 & \textcolor{red}{\textbf{0.309}} & 10.509 & \textcolor{blue}{\underline{0.310}} & 10.527 & \textcolor{blue}{\underline{0.310}} \\
\bottomrule
\end{tabular}%
}
\label{tab:ablation_full}
\end{table*}

As \method{} integrates two core components, the Hierarchical Latent Query Network and the Hierarchical Upsampling Network, we assess their individual contributions through ablation studies. Specifically, we evaluate the impact of: (1) \textbf{w/o hierarchical} by retaining only a single layer for dimensionality reduction, (2) \textbf{w/o latent query} by setting $\mathbf{Q}_\ell$ requires\_grad=False, and (3)\textbf{w/o upsampling} by using a simple linear projection to restore channel dimension. In addition, we examine the role of the covariance full-rank regularisation by \textbf{w/o $\mathcal{L}_{\text{cov}}$}, i.e., setting $\alpha=0$. 

Table~\ref{tab:ablation_full} shows that removing any component generally degrades \method{}'s performance, confirming their necessity. \textbf{First}, \textit{w/o latent query} still performs well on some datasets (e.g., NREL Solar Power), likely due to high inter-variable correlation (0.998), where random projections suffice. \textbf{Second}, \textit{w/o upsampling} unexpectedly improves performance on NREL, suggesting that encoder representations are already sufficient and further decoding may introduce redundancy or overfitting. \textbf{Third}, removing the covariance regularization has the largest impact on structured datasets like SIRS (MSE rises from 0.007 to 0.038), showing its importance in enforcing feature decorrelation.

\section{Efficiency}
\label{sec:efficiency}
We compare the training time per batch with recent state-of-the-art models in Figure~\ref{fig:train_time1} and Figure~\ref{fig:train_time2}. In Figure~\ref{fig:train_time2}, the Atec, Temp, and Wind datasets are removed to provide a clearer view of the efficiency trend at higher dimensions. The x-axis corresponds to different datasets, arranged from left to right in increasing order of dimensionality.

The results indicate that:
\textbf{(1)} \method{} consistently maintains favorable training efficiency across different dimensionalities;
\textbf{(2)} Its efficiency advantage over other non-linear methods grows as dimensionality increases;
\textbf{(3)} The efficiency of CI-based methods (e.g., PAttn, PatchTST) is sensitive to both channel and temporal dimensions. In Figure~\ref{fig:train_time2}, when channel dimensionality is very large, as in Wiki-20k, their efficiency drops sharply; similarly, in Figure~\ref{fig:train_time1}, longer input lengths (Atec, Temp, Wind) also lead to significant degradation;
\textbf{(4)} As shown in Figure~\ref{fig:train_time2}, the efficiency of CD-based methods (iTransformer, \method{}) is mainly affected by the number of input channels. However, \method{} is more robust than iTransformer, showing more stable efficiency as dimensionality increases.

The detailed efficiency comparison of both the training and inference phases is provided in Appendix~\ref{app:detailed_efficiency_comparison}. In both stages, \method{} achieves the best trade-off between performance and efficiency.

\begin{figure}[htbp!]
    \centering
    \resizebox{0.99\linewidth}{!}{\includegraphics{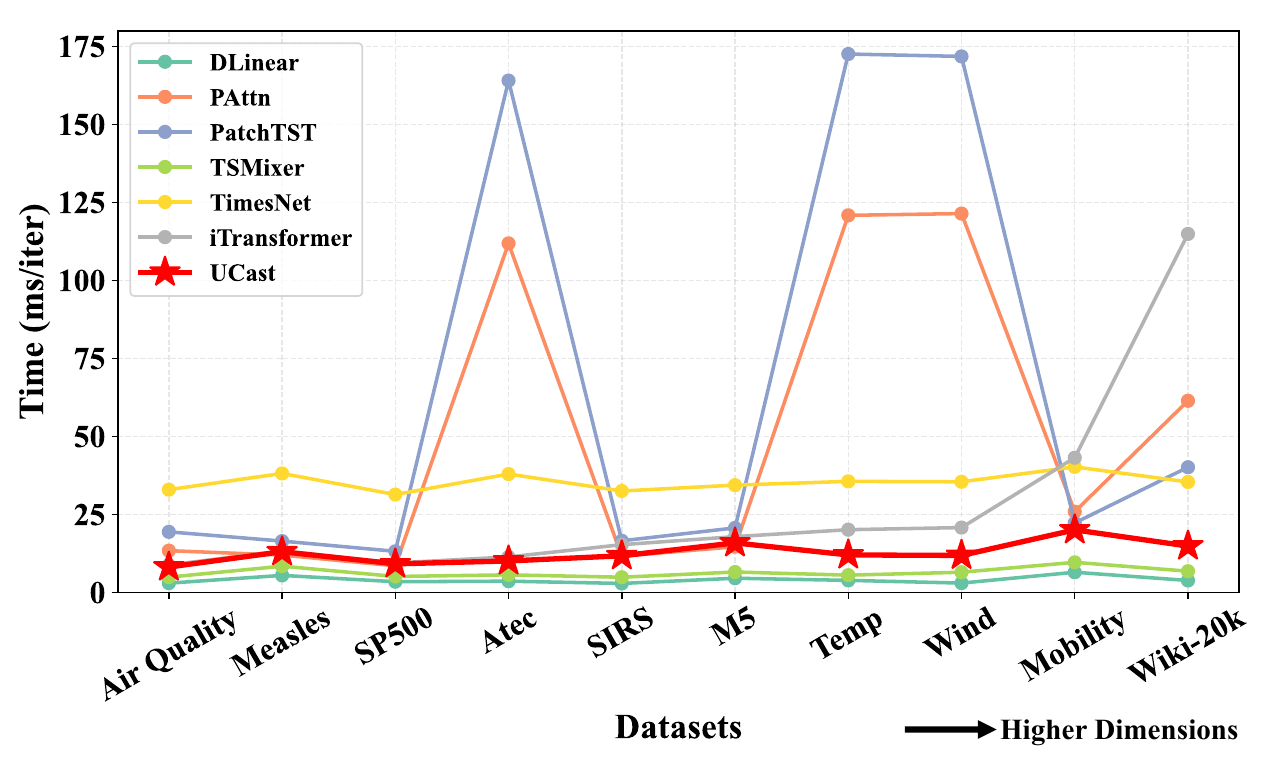}}
    \caption{Training time across datasets. The x-axis represents different datasets, arranged from left to right in order of increasing dimensionality.}
    \label{fig:train_time1}
\end{figure}

\begin{figure}[htbp!]
    \centering
    \resizebox{0.99\linewidth}{!}{\includegraphics{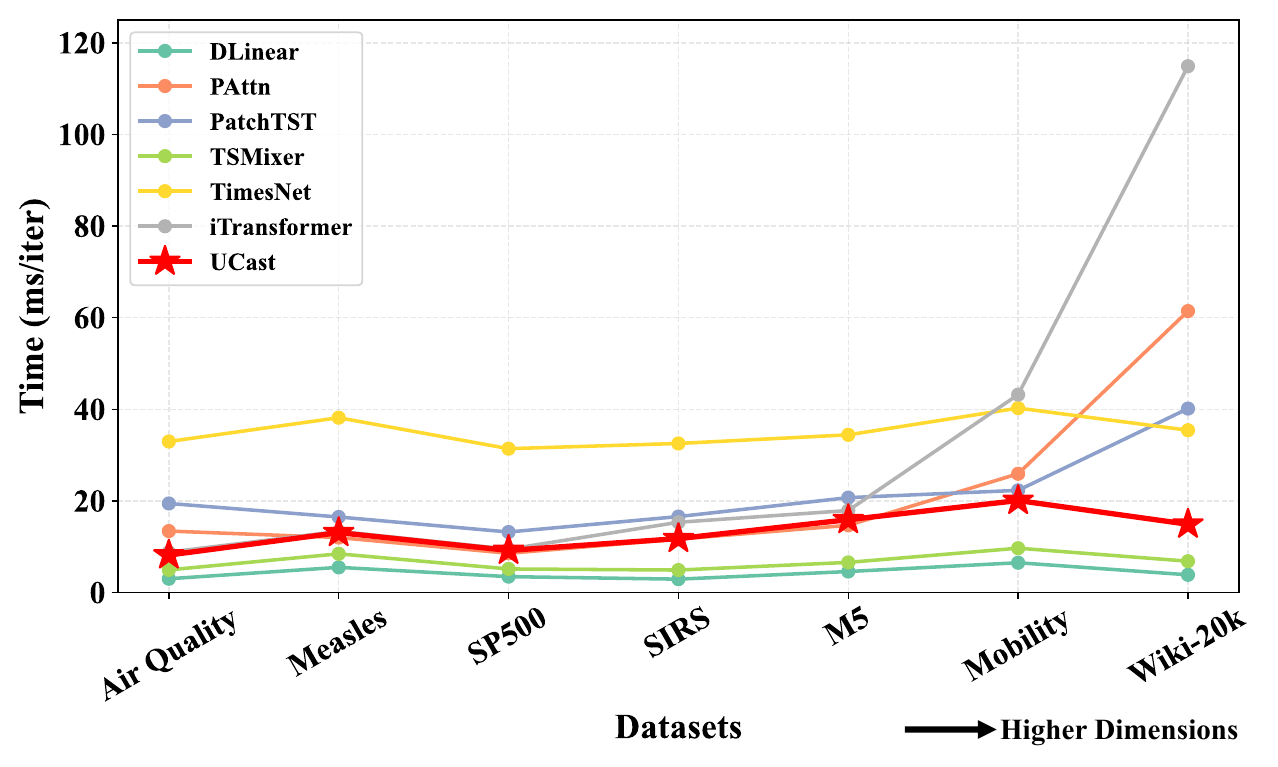}}
    \caption{Training time across datasets (\textbf{w/o Atec, Temp, and Wind datasets}). The x-axis represents different datasets, arranged from left to right in order of increasing dimensionality.}
    \label{fig:train_time2}
\end{figure}

\section{Additional Baselines}
\label{sec:more_baselines}

\begin{table}[h]
\centering
\caption{Preliminary forecasting results of MCANN and NECPlus on selected datasets. Lower values indicate better performance.}
\label{tab:additional_baselines}
\begin{tabular}{lcccc}
\toprule
 & \textbf{MCANN (MSE)} & \textbf{MCANN (MAE)} & \textbf{NECPlus (MSE)} & \textbf{NECPlus (MAE)} \\
\midrule
\textbf{Measles} & 1.727 & 0.851 & 1.160 & 0.722 \\
\textbf{Air Quality} & 0.466 & 0.457 & 0.447 & 0.449 \\
\textbf{Mobility} & 3.471 & 1.367 & 1.815 & 0.899 \\
\textbf{M5} & 17.031 & 1.441 & 16.842 & 1.387 \\
\bottomrule
\end{tabular}
\end{table}

We additionally include two methods, \textbf{MCANN}~\cite{li2025mc} and \textbf{NECPlus}~\cite{li2023extreme}, which further broaden the coverage of our benchmark. Both methods have been integrated into our \textbf{Time-HD-Lib}. For NECPlus, we adopt the standard model implementation. Preliminary results are presented in Table~\ref{tab:additional_baselines}.

\section{Showcases}
For clear comparison, we present test set showcases in Appendix~\ref{app:show_cases}, where \method{} shows better performance.

\clearpage
\section{Detailed Efficiency Comparison}
\label{app:detailed_efficiency_comparison}

\begin{figure*}[h!]
    \centering
    \includegraphics[width=0.94\textwidth]{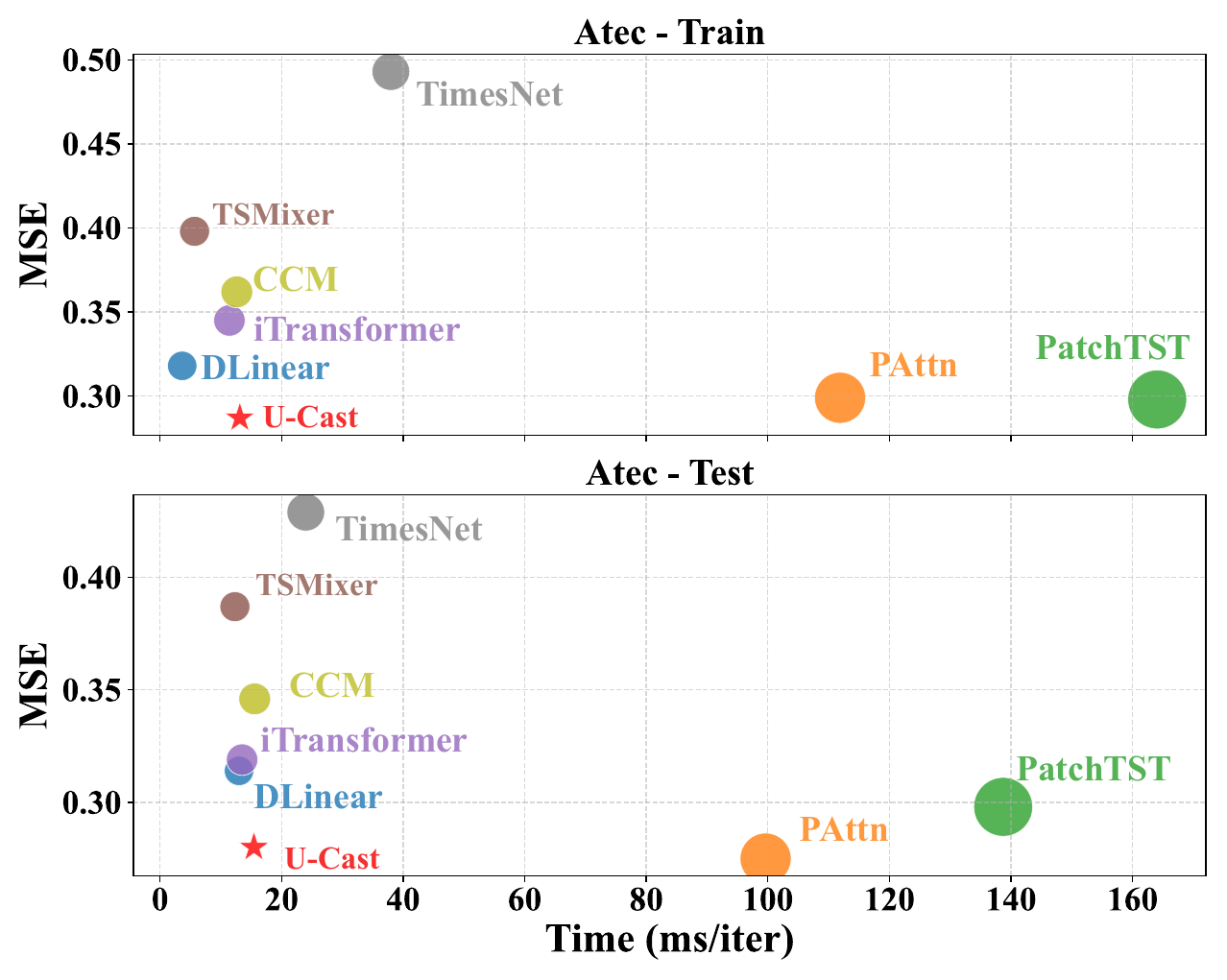}
\end{figure*}

\begin{figure*}[h!]
    \centering
    \includegraphics[width=0.94\textwidth]{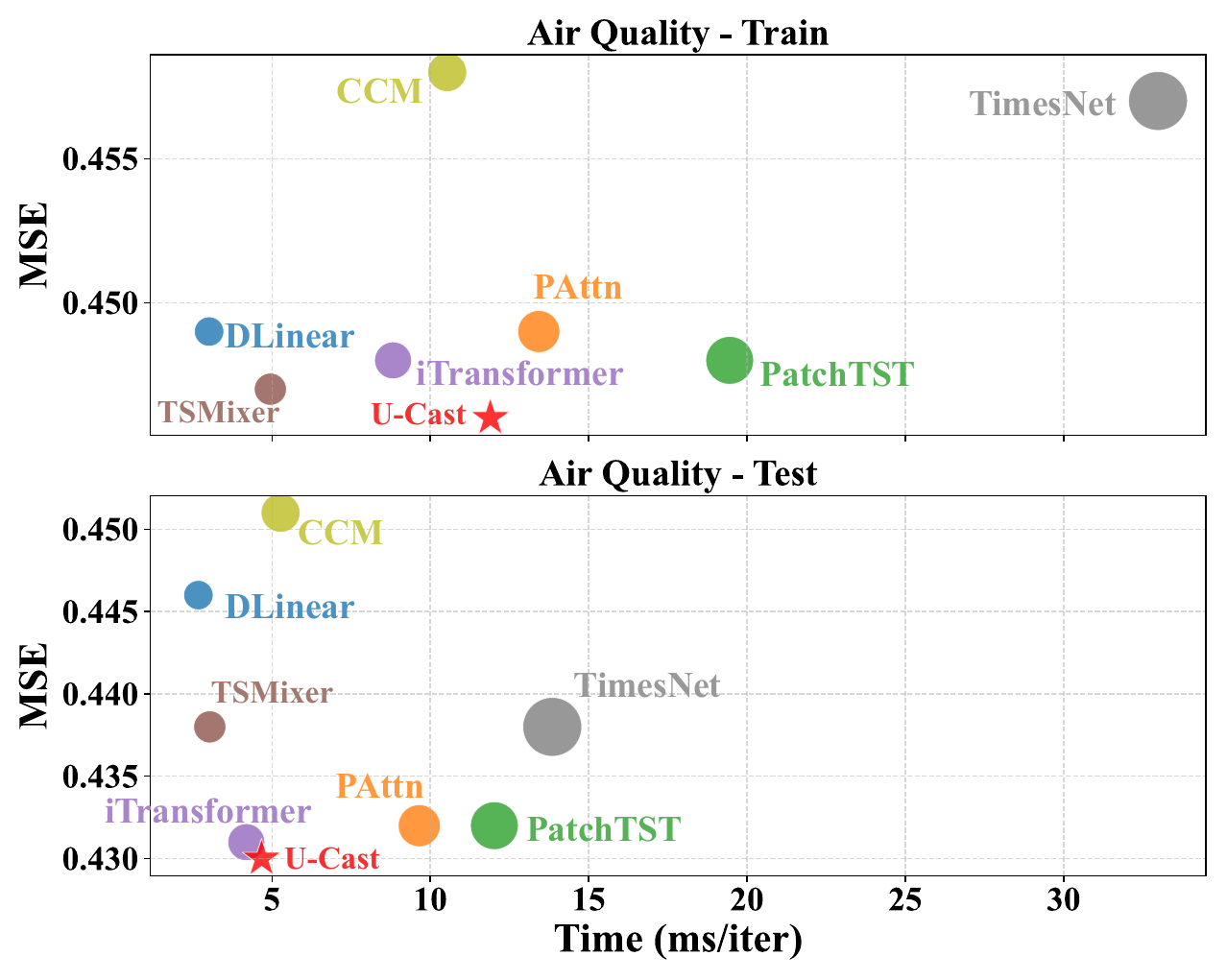}
\end{figure*}

\begin{figure*}[h!]
    \centering
    \includegraphics[width=0.94\textwidth]{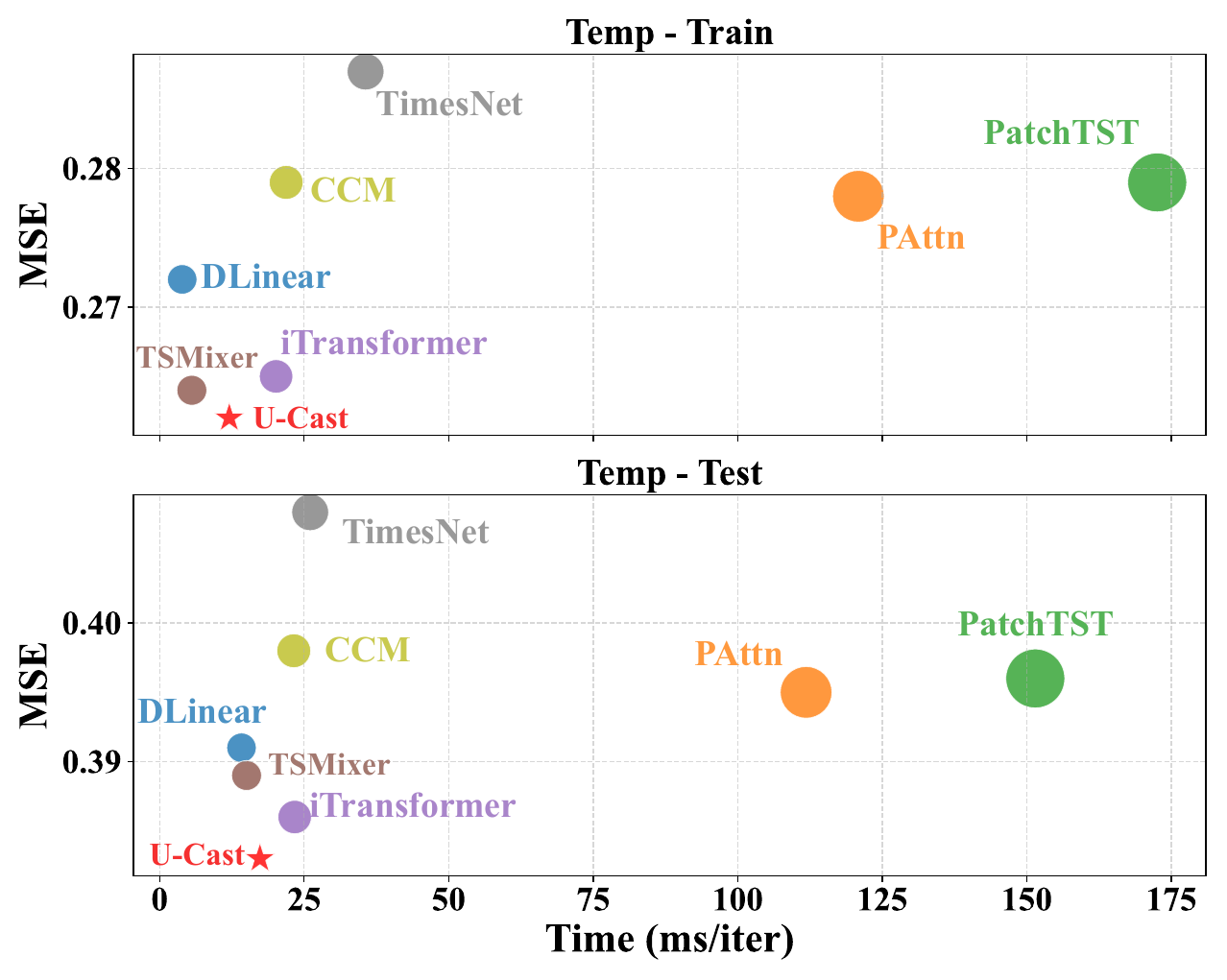}
\end{figure*}

\begin{figure*}[h!]
    \centering
    \includegraphics[width=0.94\textwidth]{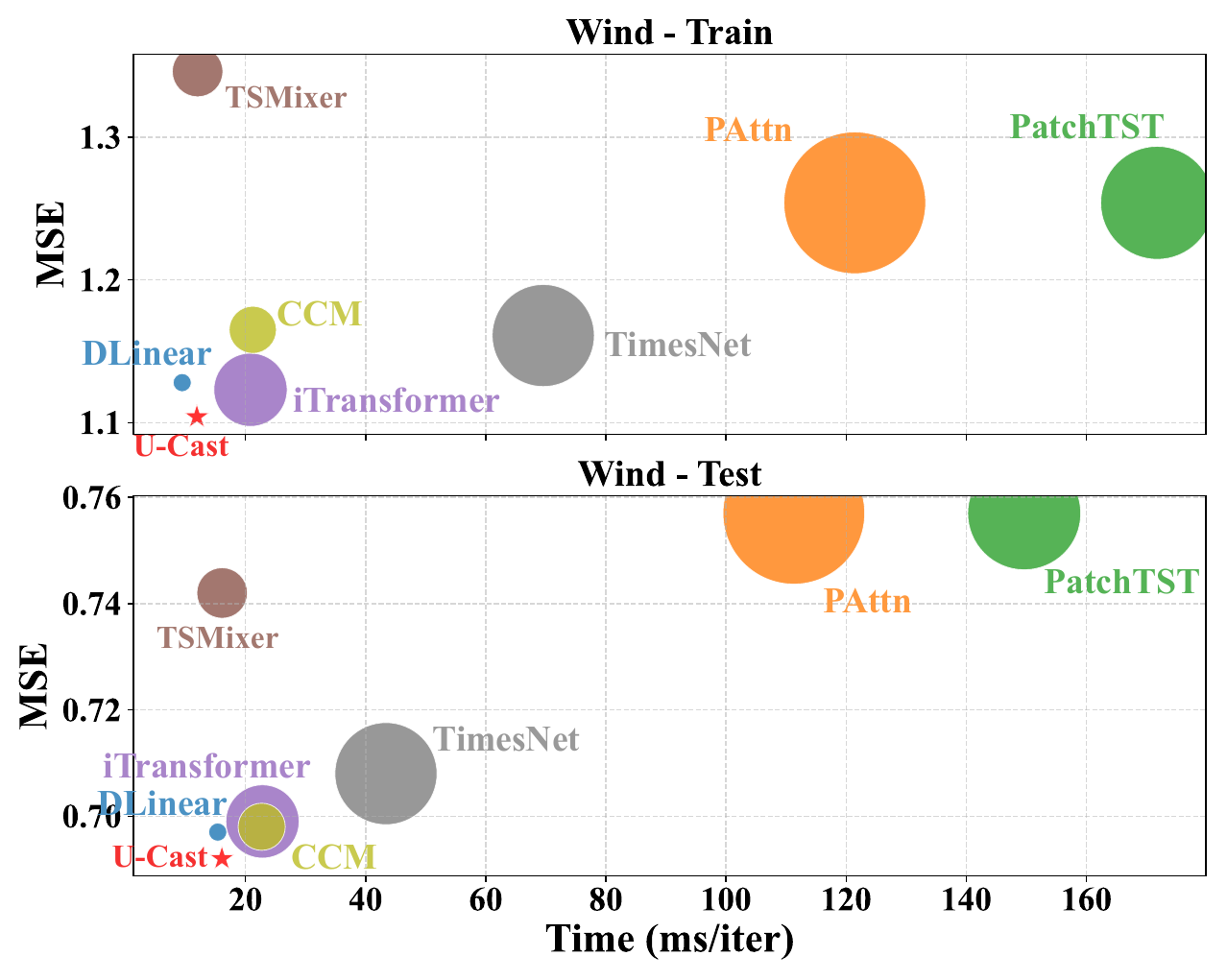}
\end{figure*}

\begin{figure*}[h!]
    \centering
    \includegraphics[width=0.94\textwidth]{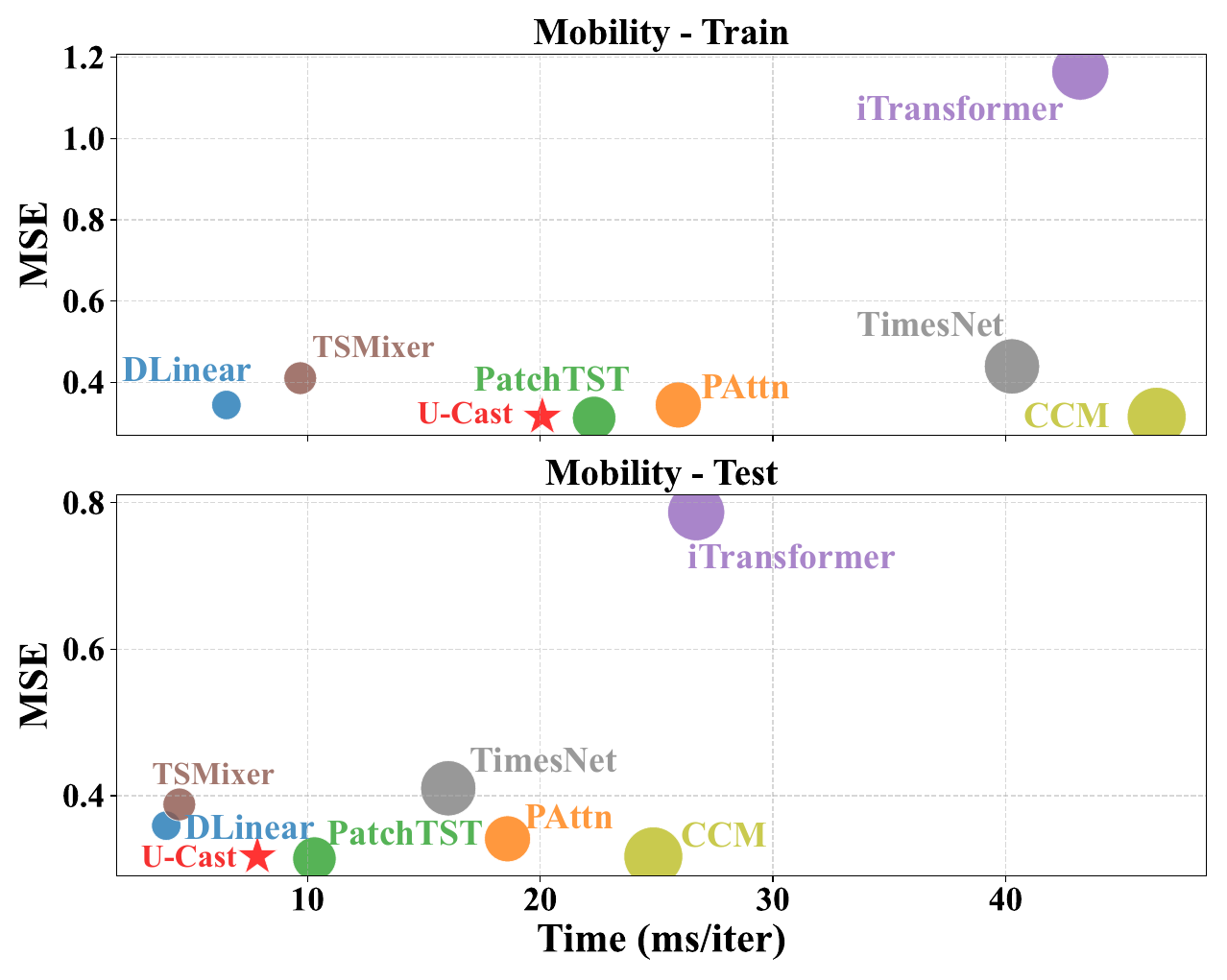}
\end{figure*}

\begin{figure*}[h!]
    \centering
    \includegraphics[width=0.94\textwidth]{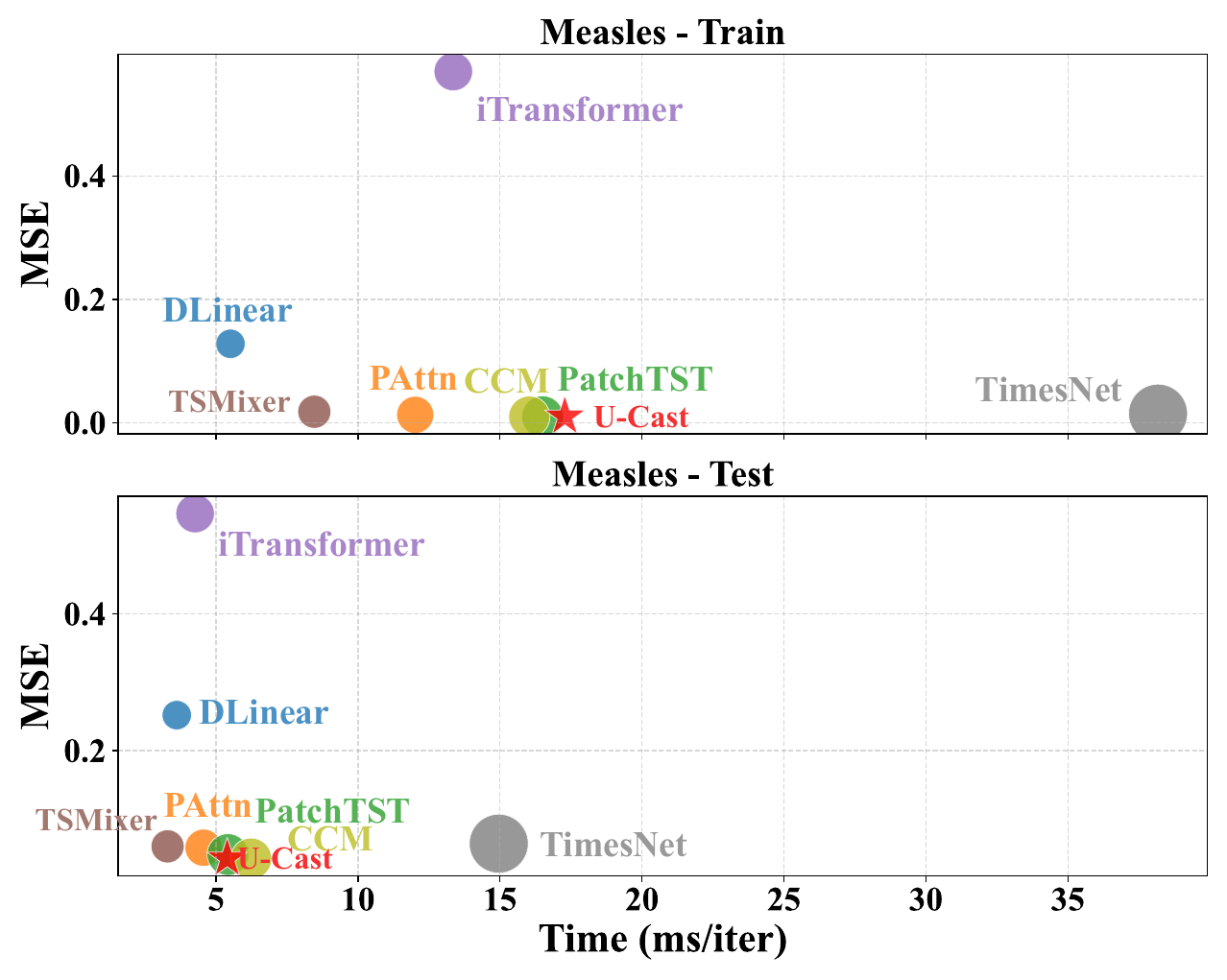}
\end{figure*}

\begin{figure*}[h!]
    \centering
    \includegraphics[width=0.94\textwidth]{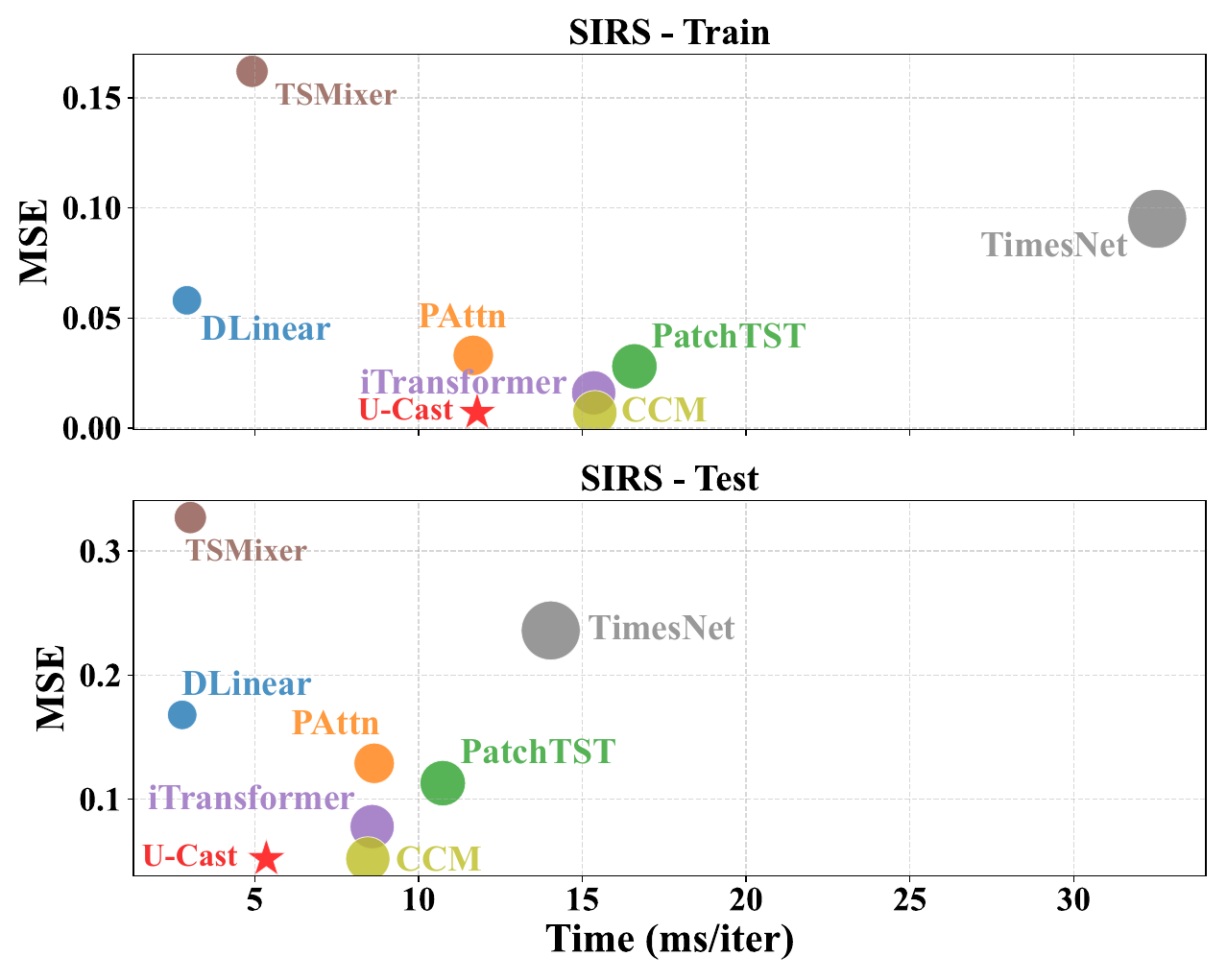}
\end{figure*}

\begin{figure*}[h!]
    \centering
    \includegraphics[width=0.94\textwidth]{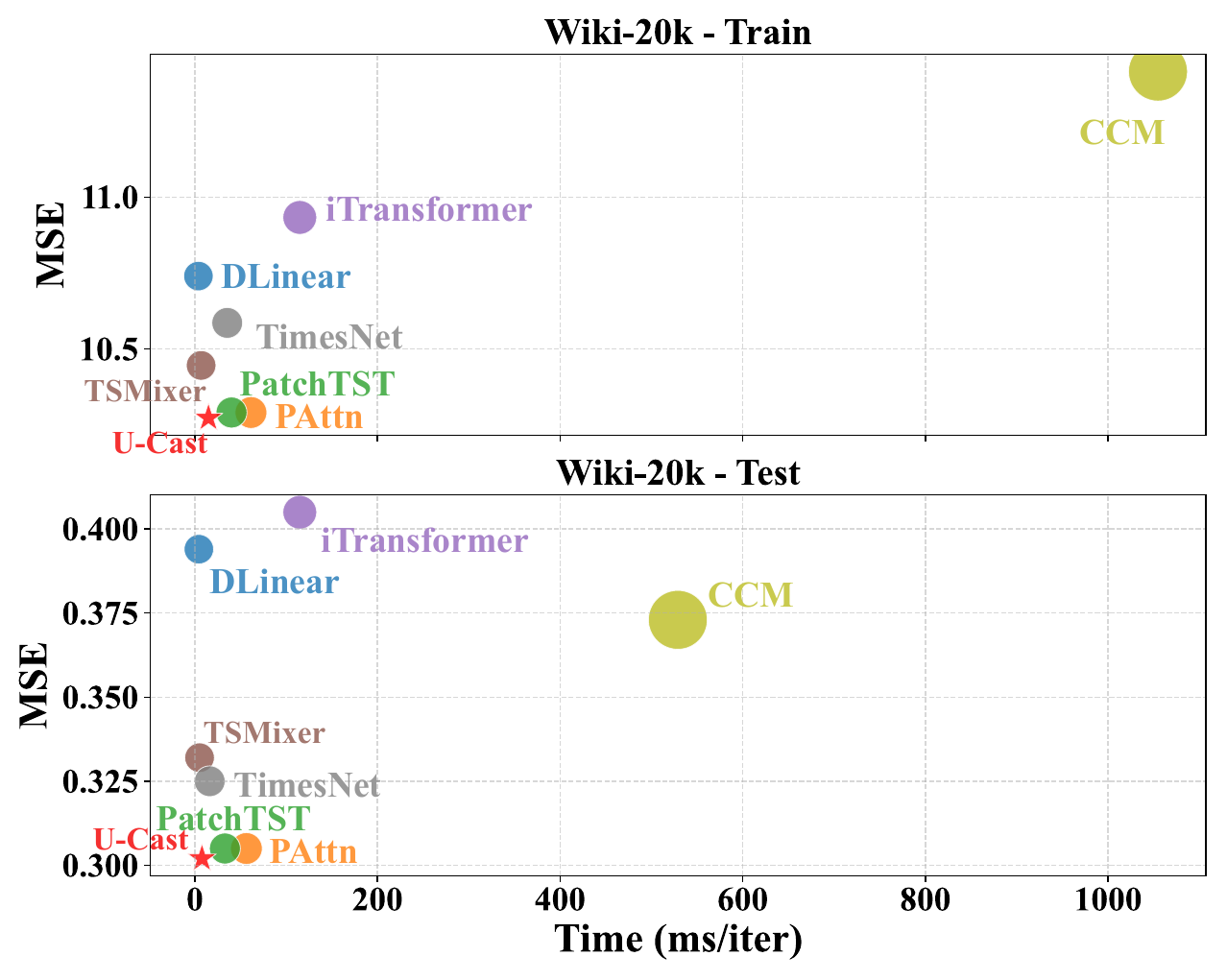}
\end{figure*}

\clearpage
\section{Showcases}
\label{app:show_cases}
\begin{figure*}[h!]
    \centering
    \includegraphics[width=0.99\textwidth]{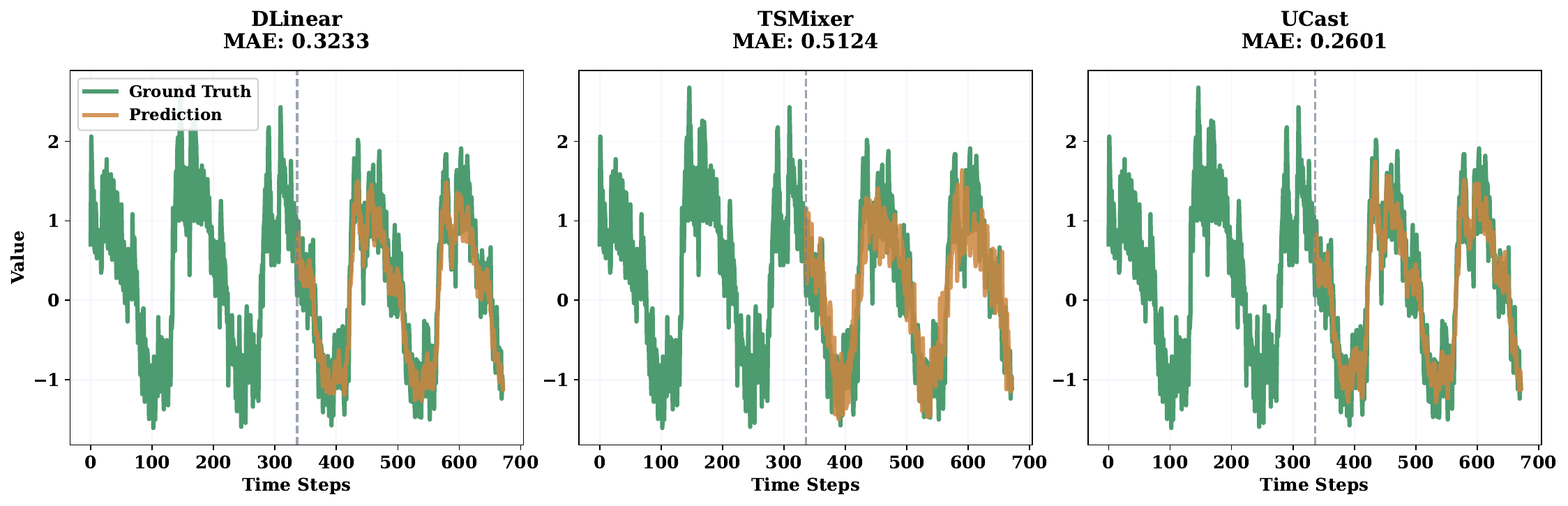}
    \caption{Prediction cases from Atec by different models under the input-S-predict-S settings.}
\end{figure*}

\begin{figure*}[h!]
    \centering
    \includegraphics[width=0.99\textwidth]{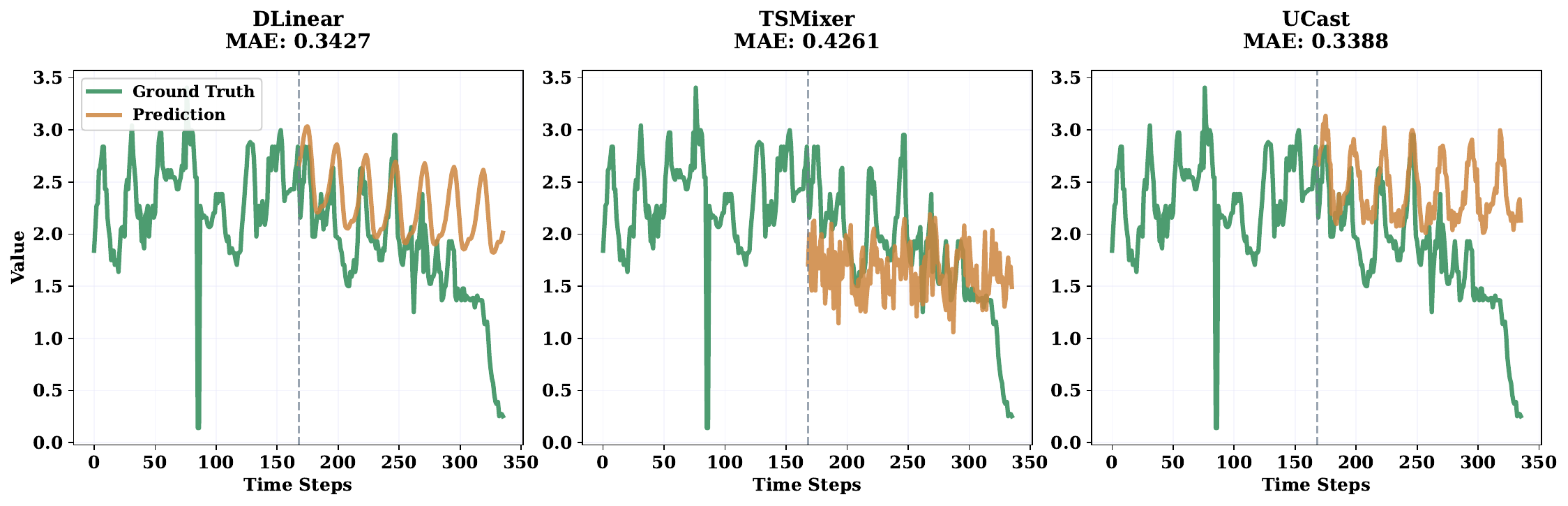}
    \caption{Prediction cases from Temp by different models under the input-S-predict-S settings.}
\end{figure*}

\begin{figure*}[h!]
    \centering
    \includegraphics[width=0.99\textwidth]{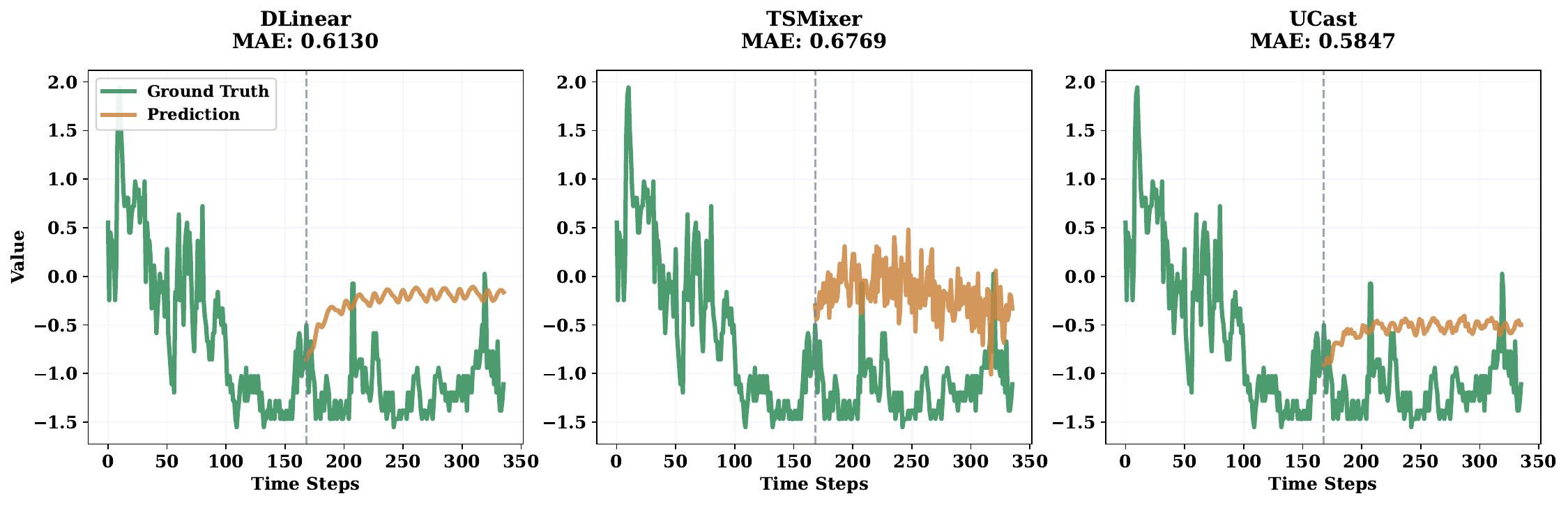}
    \caption{Prediction cases from Wind by different models under the input-S-predict-S settings.}
\end{figure*}

\begin{figure*}[h!]
    \centering
    \includegraphics[width=0.99\textwidth]{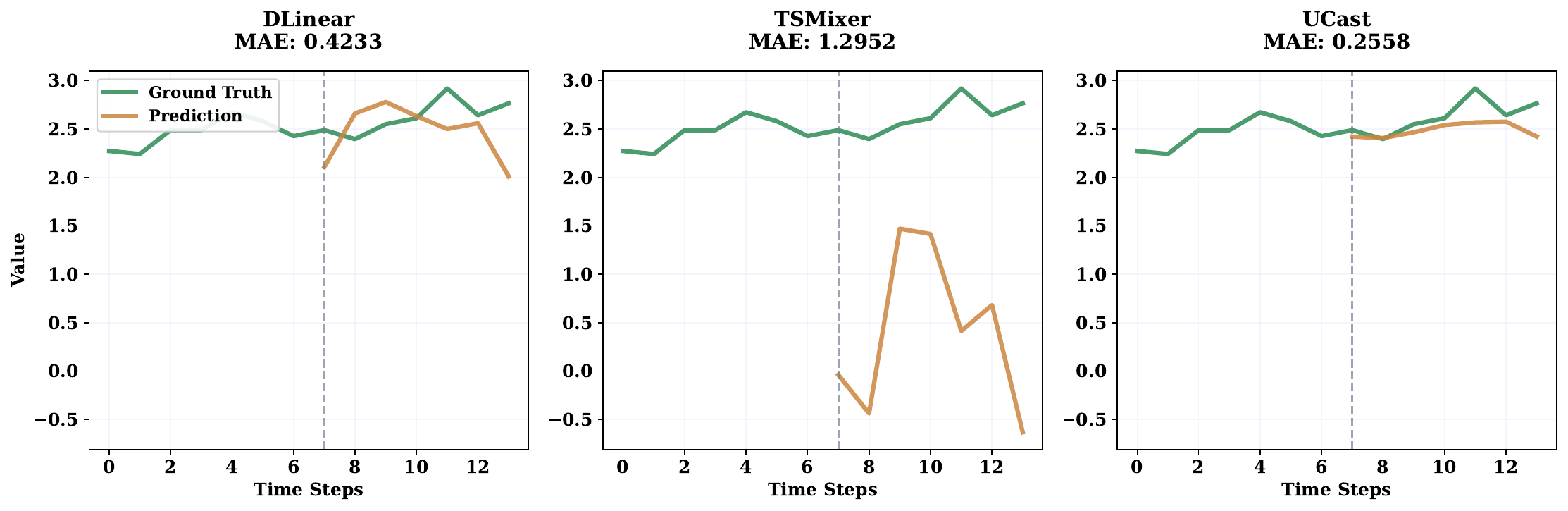}
    \caption{Prediction cases from Mobility by different models under the input-S-predict-S settings.}
\end{figure*}


\begin{figure*}[h!]
    \centering
    \includegraphics[width=0.99\textwidth]{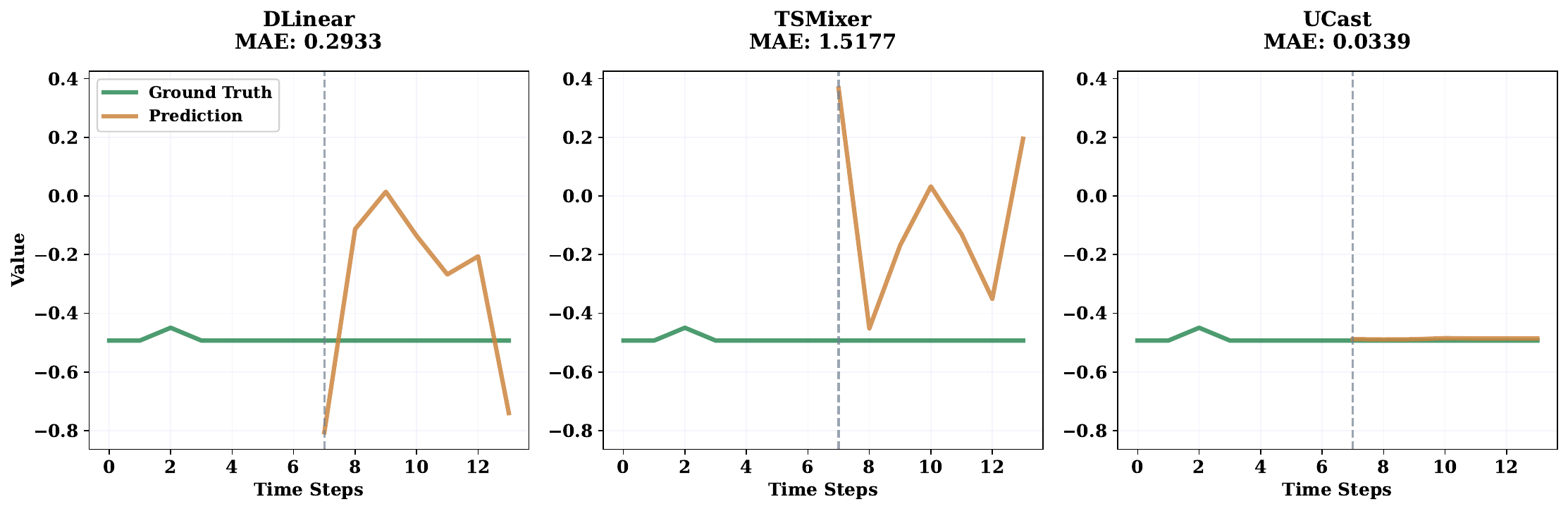}
    \caption{Prediction cases from Measles by different models under the input-S-predict-S settings.}
\end{figure*}

\begin{figure*}[h!]
    \centering
    \includegraphics[width=0.99\textwidth]{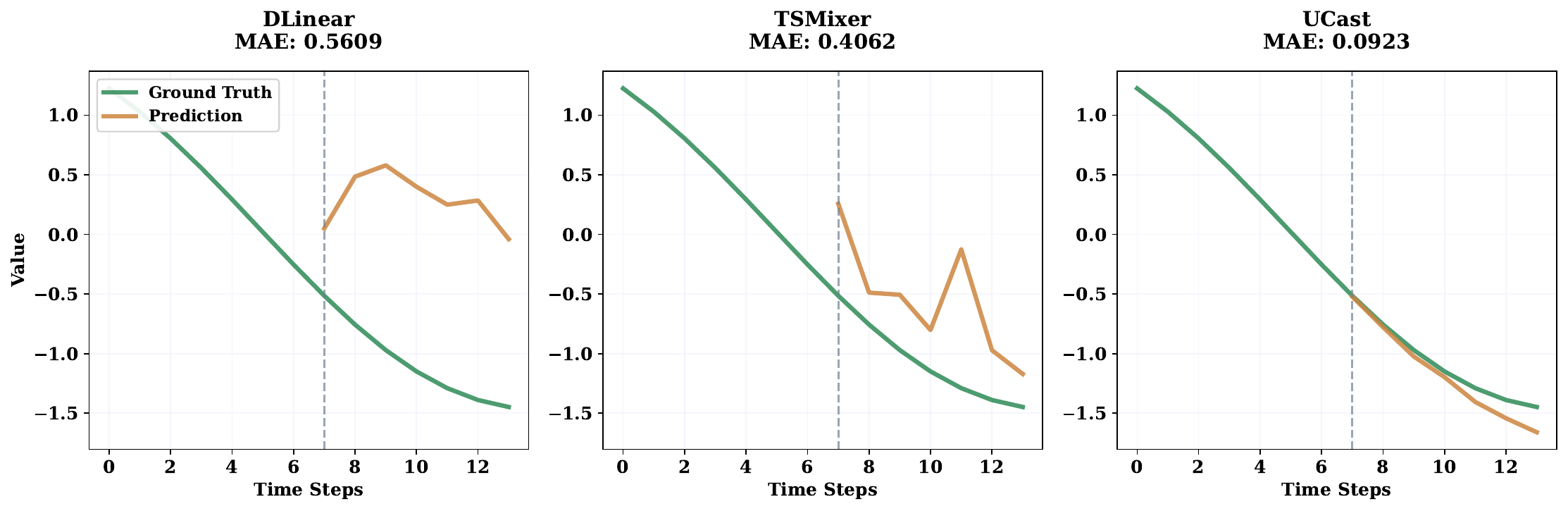}
    \caption{Prediction cases from SIRS by different models under the input-S-predict-S settings.}
\end{figure*}


\end{document}